\documentclass[11pt]{article}
\pdfoutput=1
\usepackage{pdfsync}
\usepackage[OT1]{fontenc}
\usepackage[dvipsnames]{xcolor}
\usepackage{smile}
\usepackage[colorlinks,
            linkcolor=red,
            anchorcolor=blue,
            citecolor=blue
            ]{hyperref}
\usepackage{mathrsfs}
\usepackage{fullpage}
\usepackage[protrusion=true,expansion=true]{microtype}
\usepackage{pbox}
\usepackage{setspace}
\usepackage{tabularx}
\usepackage{float}
\usepackage{wrapfig,lipsum}
\usepackage{microtype}
\usepackage{graphicx}

\usepackage{enumerate}
\usepackage{enumitem}
\usepackage{pgfplots}
\usetikzlibrary{arrows,shapes,snakes,automata,backgrounds,petri}
\usepackage{booktabs} 

\usepackage{mathtools}
\allowdisplaybreaks

\makeatletter
\newtheorem*{rep@theorem}{\rep@title}
\newcommand{\newreptheorem}[2]{%
\newenvironment{rep#1}[1]{%
 \def\rep@title{#2 \ref{##1}}%
 \begin{rep@theorem}}%
 {\end{rep@theorem}}}
\makeatother

\theoremstyle{plain}
\newtheorem{theorem}{Theorem}[section]
\newtheorem{proposition}[theorem]{Proposition}
\newtheorem{lemma}[theorem]{Lemma}
\newreptheorem{theorem}{Theorem}
\newreptheorem{lemma}{Lemma}

\theoremstyle{definition}
\newtheorem{definition}[theorem]{Definition}
\newtheorem{assumption}[theorem]{Assumption}

\newreptheorem{definition}{Definition}
\theoremstyle{remark}
\newtheorem{remark}[theorem]{Remark}

\title{\huge Learn to Match with No Regret: 
Reinforcement Learning in Markov Matching Markets}

\author{Yifei Min\thanks{Yale University. E-mail: {\tt yifei.min@yale.edu}} \qquad\qquad ~
    Tianhao Wang\thanks{Yale University. E-mail: {\tt tianhao.wang@yale.edu}} \qquad\qquad ~
	Ruitu Xu\thanks{Yale University. E-mail: {\tt ruitu.xu@yale.edu}}\\
	Zhaoran Wang\thanks{Northwestern University. E-mail: {\tt zhaoranwang@gmail.com}}\qquad\qquad
	Michael I. Jordan\thanks{University of California, Berkeley. E-mail: {\tt jordan@cs.berkeley.edu }}\qquad\qquad
	Zhuoran Yang\thanks{Yale University. E-mail: {\tt zhuoran.yang@yale.edu}}
}
\date{}

\begin{document}

\maketitle

\begin{abstract}
We study a Markov matching market involving a planner and a set of strategic agents on the two sides of the market.
At each step, the agents are presented with a dynamical context, where the contexts determine the utilities. 
The planner controls the transition of the contexts to maximize the cumulative social welfare, while the agents aim to find a myopic stable matching at each step. Such a setting captures a range of applications including ridesharing platforms. We formalize the problem by proposing a reinforcement learning framework that integrates optimistic value iteration with maximum weight matching. 
The proposed algorithm addresses the coupled challenges of sequential exploration, matching stability, and function approximation. We prove that the algorithm achieves sublinear regret. 
\end{abstract}

\section{Introduction}\label{sec: intro}

Large-scale digital markets play a crucial role in modern economies. Increasingly, the understanding and design of such markets require tools from both economics and machine learning.  Indeed, dynamically changing market environments require mechanisms to be adaptive, scalable, and incentive compatible in the face of significant nonstationarity.  The data streams that arise from digital markets provide opportunities to cope with such challenges, via learning-based mechanism design.
Recent work \citep{jagadeesan2021learning,liu2021bandit,sankararaman2021dominate,basu2021beyond} has begun to apply modern machine learning tools to problems in adaptive mechanism design. One particular area of focus in learning-aware market design has been matching, a class of problems central to microeconomics \citep{mas1995microeconomic}. Existing work has focused on static matching markets, however, and the more challenging yet critically important setting of dynamic matching markets has been neglected.  Such markets are our focus in the current paper, where we provide reinforcement learning (RL) techniques to address dynamic matching market problems.

Specifically, we propose a \emph{Markov matching market} model involving a planner and a set of two-sided agents.
Given the time horizon $H\in\ZZ_+$, a set of agents $I_h\cup J_h$ enter the market in context $C_h$ at step $h\in[H]$, with unknown utility functions $u_h$ and $v_h$ for agents in $I_h$ and $J_h$ respectively. The agents' utilities $u_h$ and $v_h$ depend on the context $C_h$.
Given the utility functions, the agents seek to achieve a myopic stable matching with transferable utilities \citep{shapley1971assignment}.
In particular, the contexts $C_h$ are subject to a Markov transition kernel which is controlled by the planner's policy. The goal of the planner is to select an optimal policy, which together with the stable matching in each step maximizes the expected accumulated social welfare.

As an illustration, consider a simplified abstraction of ride-hailing platforms  \citep{qin2020ride, ozkan2020dynamic,hu2021dynamic}, where the horizon is set as the time span of a day. 
In this case, the platform is the planner, and the two-sided agents $I_h$ and $J_h$ are the drivers and the riders.
The contexts may include information such as  GPS location, car types, and pricing.
The platform is required to ensure the stability of the matching at each step so that the agents do not prefer alternative outcomes, otherwise they may leave the platform.
In addition, the platform needs to adjust the policy to maximize the accumulated social welfare as a measure of the satisfaction of both drivers and riders.

As illustrated in this example, there are several key challenges in Markov matching markets.
First, the agents' utilities and the transition of contexts are unknown, so we need to perform efficient exploration to collect the required information.
Second, we need to ensure the stability of the matching in each step, for which a consideration of only the difference in total utilities is insufficient, as discussed in \citet{jagadeesan2021learning}.
To this end, we adopt a notion of \emph{Subset Instability} (SI) from \citet{jagadeesan2021learning} as a metric that quantifies the distance between a proposed matching and the optimal stable matching. This metric has the flavor of Shapley value, comparing the discrepancy between the total utilities over all subsets of the participating agents, while accounting for transfers.  For efficient estimation, we need to take into account the function class of the utilities as well as the Markov transition kernel, thus demanding a systematic usage of function approximation.

To tackle these challenges, we develop a novel algorithm called \emph{Sequential Optimistic Matching} (SOM), which features a combination of optimistic value iteration and max-weight matching.
Note that the planner's problem becomes a standard Markov decision process (MDP) if we regard the value of the max-weight matching over the true utilities as the reward.
Inspired by this observation, on the agents side, our algorithm applies the optimism principle to construct UCB estimates of the utilities.
Based on these UCB estimates, the algorithm computes the corresponding max-weight matching, and the values of the resulting matching serve as the surrogate rewards.
Although we do not have access to unbiased estimate of the rewards, the key property here is that the surrogate rewards upper bound the true social welfare, thus justifying optimistic planning by the planner.
Interestingly, our framework can readily incorporate any online RL algorithm based on optimism.

On the theoretical side, we show that the suboptimality of the accumulated social welfare for our algorithm consists of two parts: (1) the planner's regret in terms of the suboptimality of the policy, and (2) the agents' regret in terms of SI.
\citet{jagadeesan2021learning} proved that SI can be bounded by the sum of the optimistic bonuses, and we further show that the planner's regret can also be bounded by the bonus sum.
In this way, we reconcile the seemingly independent learning goals of the planner and the agents, and thereby provide a unified approach to controlling the suboptimality of the total social welfare.
In particular, based on the above decomposition, we further show that in the case of linear function approximation, our algorithm enjoys a sublinear regret independent of the size of the context space.

Compared with existing methods on online RL such as \texttt{LSVI-UCB} \citep{jin2020provably}, our framework incorporates a matching problem in each time step. Compared with the setup of matching bandits~\citep{jagadeesan2021learning}, our model provides an extension to a dynamic setup with transitions between contexts. More importantly, our work lies beyond a straightforward combination of online RL with matching bandits due to the unique technical challenges that we will discuss in Section~\ref{sec: proof sketch}.  

The main contributions are summarized as follows:
\begin{itemize}
    \item We propose a novel Markov matching market model that captures a range of instances of centralized matching problems.
    
    \item We develop a novel algorithm that combines optimistic value iteration with max-weight matching, such that any online RL algorithm based on optimism can be readily incorporated into the framework.
    
    \item We provide a general analysis framework and show that our proposed algorithms achieve sublinear regret under proper structural assumptions.
\end{itemize}

\section{Related Work}

There is an emerging line of research on learning stable matchings with bandit feedback \citep{das2005two,liu2020competing,liu2021bandit,sankararaman2021dominate,cen2021regret,basu2021beyond} using the mature tools from the bandit literature.
Most of them focus on matchings with non-transferable utilities \citep{gale1962college}, which fails to capture real-world markets with monetary transfers between agents, e.g., payments from passengers to drivers on ride-hailing platforms.
The study of learning for matchings with transferable utilities is comparably limited, and our work extends \citet{jagadeesan2021learning} to dynamic scenarios in this regime. 
Broadly speaking, our work is also related to the research on learning economic models via RL. 
In particular, \citet{kandasamy2020mechanism,rasouli2021data} studied VCG mechanisms, and \citet{guo2021online} study exchange economies. 
Although similar in spirit, these models differ from our Markov matching markets in their mathematical structure, and thus have different solution concepts and planning methods.

Our model of Markov matching markets is related to the topic of dynamic matching in the economics literature \citep{taylor1995long, satterthwaite2007dynamic, niederle2009decentralized,unver2010dynamic,anderson2014dynamic, lauermann2014stable,leshno2019dynamic, akbarpour2020thickness, baccara2020optimal, loertscher2018optimal, doval2019efficiency}.
Instead of studying the learning problem in matching markets, the goal therein is mainly focused on the problem of optimal mechanism design \citep{akbarpour2014dynamic} with known utilities or explicit modelling of agents' arrivals via queuing models \citep{zenios1999modeling, gurvich2015dynamic}.
In this work we focus on the notion of (static) stability of myopic matchings \citep{shapley1971assignment}.
There is also a line of literature on the notion of dynamic stability for matching markets \citep{damiano2005stability, doval2014theory, kadam2018multiperiod, doval2019dynamically,kotowski2019perfectly, kurino2020credibility,liu2020stability},
and it is an interesting open problem to study learning for dynamically stable matchings.

Our methodology builds upon recent progress in online RL, where the ``optimism in face of uncertainty'' principle has engendered efficient algorithms that are either model-based \citep{jaksch2010near, osband2016generalization, azar2017minimax, dann2017unifying} or model-free \citep{strehl2006pac,jin2018q,fei2020risk,fei2020exponential}, and can be combined with function approximation techniques \citep{yang2019sample, jin2020provably, zanette2020learning, ayoub2020model, wang2020reinforcement,fei2021risk,yang2020function,  zhou2021nearly, min2021learning, min2021variance, du2021bilinear, jin2021bellman}.
We note that these approaches can be incorporated into our framework with proper modifications on structural assumptions and correspondingly the algorithm.
We do not pursue these extensions here, however, as our regret analysis is already sufficiently challenging given the lack of an unbiased estimate of the reward, and the additional constraints imposed by the requirements of matching stability.

\section{Preliminaries}

\subsection{Markov Matching Markets}\label{sec:pre_MMM}
We first review basic concepts for matching with transferable utilities \citep{shapley1971assignment}.
Consider a two-sided matching market where $\cI$ denotes the set of all side-1 agents (e.g., buyers) and $\cJ$ denotes the set of all side-2 agents (e.g., sellers). 
Given any set of participating agents $I\times J \in 2^{\cI}\times 2^{\cJ}$, a matching $X$ is a set of pairs $(i,j)$ indicating $i\in I$ is matched with $j\in J$, and each agent can be matched at most once. 
For any pair of agents $(i,j)\in I\times J$, we denote by $u(i,j)$ the utility of agent $i$ and $v(i,j)$ the utility of agent $j$ when they are matched. 
In addition to the matching $X$, we also allow transfers between agents, summarized by the transfer function $\tau: I \cup J \to \RR$. 
For each agent $i \in I \cup J$, $\tau(i)$ is the transfer that it receives. 
We assume that the transfers are within agents, which implies that $\sum_{i \in I\cup J} \tau(i) = 0$.

The overall market outcome is denoted by a tuple $(X,\tau)$, where $X$ represents the matching and $\tau$ represents transfers. 
For any $(i,j)\in X$, the total utilities are $u(i,j)+\tau(i)$ and $v(i,j) + \tau(j)$ for $i$ and $j$ respectively.
Moreover, if no agents prefer any alternate outcome, then we say $(X,\tau)$ is stable (see Definition \ref{def: stable matching} for details).
The stable matching can be found by solving the corresponding max-weight matching, as will be explained in Section \ref{sec:reward_estimation}. 

Based on these (classical) definitions, we formulate the notion of a \emph{Markov matching market} involving a planner and a set of two-sided agents.
Throughout the paper, we focus on matchings with transferable utilities between two-sided agents, and we may omit such descriptions for convenience.

\begin{definition}[Markov matching markets]\label{def:MMM}
    A Markov matching market is denoted by a tuple $M = (\cC, \Upsilon, \{I_h\}_{h=1}^H, \{J_h\}_{h=1}^H, \{\PP_h\}_{h=1}^H, \{u_h\}_{h=1}^H, \{v_h\}_{h=1}^H)$.
    Here $\cC$ is the set of contexts and $\Upsilon$ denotes the set of planner's actions.
    At each step $h\in[H]$, $I_h \cup J_h \subset \cI\times\cJ$ is the set of participating agents, and $u_h: \cC \times \Upsilon \times \cI \times \cJ \to \RR$ and $v_h:\cC \times \Upsilon \times \cI \times \cJ \to \RR$ are utility functions for two sides of agents respectively.
    For each $h\in[H]$, $\PP_h(C'\mid C, e)$ is the transition probability for context $C$ to transit to $C'$ given action $e$. 
\end{definition}

In such Markov matching markets, the learning goal is two-fold: (1) to learn the stable matching in each step and (2) to maximize the accumulated social welfare. 
We now translate the problem into the language of RL.

\subsection{A Reinforcement Learning Approach to Markov Matching Markets}
Given Definition \ref{def:MMM}, we consider an episodic setting with $K$ episodes where each episode consists of $H$ steps of sequential matchings. 
Each episode proceeds in the following way: at each step $h\in[H]$ under context $C_h$, a set of agents $I_h\cup J_h$ enter the market.
The planner takes action $e_h$, implement the matching $(X_h,\tau_h)$, and observes the noisy feedback of utilities $u_h(C_h,e_h,i,j)$ and $v_h(C_h,e_h,i,j)$ for all $(i,j)\in X_h$.
Then the context transitions according to $\PP_h(\cdot\mid C_h,e_h)$, and the market proceeds to the next step.

Note that here the implemented matching in each step is myopic and we seek for stability for each of these matchings.
The maximization of the accumulated social welfare is achieved through the planner's action $\{e_h\}_{h=1}^H$ that controls the transitions of contexts, which together with planner's actions determine the optimal values of the matchings.
To apply RL to maximize the accumulated social welfare, let us specify the ingredients of the corresponding RL problem.

\paragraph{States and Actions.} 
The state space is $\cS = \cC\times 2^\cI\times 2^\cJ$.
The action space is $\cA = \Upsilon\times\cX\times\cT$ where $\cX$ denotes the set of all matchings and $\cT$ is the set of all possible transfers among the agents.
For each step $h\in[H]$, the state $s_h = (C_h, I_h, J_h)$ contains the context and participating agents, and the action $a_h = (e_h, X_h, \tau_h)$ contains the planner's action $e_h$ and the matching outcome $(X_h,\tau_h)$ for the agents.

\paragraph{Rewards.}
At each step $h\in[H]$, given the state $s_h = (C_h, I_h, J_h)$ and the action $a_h=(e_h,X_h,\tau_h)$, the immediate reward is the social welfare (i.e. sum of utilities): 
\begin{align}\label{eq: immediate reward}
    r_h(s_h, a_h) \coloneqq \sum_{(i,j)\in X_h} [u_h(C_h,e_h,i,j) + v_h(C_h,e_h,i,j)].
\end{align} 
Note that the transfer $\tau_h \in \cT$ does not appear in the reward since the total transfer sums to zero. 

\paragraph{Transition of States.}
The state consists of the context in $\cC$ and the sets of agents in $2^{\cI} \times 2^{\cJ}$. 
The transition of context at step $h$ follows the heterogeneous transition function $\PP_h(C_{h+1} \mid C_h, e_h)$, which only depends on the planner's action $e_h$ and is independent of the matching $(X_h,\tau_h)$. 

We assume that the sequence of two-sided paired sets $\{I_h, J_h\}_{h=1}^H$ is generated independently from other components in this matching market. We also assume the same sequence through all $K$ episodes for the sake of clarity. Note that $I_h$ and $J_h$ can also be handled as part of the context $C_h$ and covered by our current argument with more involved transition dynamics, which is often task specific. Our minor simplification serves to build a generic framework and 
avoid detailed modeling of the agent sets.

\paragraph{Policies and Value Functions.}  
A policy $\pi$ is defined as $\pi = \{\pi_h\}_{h=1}^H$, where for each $s\in \cS$, $\pi_h(\cdot | s)$ is a distribution on $\cA$. 
The policy consists of two parts: the planner's part (i.e. choosing $e\in\Upsilon$ to influence the market context $C$), and the agents' part (i.e. determining the matching-transfer $(X,\tau)$). We use $\Pi$ to denote the set of such policies.

For any policy $\pi$, we define each value function $V_h^\pi (\cdot)$ as 
\begin{align}\label{eq: expected total utility}
    V_h^\pi(s) \coloneqq \EE_{\pi}\Big[ & \sum\nolimits_{l=h}^H r_l(s_l,a_l) \ \Big| \  s_h=s; \ a_l \sim \pi_l(\cdot|s_l),  s_{l+1}\sim\PP_l(\cdot|s_l,a_l), \forall \ h \leq l \leq H \Big].
\end{align} 
Maximizing the accumulated social welfare is equivalent to maximizing $V_1^\pi$ over $\pi\in\Pi$, so the overall performance over $K$ episodes is evaluated through the regret
\begin{align}\label{eq: regret total}
    R(K) \coloneqq  \sum\nolimits_{k=1}^{K} \Big[ \max_{\substack{\pi\in\Pi}} V_1^\pi (s_1) - V_1^{\pi_k} (s_1) \Big],
\end{align} 
where $\pi_k$ denotes the policy in episode $k$. See Appendix \ref{sec: tab_notation} for detailed definitions of our notation.

\section{Method: An Optimisitic Meta Algorithm}

We propose an optimistic RL algorithm for the Markov matching market in this section.
Our proposed algorithm serves as a meta stereotype that can readily incorporate various existing RL methods.

\subsection{Optimistic Estimation of Rewards}\label{sec:reward_estimation}

Note that we do not directly observe the rewards defined in~\eqref{eq: immediate reward} and have no unbiased estimates of them, so we cannot explicitly construct their optimistic estimates.
However, thanks to the nature of the stable matching being a max-weight matching, we show that we can still obtain useful optimistic rewards estimates.

To see this, recall the definition of reward in \eqref{eq: immediate reward}.
We know that there exists some stable matching $(X_h,\tau_h)$ that maximizes $r_h$ and can be obtained by solving a linear program and its dual program~\citep{shapley1971assignment}.
Denote by $\cL\cP(I, J, u,v)$ the following linear program:
\begin{align}\label{eq: linear program definition}
    \begin{aligned}
    \max_{w\in\RR^{|I|\times|J |}} & \sum\nolimits_{(i,j)\in I\times J} w_{i,j}\left[u(i,j) + v(i,j) \right]
    \\ \text{s.t.  } \quad  & \sum\nolimits_{j\in J} w_{i,j} \leq 1, \forall \ i\in I , 
    \\ & \sum\nolimits_{i\in I} w_{i,j}\leq 1, \forall \ j\in J , 
    \\& w_{i,j}\geq 0, \forall \ (i,j)\in I\times J ,
    \end{aligned}
\end{align}
and $\cD\cP(I,J, u,v)$ its dual program:
\begin{align}\label{eq: dual program definition}
    \min_{p: I\cup J \to \RR^{+}} & \sum\nolimits_{a \in I \cup J} p(a)
    \\ \text{s.t.  } \quad  & p(i) + p(j) \geq u(i,j) + v(i,j), \forall(i,j) \in I\times J . \notag
\end{align}
\citet{shapley1971assignment} proved that the stable $(X,\tau)$ correspond to the solution to the linear program \eqref{eq: linear program definition} (for $X$) and its dual program \eqref{eq: dual program definition} (for $\tau$).
It is clear from \eqref{eq: linear program definition} that the optimal value of the objective function is equal to the total social welfare of the stable matching.
Now, suppose we have some optimistic estimates of the utilities, i.e., $\hat u$ and $\hat v$ such that $\hat u(\cdot,\cdot)\geq u(\cdot,\cdot)$ and $\hat v(\cdot,\cdot)\geq v(\cdot,\cdot)$. It is easy to see that when substituting $(\hat u,\hat v)$ into the linear program \eqref{eq: linear program definition}, the resulting optimal value will be an upper bound of the original optimal value (see the proof of Lemma \ref{lem: planner_optimism}).

Based on this observation, let us return to the reward in~\eqref{eq: immediate reward}.
The previous argument implies that as long as we have optimistic estimates of the utilities $u_h$ and $v_h$, we can get optimistic estimates of the reward by solving the max-weight matching based on the optimistic utilities.
Moreover, it is further an upper bound of the following pseudo-reward:
\begin{align}\label{eq: pseudo-reward}
    & \bar{r}_h(C_h, I_h, J_h, e_h) \coloneqq \max_{(X_h, \tau_h) \in \cM_h}  r_h(C_h, I_h, J_h, e_h, X_h, \tau_h),
\end{align}
where $\cM_h \coloneqq \cM(I_h, J_h, u_h, v_h, C_h ,e_h)$ denotes the set of all myopic stable matching on $(I_h, J_h)$ with utility functions $u_h(C_h, e_h, \cdot,\cdot)$ and $u_h(C_h, e_h, \cdot,\cdot)$.
Finally, the optimistic estimates of the utilities can be constructed from noisy observations of agents' utilities via any standard approach in the online learning literature.

The definition of the pseudo-reward in \eqref{eq: pseudo-reward} provides a way to decompose the total regret into the planner's regret and the agents' regret, as will be clear in the next subsection.

\subsection{Decomposition of The Planner and The Agents}

Recall that we require the matching in each step to be stable, which is an additional constraint apart from maximizing the social welfare.
We need to separate these two entangled goals from each other.
Indeed, we will show that the total regret consists of two parts: 1) the suboptimality of the planner's policy \emph{over the entire episode}, and 2) the distance between the proposed matching and the optimal myopic stable matching \emph{at each step}.
We identify the former as the planner's problem and the latter the agents' problem.

\paragraph{The Planner's Problem.} The planner's problem focuses on the transition of the contexts, so we need to partial out the effects from the actual matching.
This has been done in the definition of the pseudo-reward in \eqref{eq: pseudo-reward}, and the corresponding pseudo-value function  $\bar{V}_h^\pi$ for $h \in [H]$ is defined as 
\begin{align}\label{eq: pseudo-value function}
    \bar{V}_h^{\pi}(s) \coloneqq \EE_{\pi} \Big[ & \sum\nolimits_{l=h}^H \bar{r}_l(s_l,e_l) \ \Big| \ s_h=s, \ e_l\sim\pi_l(\cdot|s_l)  s_{l+1}\sim\PP_l(\cdot|s_l,e_l), \forall \ h \leq l \leq H \Big],
\end{align} 
where we slightly abuse the notation $e_l \sim \pi_l(\cdot | s_l)$. Also note that we can write $s_{h+1} \sim \PP(\cdot| s_h, e_h)$ instead of the more general $s_{h+1} \sim \PP(\cdot| s_h, a_h)$ since we condition on $(I_h, J_h)$ and the transition of $C_h$ only depends on the planner's action $e_h$ as $C_{h+1} \sim \PP_h(\cdot | C_h, e_h)$.

Clearly, $\bar V_h^\pi$ is an upper bound of $V_h^\pi$ and does not depend on the actual matching $\{X_h, \tau_h\}_{h \in [H]}$ since it has  been maximized out. 
Now, we specify the planner's problem as trying to maximize the pseudo-value $\bar{V}_1^\pi$, and define the planner's regret given the initial state $s_1$ as
\begin{align}\label{eq: regret planner}
    R^P (K) & \coloneqq \sum\nolimits_{k=1}^K \Big[\max_{\pi} \bar{V}_1^{\pi} (s_1) - \bar{V}_1^{\pi_k} (s_1)  \Big] = \sum\nolimits_{k=1}^K \Big[ \bar{V}_1^{\star} (s_1) - \bar{V}_1^{\pi_k} (s_1)  \Big].
\end{align}
From a control-theoretic perspective, the planner's problem can be viewed as learning an MDP with the same state space $\cS$, the action space reduced to $\Upsilon$, and reward being the value of the myopic max-weight matching at each step. The reward cannot be observed, nor do we have an unbiased estimator. 
From an economic perspective, the planner's problem captures only the market context and not the specific market outcome (i.e. matching).
Now, note that
\begin{align*}
    R(K) &= \sum_{k=1}^K \Big[ \max_{\pi\in\Pi} V_1^\pi(s) - \bar V_1^{\pi_k}(s_1)\Big] \hfill \text{(Planner's regret)}\\
    &\qquad + \sum\nolimits_{k=1}^K \Big[\bar V_1^{\pi_k}(s_1) - V_1^{\pi_k}(s_1)\Big] \hfill \text{(Utility diff.)},
\end{align*}
where the planner's regret has been captured in \eqref{eq: regret planner}, and it remains to control the utility difference on the agents' side.

\paragraph{Agents' Problem.} 
The agents' problem amounts to controlling the suboptimality of each implemented matching, which boils down to SI proposed by \citet{jagadeesan2021learning}.

\begin{definition}[Subset Instability, \citealt{jagadeesan2021learning}]\label{def: subset instatbility}
    Given any agent sets $I, \ J$ and utility functions $u, \ v: I \times J \to \RR$, the Subset Instability $\SI(X, \tau; I,J,u,v)$ of the matching and transfer $(X,\tau)$ is defined as 
    \begin{align*}
        & \max_{I'\times J' \subseteq I\times J} \Big[  \Big( \max_{X'} \sum_{i \in I'}  u(i, X'(i)) + \sum_{j \in J'}  v(X'(j), j) \Big)  - \sum_{i\in I'} \left( u(i,X(j)) + \tau(i) \right) - \sum_{j\in J'} \left( v(X(j),j) + \tau(j) \right) \Big],
    \end{align*}
    where $X(\cdot)$ and $X'(\cdot)$ denotes the matched agent in matching $X$ and $X'$ respectively.
\end{definition}
Subset Instability has several key properties for learning. 
Importantly, it can be shown that given $(I,J,u,v)$, the utility difference between the optimal matching-transfer pair and $(X, \tau)$ is upper bounded by $\SI(X,\tau; I,J,u,v)$.

With a slight abuse of notation, for $s_h = (C_h, I_h, J_h)$ and $a_h = (e_h, X_h, \tau_h)$, we denote by $\SI(s_h, a_h, u_h, v_h)$ the SI of $(X_h, \tau_h)$ given $I_h, J_h$ and $u_h(C_h,e_h, \cdot,\cdot)$ and $u_h(C_h,e_h, \cdot,\cdot)$. 
We define the regret of the agents as 
\begin{align}\label{eq: regret agents}
    R^M (K) \coloneqq \sum\nolimits_{k=1}^K \EE_{\pi_k}\Big[\sum\nolimits_{h=1}^H \SI(s_h,a_h, u_h, v_h)\Big] . 
\end{align}
Moreover, SI itself can be bounded by the sum of optimistic bonuses.
Therefore, quite surprisingly, we can control the planner's regret and the agents' regret at the same time by bounding the bonus sums.
In this way, the total regret can be controlled due to the following proposition.
\begin{proposition}[Proof in Appendix \ref{sec: proof of prop sum of two regrets}]\label{prop: sum of two regrets}
    For $R(K)$, $R^P(K)$ and $R^M(K)$ defined by \eqref{eq: regret total}, \eqref{eq: regret planner}, \eqref{eq: regret agents}, it holds that $R(K) \leq R^P(K) + R^M(K)$.
\end{proposition}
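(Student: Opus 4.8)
The plan is to follow the two-term split of $R(K)$ already displayed just before the proposition, and to bound the first summand by $R^P(K)$ and the second by $R^M(K)$. The bridge between the two is the pointwise domination $\bar V_h^\pi(s) \ge V_h^\pi(s)$ for every $h \in [H]$, $\pi \in \Pi$, $s \in \cS$, which I would establish first. The key observation is that the pseudo-reward $\bar r_h$ in \eqref{eq: pseudo-reward} equals the value of the max-weight matching on $(I_h,J_h)$ with utilities $u_h(C_h,e_h,\cdot,\cdot),v_h(C_h,e_h,\cdot,\cdot)$, which by \citet{shapley1971assignment} is the optimal value of the linear program \eqref{eq: linear program definition}; since the feasible region of \eqref{eq: linear program definition} contains the indicator vector of every matching on $(I_h,J_h)$, this value is at least $r_h(s_h,a_h) = \sum_{(i,j)\in X_h}[u_h(C_h,e_h,i,j)+v_h(C_h,e_h,i,j)]$ for any action $a_h = (e_h,X_h,\tau_h)$, stable or not. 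Hence $\bar r_h(s_h,e_h) \ge r_h(s_h,a_h)$ along every trajectory.

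Because we condition on $\{I_h,J_h\}_{h=1}^H$ and the context transition $\PP_h(\cdot\mid C_h,e_h)$ depends only on the planner's component $e_h$, the policy $\pi$ induces the same law over $(s_1,e_1,\dots,s_H,e_H)$ in the definitions \eqref{eq: expected total utility} and \eqref{eq: pseudo-value function}; conditionally on $(s_l,e_l)$ the quantity $\bar r_l(s_l,e_l)$ is deterministic and dominates $r_l(s_l,a_l)$ for any $a_l$ with planner component $e_l$. Taking expectations of $\sum_{l=h}^H \bar r_l(s_l,e_l) \ge \sum_{l=h}^H r_l(s_l,a_l)$ therefore gives $\bar V_h^\pi(s) \ge V_h^\pi(s)$. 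In particular $\max_{\pi\in\Pi} V_1^\pi(s_1) \le \max_{\pi\in\Pi}\bar V_1^\pi(s_1) = \bar V_1^\star(s_1)$, so the first summand satisfies $\sum_{k=1}^K[\max_{\pi\in\Pi}V_1^\pi(s_1) - \bar V_1^{\pi_k}(s_1)] \le \sum_{k=1}^K[\bar V_1^\star(s_1) - \bar V_1^{\pi_k}(s_1)] = R^P(K)$ by \eqref{eq: regret planner}.

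For the second summand, using the shared trajectory law once more I would write $\bar V_1^{\pi_k}(s_1) - V_1^{\pi_k}(s_1) = \EE_{\pi_k}\big[\sum_{h=1}^H (\bar r_h(s_h,e_h) - r_h(s_h,a_h))\big]$ and then bound $\bar r_h(s_h,e_h) - r_h(s_h,a_h) \le \SI(s_h,a_h,u_h,v_h)$ pointwise. This is exactly the "utility difference between the optimal matching-transfer pair and $(X,\tau)$" property of Subset Instability recalled after Definition \ref{def: subset instatbility}: taking $I' = I_h$, $J' = J_h$ in Definition \ref{def: subset instatbility} makes the transfer terms $\sum_{i\in I_h}\tau(i) + \sum_{j\in J_h}\tau(j)$ cancel (they sum to zero), so $\SI(s_h,a_h,u_h,v_h)$ is at least the gap between the best attainable total utility, namely $\bar r_h(s_h,e_h)$, and the total utility of the implemented $(X_h,\tau_h)$, namely $r_h(s_h,a_h)$. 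Summing over $h$ and then over $k$ yields $\sum_{k=1}^K[\bar V_1^{\pi_k}(s_1) - V_1^{\pi_k}(s_1)] \le R^M(K)$ by \eqref{eq: regret agents}, and adding the two bounds gives $R(K)\le R^P(K)+R^M(K)$.

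The only delicate points are bookkeeping rather than mathematical depth: making precise that $\bar V^\pi$ and $V^\pi$ are expectations against the same distribution over $(s_h,e_h)_h$ because the transition ignores the matching component, and invoking the Shapley–Shubik identification of the stable matching with the max-weight matching in the right direction so that $\bar r_h$ genuinely upper-bounds $r_h(s_h,a_h)$ even when the implemented matching is suboptimal or unstable. Everything else is a direct application of the linear-programming characterization and the SI property already stated in the excerpt.
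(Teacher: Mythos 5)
Your proposal is correct and follows essentially the same route as the paper's proof: the same two-term split, the same use of the max-weight/LP characterization to get $\bar r_h \ge r_h$ (hence $\bar V_1^{\pi} \ge V_1^{\pi}$ and $\max_\pi V_1^\pi \le \bar V_1^\star$), and the same invocation of the fact that Subset Instability dominates the utility difference (with $I'=I_h$, $J'=J_h$ so the transfers cancel). The only cosmetic difference is that the paper establishes the equality $V_1^\star = \bar V_1^\star$ via a counterpart-policy construction, whereas you prove only the inequality $\max_\pi V_1^\pi \le \bar V_1^\star$, which suffices.
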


\subsection{A Meta Algorithm}
Now, we are ready to present our meta algorithm as displayed in Algorithm \ref{alg: matching main}.
As is clear from the previous derivations, it suffices to construct optimistic estimates of the utilities, which then induces 1) matchings of agents and 2) optimistic estimates of the value functions.
The latter then enables the optimistic planning for the planner.

\begin{algorithm}[t]
	\caption{Sequential Optimistic Matching (\texttt{SOM})}
	\label{alg: matching main}
	\begin{algorithmic}[1]
    \STATE {\bfseries Require:} $\lambda,\beta_u,\beta_V$
	\STATE {\bfseries Initialize:} $u_h^1\equiv 1, v_h^1 \equiv 1$ and $\cD_h^0 = \emptyset,\forall h \in [H]$ 
	\FOR{episode $k=1,2,\dots,K$}
	\STATE Receive the initial state $s_1^k=(C_1^k, I_1, J_1)$ 
    \STATE Set $\bar{Q}_{H+1}^k \equiv 0$
	\FOR{stage $h = H, H-1, \ldots , 1$}
	\STATE Estimate utilities $(u_h^k, v_h^k) \leftarrow \texttt{UE}(\cD_h^{k-1}, \beta_u,\lambda)$ \alglinelabel{alg:main_UE}
	\STATE Estimate pseudo-reward $\bar{r}_h^k \leftarrow \texttt{RE}(u_h^k, v_h^k, I_h, J_h)$ \alglinelabel{alg:main_RE}
	\STATE Estimate Q-function $\bar{Q}_h^k\leftarrow \texttt{QE}(\cD_h^{k-1}, \bar r_{h}^{k}, \ \bar{Q}_{h+1}^k,\beta_V,\lambda)$
	\ENDFOR 
	\FOR{stage $h = 1 \dots, H$}
	\STATE Planner takes action $ e_h^k \leftarrow \argmax_{e \in \Upsilon} \bar{Q}_h^k (s_h^k , e)$
	\STATE $( X_h^k, \tau_h^k) \leftarrow \texttt{OM} (u_h^k, v_h^k , I_h, J_h , C_h^k , e_h^k)$ \alglinelabel{alg:main_matching} 
	\STATE Implement matching $(X_h^k, \tau_h^k)$ 
	\STATE Observe utilities $u_h^k(i,j), v_h^k(i,j)$ for $(i,j) \in X_h^k$
	\STATE Receive next state $s_{h+1}^k = (C_{h+1}^k, I_{h+1}, J_{h+1})$
	\STATE Update utility dataset $\cD_h^k = \cD_h^{k-1}\cup\{C_h^k, e_h^k\} \cup\{ u_h^k(i,j), v_h^k(i,j)\}_{(i,j) \in X_h^k}$ 
	\ENDFOR
	\ENDFOR
	\end{algorithmic}
\end{algorithm}

In particular, for the estimation part, Algorithm \ref{alg: matching main} first generates the Q-function estimates in a backward fashion. Then, using the estimates, Algorithm \ref{alg: matching main} computes optimistic estimates of utilities by calling the subroutine \texttt{UE}, which then leads to estimates of the pseudo-reward using the subroutine \texttt{RE}.
Next, optimistic estimates of the Q functions are obtained via the subroutine \texttt{QE}.
Next, for the planning part, Algorithm~\ref{alg: matching main}
chooses action in $\Upsilon$ greedily and the matching-transfer pair by calling \texttt{OM}. 
Finally, \texttt{OM} finds a matching-transfer pair which is stable w.r.t. the estimated utility functions.

As discussed in Section \ref{sec:reward_estimation}, the optimistic estimates of the rewards come from solving the linear program in \eqref{eq: linear program definition} and its dual program in \eqref{eq: dual program definition}, which produce the optimal matching given the set of participating agents and utilities.
Therefore, the \texttt{RE} oracle is defined as Algorithm \ref{alg: reward estimation}, and the \texttt{OM} oracle is defined as Algorithm \ref{alg: optimal matching oracle}.

The remaining subroutines (\texttt{UE} and \texttt{QE}) are flexible and can be carefully calibrated for different model assumptions.
The modular nature of Algorithm \ref{alg: matching main} facilitates incorporation of existing RL algorithms.
In particular, we study a special case of linear function approximation in the next section, where we provide explicit oracles for these subroutines, and show that the corresponding algorithm enjoys a sublinear regret.

\section{Case Study: Markov Matching Markets with Linear Features}
In this section, we illustrate the power of our framework under linear function approximation, which is the simplest case of function approximation, yet still a rich enough model. 

\begin{algorithm}[t]
	\caption{Reward Estimation (\texttt{RE})}
	\label{alg: reward estimation}
	\begin{algorithmic}[]
	\STATE {\bfseries Input:} $u, v, I , J$ 
	\STATE {\bfseries Output:} $\hat{r}(C, I, J, e)$ as the solution to the $\cL\cP(I,J, u, v)$, for any $(C, e) \in \cC \times \Upsilon$
	\end{algorithmic}
\end{algorithm}

\begin{algorithm}[t]
	\caption{Optimal Matching (\texttt{OM})}
	\label{alg: optimal matching oracle}
	\begin{algorithmic}[]
	\STATE {\bfseries Input:} $u, v, I , J, C, e$ 
	\STATE Solve for $(X, \tau)$ from the solution of the primal-dual program defined by \eqref{eq: linear program definition} and \eqref{eq: dual program definition}:
	\begin{align*}
	    & \cL\cP(I,J, u(C,e,\cdot,\cdot), v(C,e,I,J))
	    \\ & \cD\cP(I,J, u(C,e,\cdot,\cdot), v(C,e,I,J))
	\end{align*}
    \STATE {\bfseries Output:} $(X, \tau)$ 
	\end{algorithmic}
\end{algorithm}

\subsection{Model Assumptions}
\paragraph{Utility Model.}
We assume there are known feature mappings $\bpsi: \cC \times \Upsilon \to \RR^d$ and $\bphi: \cI \times \cJ \to \RR^d$, such that for any $h \in [H]$ and $(C_h, e_h,i,j)$, the utility functions are
\begin{align*}
    u_h(C_h, e_h, i,j) & = \langle \tvec(\bpsi(C_h, e_h) \bphi(i,j)^\top) , \btheta_h \rangle,
    \\ v_h(C_h, e_h, i,j) & = \langle \tvec(\bpsi(C_h, e_h) \bphi(i,j)^\top), \bgamma_h \rangle.
\end{align*} Here $\btheta_h$ and $\bgamma_h$, $h\in[H]$ are unknown parameters in $\RR^{d^2}$. 
We further define the vectorized feature vector $\bPhi$ as 
\begin{align*}
    \bPhi(C_h, e_h, i,j) \coloneqq \tvec(\bpsi(C_h, e_h)\bphi(i,j)^\top) \in \RR^{d^2}. 
\end{align*}
Then the immediate reward defined in \eqref{eq: immediate reward} can be written as
\begin{align*}
    & r_h(s_h, a_h) = \Big\langle \sum\nolimits_{(i,j)\in X_h} \bPhi(C_h, e_h, i, j), \btheta_h+\bgamma_h \Big\rangle.
\end{align*}

\paragraph{Transition Model.}
Conditioning on the agents' sets $\{I_h, J_h\}_{h=1}^H$, the state transition reduces to that of the contexts. 
We assume a linear transition model \citep{jin2020provably}:
\begin{align}\label{eq: context transition model}
    \PP_h(C_{h+1}|C_h, e_h) \coloneqq \langle \bpsi(C_h, e_h) , \bmu_h(C_{h+1}) \rangle , 
\end{align}
for all $h\in [H]$, where $\bmu_h: \cC \to \RR^d$ is some unknown measure.

Next, we introduce some standard assumptions for matching and linear MDPs \citep{jin2020provably, jagadeesan2021learning}. 

\begin{assumption}\label{assump: matching complexity}
    We assume WLOG that for any $h\in[H]$, $\|u_h(\cdot)\|, \|v_h(\cdot)\| \leq 1$. Assume that for any $(I_h, J_h) \in 2^{\cI} \times 2^{\cJ}$, there exists $W_h>0$ such that for any context $C \in \cC$ and action $e \in \Upsilon$, the max-weight (possibly unstable) matching on $(I_h, J_h)$ with utility functions $u_h(C,e,\cdot,\cdot)$ and $v_h(C, e, \cdot, \cdot)$ has total utility upper bounded by $W_h$. 
\end{assumption} 

The quantities $\{W_h\}_{h=1}^H$ can be viewed as a measure of  complexity of the matching problem, and indeed $\sum_{h=1}^H W_h$ determines the magnitude of $V_1^{\pi^{\star}}$.
Assumption \ref{assump: matching complexity} implies a trivial upper bound that $W_h \leq \min\{|I_h|, |J_h|\}$, but when the max-weight matching involves only a subset of the agents, $W_h$ can be much smaller.
Therefore, we regard $\{W_h\}_{h=1}^H$ as instance-dependent parameters.

\begin{assumption}\label{assump: model and feature}
    We assume WLOG that $\left\| \bpsi(C,e) \right\|_2 \leq 1$ and $\left\| \bphi(i,j) \right\|_2 \leq 1$, implying $\|\bPhi(C,e,i,j)\|_2 \leq 1$, for any $(C,e,i,j)$.
    We assume that for any $h\in[H]$, $\|\btheta_h\|_2\leq d$, $\|\bgamma_h\|_2\leq d$ and$\|\bmu_h(\cdot)\|_2\leq \sqrt{d}$. Moreover, assume that $\max_{h\in[H]} \|\int_\cC f(C)\diff \mu_h(C)  \|_2 \leq \sqrt{d} $ for any function $f: \cC \to \RR$ such that $\sup|f| \leq 1$. 
\end{assumption}

\begin{assumption}\label{assump: data}
    We assume that the observed utilities of matched pairs are the true utilities plus independent 1-subgaussian noise.
\end{assumption}

\subsection{Algorithms}
Based on previous model assumptions, we now present explicit computation oracles for Algorithm~\ref{alg: matching main}.

\paragraph{Utility Estimation.} 
At the beginning of episode $k$, for any $h\in[H]$, denote the available data by $\cD_h^{k-1}$ which consists of $\{u_h^t(i,j)\}_{t\in[k-1]}^{(i,j)\in X_h^t}$ and $\{v_h^t(i,j)\}_{t\in[k-1]}^{(i,j)\in X_h^t}$, where  by default $\cD_h^0 = \phi$.
For the linear case, each $u_h^t(i,j) = \langle\Phi(C_h^t,e_h^t,i,j),\btheta_h\rangle + noise$, and similar for $v_h^t(i,j)$. So we can estimate $\btheta_h$ and $\bgamma_h$ by ridge regression:
\begin{align}\label{eq: utility parameter estimate}
    \btheta_h^k &= (\bSigma_h^k)^{-1} \sum_{t=1}^{k-1} \sum_{(i,j)\in X_h^t} \Phi(C_h^t,e_h^t,i,j) u_h^t(i,j) , \notag
    \\ \bgamma_h^k &= (\bSigma_h^k)^{-1} \sum_{t=1}^{k-1} \sum_{(i,j)\in X_h^t} \Phi(C_h^t,e_h^t,i,j) v_h^t (i,j) , 
    \\ \bSigma_h^k &= \lambda \bI_{d^2} + \sum_{t=1}^{k-1}\sum_{(i,j)\in X_h^t} \Phi(C_h^t,e_h^t,i,j) \Phi(C_h^t,e_h^t,i,j)^\top . \notag
\end{align}
We then add a bonus to ensure optimism in the utility function estimates and we truncate. Denoting $z=(C,e,i,j)$, 
\begin{align}\label{eq: utilitu function estimate}
    \begin{aligned}
    u_h^k(z) &= \left( \langle \Phi(z), \btheta_h^k\rangle + \beta_u \|\Phi(z)\|_{(\bSigma_h^k)^{-1}}\right)_{[-1,1]},\\ 
    v_h^k(z) &=  \left(\langle \Phi(z), \bgamma_h^k\rangle + \beta_u \|\Phi(z)\|_{(\bSigma_h^k)^{-1}}\right)_{[-1,1]}, 
    \end{aligned}
\end{align}
where the truncation ensures estimated utilities in $[-1,1]$.
The \texttt{UE} for the linear case is summarized in Algorithm~\ref{alg: utility estimation}.

\paragraph{Q-function Estimation.}  
By the assumption on the context transition, for any function $f$, the function $\PP_h f$ is linear in features $\bpsi$, which induces the commonly used LSVI-type approach~\citep{jin2020provably}. 
Together with the reward estimates, we can estimate the Q-function using Bellman equation via backward ridge regression. 
For each $(k,h)$, denote the estimate of the Q-function by $\bar{Q}_h^k$ and the value function by $\bar{V}_h^k$. 
Given $\bar{Q}_{h+1}^k$, maximizing over $e\in\Upsilon$ yields $\bar V_{h+1}^k$, then we solve the following ridge regression:
\begin{align*}
    \wb_h^k =  \argmin_{\wb \in \RR^d} \sum_{t=1}^{k-1}&\left[\bar{V}_{h+1}^k(C_{h+1}^{t}) - \bpsi(C_h^t, e_h^t)^\top \wb  \right]^2 + \lambda\|\wb\|_2^2,
\end{align*}
which further yields the estimated expectation of $\bar{V}_{h+1}^k$:
\begin{align}\label{eq:estimate_exp_V}
    \hat{\PP}_h \bar{V}_{h+1}^k (\cdot, \cdot) =  \bpsi(\cdot,\cdot)^{\top} \wb_h^k + \beta_V\| \bpsi(\cdot,\cdot) \|_{ {(\bLambda_h^{k})}^{-1}},
\end{align}
where $\bLambda_h^k =  \sum_{t=1}^{k-1} \bpsi(C_h^t, e_h^t) \bpsi(C_h^t, e_h^t)^\top + \lambda \bI_d$.
Then we estimate $Q_h^k$ by the Bellman equation, as in Algorithm \ref{alg: Q estimation}.

\subsection{Theoretical Results}\label{sec: main results}
\begin{algorithm}[t]
	\caption{Utility Estimation (\texttt{UE})}
	\label{alg: utility estimation}
	\begin{algorithmic}[]
	\STATE {\bfseries Input:} $\cD_h^{k-1}$, $\beta_u$, $\lambda$
    \IF{$\cD_h^{k-1}$ is empty} 
    \STATE\textbf{Output}: $u_h^k \equiv 1$ and $v_h^k \equiv 1$, $\forall h \in [H]$
    \ENDIF
    \STATE Compute $\btheta_h^k$, $\bgamma_h^k$ by \eqref{eq: utility parameter estimate}
    \STATE Estimate utility functions with $u_h^k$ and $v_h^k$ by \eqref{eq: utilitu function estimate}
	\STATE {\bfseries Output:} the functions $u_h^k$ and $v_h^k$
	\end{algorithmic}
\end{algorithm}
\begin{algorithm}[h]
	\caption{Q-function Estimation (\texttt{QE})}
	\label{alg: Q estimation}
	\begin{algorithmic}[]
	\STATE {\bfseries Input:} $\cD_h^{k-1}$, $\bar{r}_h^k$, $\bar{Q}_{h+1}^k$, $\beta_V$, $\lambda$
	\STATE $\bar{V}_{h+1}^k (C,I_h,J_h) = \max_{e \in \Upsilon} \bar{Q}_{h+1}^k(C,e,I_h,J_h)$
	\STATE Compute $\hat\PP_h\bar{V}_{h+1}^k$ by \eqref{eq:estimate_exp_V}
	\STATE $\tilde{Q}_h^k(C, I_h, J_h, e) =  \bar{r}_h^k (C, I_h , J_h, e) + \hat{\PP}_h \bar{V}_{h+1}^k (C, e)$
	\STATE $\bar{Q}_h^k(C, I_h, J_h, e) = (\tilde{Q}_h^k(C, I_h, J_h, e))_{ \left[0, \sum_{l=h}^{H} W_l \right]}$
	\STATE {\bfseries Output:} function $\bar{Q}_h^k$
	\end{algorithmic}
\end{algorithm}
In this section, we present our theoretical results, with the proofs deferred to Appendix \ref{apdx: proof of main}. 
We start with the main theorems on the agents' regret and the planner's regret. 

\begin{theorem}[Agents' Regret]\label{thm: agents regret}
With probability at least $1-2\delta$, the agents' regret can be bounded as $R^M(K) \leq \cO( d^2 (\sum_{h=1}^H \min\{|I_h|, |J_h|\} ) \kappa \sqrt{K})$
    where $\kappa = \log\left( dK \min(|\cI|, |\cJ|)/\delta \right)$.
\end{theorem}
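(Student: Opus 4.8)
The plan is to bound $R^M(K)=\sum_{k=1}^K \EE_{\pi_k}[\sum_{h=1}^H \SI(s_h,a_h,u_h,v_h)]$ by first invoking the key property of Subset Instability stated in \citet{jagadeesan2021learning}: since the matching $(X_h^k,\tau_h^k)$ output by \texttt{OM} is \emph{exactly} stable with respect to the estimated utilities $(u_h^k,v_h^k)$, the instability $\SI(X_h^k,\tau_h^k; I_h,J_h,u_h,v_h)$ measured against the \emph{true} utilities is controlled by how far the estimated utilities are from the true ones on the participating agents. Concretely, I would establish (or cite) the Lipschitz-type bound $\SI(X,\tau;I,J,u,v) \le \sum_{i,j}$ of the relevant estimation errors; combined with the fact that $(X,\tau)$ is optimal for $(u_h^k,v_h^k)$, this yields something like $\SI(s_h^k,a_h^k,u_h,v_h) \le 2\sum_{(i,j)\in X_h^{k,\star}\cup X_h^k} |u_h^k(i,j)-u_h(i,j)| + |v_h^k(i,j)-v_h(i,j)|$, where the sum ranges over at most $\min\{|I_h|,|J_h|\}$ pairs on each side. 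So up to constants, $R^M(K)$ is bounded by $\sum_{k,h}$ of the per-pair utility estimation errors, summed over the $O(\min\{|I_h|,|J_h|\})$ pairs appearing in the two matchings.

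Next I would control these estimation errors using the standard optimistic ridge-regression machinery. By the self-normalized concentration bound for vector-valued martingales (à la Abbasi-Yadkori et al., as used in \texttt{LSVI-UCB}), with an appropriate choice of $\beta_u = \widetilde O(d)$ and on a high-probability event (failure probability $\delta$), we have for every $(k,h,i,j)$ that $|\langle\Phi(z),\btheta_h^k\rangle - u_h(z)| \le \beta_u \|\Phi(z)\|_{(\bSigma_h^k)^{-1}}$, and likewise for $v_h$. Since the true utilities lie in $[-1,1]$ and the estimates are truncated to $[-1,1]$, truncation only helps, so $|u_h^k(z)-u_h(z)| \le 2\beta_u\|\Phi(z)\|_{(\bSigma_h^k)^{-1}}$. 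Plugging this into the SI bound gives $R^M(K) \le \widetilde O(\beta_u) \sum_{k=1}^K \sum_{h=1}^H \sum_{(i,j)} \|\Phi(C_h^k,e_h^k,i,j)\|_{(\bSigma_h^k)^{-1}}$, where the innermost sum is over the $O(\min\{|I_h|,|J_h|\})$ pairs in $X_h^{k,\star}\cup X_h^k$.

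Then I would apply the elliptical-potential (determinant-trace) lemma to each fixed $h$ to bound $\sum_{k=1}^K \|\Phi(C_h^k,e_h^k,i,j)\|_{(\bSigma_h^k)^{-1}}$. The subtlety here is that the design matrix $\bSigma_h^k$ in \eqref{eq: utility parameter estimate} accumulates \emph{all} matched-pair features across episodes $t<k$, not just one feature per episode; so for a fixed pair-slot one gets $\sum_{k=1}^K \|\Phi_k\|_{(\bSigma_h^k)^{-1}} \le \sqrt{K}\cdot\sqrt{\sum_k \|\Phi_k\|_{(\bSigma_h^k)^{-1}}^2}$ by Cauchy–Schwarz, and the sum of squared bonuses is $O(d^2\log K)$ by the potential lemma in dimension $d^2$ (since $\|\Phi\|_2\le 1$), giving $O(\sqrt{d^2 K\log K}) = O(d\sqrt{K\log K})$ per pair-slot. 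Multiplying by $\beta_u = \widetilde O(d)$, by the $O(\min\{|I_h|,|J_h|\})$ pair-slots at step $h$, and summing over $h\in[H]$ produces the claimed $\widetilde O\big(d^2 \cdot (\sum_{h=1}^H \min\{|I_h|,|J_h|\}) \cdot \kappa \sqrt{K}\big)$ with $\kappa = \log(dK\min(|\cI|,|\cJ|)/\delta)$; a union bound over the two concentration events (for $u$ and $v$) accounts for the $1-2\delta$ probability, and the $\min(|\cI|,|\cJ|)$ inside $\kappa$ comes from a union bound over agent pairs needed so the SI's inner maximization over subsets $I'\times J'$ holds uniformly.

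The main obstacle I anticipate is the SI-to-estimation-error reduction in the first step: I need the fact that a matching that is \emph{exactly} optimal/stable for the estimated utilities has SI (under true utilities) bounded by the $\ell_1$ estimation error over the relevant agent pairs, which requires carefully unpacking Definition~\ref{def: subset instatbility} — bounding the subset-maximization term by switching from $u$ to $u_h^k$ (losing an estimation error), using optimality of $X_h^k$ for the estimated utilities, and switching back — and making sure the union bound over all subsets $I'\times J'$ (hence the $\min(|\cI|,|\cJ|)$ factor in $\kappa$, or possibly a $2^{\min}$ that gets absorbed since it only appears inside a log after the feature-based uniform bound) is handled correctly. The elliptical-potential step also needs care because the per-episode increment to $\bSigma_h^k$ is a sum of rank-one terms rather than a single one, so the standard lemma must be applied to the flattened sequence of all matched-pair features and then related back to the per-slot bonuses.
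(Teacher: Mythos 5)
Your overall architecture (SI $\to$ per-pair bonuses $\to$ elliptical potential in dimension $d^2$, with the multiple-features-per-episode subtlety handled by flattening) matches the paper's, but there is one step that would fail as written. Your intermediate bound $\SI(s_h^k,a_h^k,u_h,v_h)\le 2\sum_{(i,j)\in X_h^{k,\star}\cup X_h^k}\big(|u_h^k-u_h|+|v_h^k-v_h|\big)$ includes pairs of the true-utility optimal matching $X_h^{k,\star}$. Those pairs are in general never implemented, so their features are never added to $\bSigma_h^k$; the pointwise bound $|u_h^k(z)-u_h(z)|\le 2\beta_u\|\bPhi(z)\|_{(\bSigma_h^k)^{-1}}$ still holds for them, but the sum over $k$ of these bonuses does not telescope (if $\bPhi(\cdot,i^\star,j^\star)$ is orthogonal to every observed feature, each term equals $1/\sqrt{\lambda}$ and the sum is linear in $K$), so your elliptical-potential step only controls the $X_h^k$ portion. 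The paper's Lemma \ref{lem: SI bound by ucb width} (Lemma 5.4 of Jagadeesan et al., proved via Lemma \ref{lem: general version SI bound by ucb width}) avoids exactly this by using optimism: since $u_h\le u_h^k$ and $v_h\le v_h^k$ on the event of Lemma \ref{lem: ucb for utility}, the subset-maximization term in Definition \ref{def: subset instatbility} evaluated at the true utilities is upper bounded by the same term at the estimated utilities \emph{with no error incurred on unobserved pairs} — the switch is one-sided, so you do not ``lose an estimation error'' there — and stability of $(X_h^k,\tau_h^k)$ w.r.t.\ $(u_h^k,v_h^k)$ then leaves only $\sum_{(i,j)\in X_h^k}(b_{u,h}+b_{v,h})$. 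You need this implemented-pairs-only form for the rest of your argument to go through.

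A second, more routine omission: $R^M(K)$ is a sum of expectations $\EE_{\pi_k}[\sum_h\SI_h]$ over the trajectory distribution, whereas all of your bonus bounds are for the realized trajectory. The paper inserts the martingale-difference term \eqref{eq: agents regret martingale} and applies Azuma--Hoeffding, contributing $(\sum_h W_h)\sqrt{2K\log(2/\delta)}$; this event is the source of the second $\delta$ in the $1-2\delta$ (Lemma \ref{lem: ucb for utility} already covers both $u$ and $v$ within a single $\delta$), so your attribution of the $2\delta$ to separate concentration events for $u$ and $v$ is off. Finally, the $\min(|\cI|,|\cJ|)$ inside $\kappa$ enters through the log-determinant of $\bSigma_h^k$ (up to $\min(|I_h|,|J_h|)$ rank-one updates per episode) in the self-normalized bound and the potential lemma, not through a union bound over the subsets $I'\times J'$ in the SI definition.
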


Comparing this result with Theorem 5.3 in \citet{jagadeesan2021learning}, which proves the regret of their \texttt{MatchLinUCB} algorithm designed for the linear utility class under $H=1$, and ignoring the logarithmic term, we see that both have a linear dependence on the cardinality of the agents' set, while we have an extra summation over the $H$ horizon due to the sequential setting. It might seem that their $d$-dependence is $\cO(d)$ while ours is $\cO(d^2)$. But this is because their feature is in $\RR^d$ while our feature $\bPhi \in \RR^{d^2}$. Therefore, the dominant term in Theorem \ref{thm: agents regret} matches that of \citet{jagadeesan2021learning}, and our result can be viewed as an extension.

\begin{theorem}[Planner's Regret]\label{thm: planner regret}
    Under Assumption \ref{assump: matching complexity}, \ref{assump: model and feature}, and $KH>32$, there exists a problem-independent constant $\eta>0$, such that for any $\delta>0$, with the choice of parameters $\lambda = 1$, $\beta_V = \eta d^2 \big( \sum_{h=1}^H W_h\big) \cdot \sqrt{\iota} $ where $\iota = \log ({dKH \min\{|\cI|, |\cJ|\} }/{\delta})$, and $\beta_u$ as given in Lemma \ref{lem: ucb for utility}, then with probability at least $1-\delta$, the planner's regret is bounded by $R^P(K) \leq \cO( \eta d^{5/2} H ( \sum\nolimits_{h=1}^H W_h)\iota\sqrt{K})$.
\end{theorem}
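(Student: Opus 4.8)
The plan is to follow the standard optimistic value iteration regret analysis for linear MDPs (as in \citet{jin2020provably}), but carefully accounting for the fact that the reward here is the \emph{pseudo-reward} $\bar r_h$ which we cannot observe, and is only \emph{upper bounded} (not estimated unbiasedly) by $\bar r_h^k$ coming from the optimistic utilities. First I would establish the \emph{optimism} claim: with the stated choice of $\beta_u$ (from Lemma \ref{lem: ucb for utility}) and $\beta_V$, on a high-probability event the estimated Q-function satisfies $\bar Q_h^k(s,e) \ge Q_h^{\bar r,\star}(s,e)$ for all $(k,h,s,e)$, where $Q_h^{\bar r,\star}$ is the optimal Q-function of the MDP with true pseudo-reward $\bar r_h$. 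This has two ingredients: (i) the optimistic utilities $u_h^k \ge u_h$, $v_h^k \ge v_h$ pointwise (Lemma \ref{lem: ucb for utility}), which by the LP monotonicity argument invoked in Section \ref{sec:reward_estimation} (and Lemma \ref{lem: planner_optimism}) gives $\bar r_h^k \ge \bar r_h$; and (ii) the usual self-normalized concentration bound for the LSVI regression error $|\hat\PP_h \bar V_{h+1}^k - \PP_h \bar V_{h+1}^k|(s,e) \le \beta_V \|\bpsi(s,e)\|_{(\bLambda_h^k)^{-1}}$, which combined with a union bound over an $\epsilon$-net of possible value functions gives the stated $\beta_V = \tilde\cO(d^2 \sum_h W_h)$ (the $\sum_h W_h$ factor enters because $\bar V_{h+1}^k$ is truncated to $[0,\sum_{l\ge h+1} W_l]$, so its range, and hence the covering number exponent, scales accordingly). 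Then optimism follows by backward induction on $h$ using the Bellman equation and the truncation in Algorithm \ref{alg: Q estimation}.

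Next, conditioned on optimism, I would bound $\bar V_1^\star(s_1) - \bar V_1^{\pi_k}(s_1) \le \bar V_1^k(s_1) - \bar V_1^{\pi_k}(s_1)$ and unroll the latter along the trajectory generated by $\pi_k$, using the standard recursive decomposition: the per-step error decomposes into (a) the reward overestimation $\bar r_h^k(s_h^k,e_h^k) - \bar r_h(s_h^k,e_h^k)$, (b) the transition estimation bonus $\beta_V\|\bpsi(s_h^k,e_h^k)\|_{(\bLambda_h^k)^{-1}}$, and (c) a martingale-difference term from $\PP_h \bar V_{h+1}^k$ versus the realized next state. Term (c) is controlled by Azuma--Hoeffding with range $\sum_l W_l$. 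Term (b), summed over $k$ and $h$, is handled by the elliptical potential lemma, giving $\sum_{k,h} \|\bpsi(s_h^k,e_h^k)\|_{(\bLambda_h^k)^{-1}} \le \tilde\cO(H\sqrt{dK})$, so its contribution is $\tilde\cO(\beta_V \cdot H\sqrt{dK}) = \tilde\cO(d^{5/2} H (\sum_h W_h)\sqrt{K})$, which is the claimed dominant term. For term (a), I would use the key fact from Section \ref{sec:reward_estimation} / the LP argument that $\bar r_h^k(s,e) - \bar r_h(s,e)$ is bounded by a sum over matched pairs in the optimistic max-weight matching of $2\beta_u\|\bPhi(C_h^k,e_h^k,i,j)\|_{(\bSigma_h^k)^{-1}}$; summing over $k$ and the (at most $\min\{|I_h|,|J_h|\}$) matched pairs and applying the elliptical potential lemma for $\bSigma_h^k \in \RR^{d^2\times d^2}$ gives $\tilde\cO(d^2 (\sum_h \min\{|I_h|,|J_h|\})\sqrt{K})$, which is dominated by term (b)'s contribution. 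Collecting the three terms yields the stated bound.

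The main obstacle I anticipate is term (a): unlike in a vanilla linear MDP, the reward model here is genuinely biased, and one must argue that the optimistic-utility max-weight matching $X_h^k$ — which is computed from $(u_h^k, v_h^k)$ rather than from true utilities — still gives a value that both upper bounds $\bar r_h$ and overestimates it by at most a controllable bonus sum. The subtlety is that $X_h^k$ may match a \emph{different} set of pairs than the true stable matching, so the bound must be uniform over which pairs get matched; this is exactly where the LP/dual structure and the argument of \citet{jagadeesan2021learning} (that SI, and hence this reward gap, is dominated by a bonus sum over the participating agents) are essential. A secondary technical point is getting the covering-number argument for $\beta_V$ right, since the value function class depends on the data through $\bar r_h^k$ and $(\bSigma_h^k)^{-1}$ as well as $(\bLambda_h^k)^{-1}$, so the net must cover all of these; this inflates $\beta_V$ by the extra $d$ factor relative to a pure linear MDP but does not change the $\sqrt{K}$ scaling.
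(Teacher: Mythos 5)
Your proposal is correct and follows essentially the same route as the paper's proof: optimism of $\bar r_h^k$ via the LP monotonicity argument (Lemma \ref{lem: planner_optimism}), optimism of the estimated Q-functions via a self-normalized concentration bound combined with a covering argument over the value-function class (which is exactly where the $d^2\sum_h W_h$ scaling of $\beta_V$ arises, cf.\ Lemmas \ref{lem: self normalized term} and \ref{lem: covering number of V}), and then the standard recursive decomposition of $\bar V_1^{\star}-\bar V_1^{\pi_k}$ into a nonpositive optimism term, a bonus term handled by the elliptical potential lemma, and a martingale term handled by Azuma--Hoeffding (Lemmas \ref{lem: regret decomp planner}, \ref{lem: E1 bound delta negative}, \ref{lem: E2 bound}). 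The one substantive point of divergence is your term (a): the paper's Lemma \ref{lem: E1 bound delta negative} states the lower bound on $\delta_h^k$ as $-2\beta_V\|\bpsi(C,e)\|_{(\bLambda_h^k)^{-1}}$ without carrying the utility-bonus contribution from $\bar r_h^k-\bar r_h$, whereas you keep that term explicit and control it by a second elliptical-potential argument over $\bSigma_h^k$ --- which is the more careful treatment --- but your claim that it is dominated by term (b) is not automatic, since $\sum_h\min\{|I_h|,|J_h|\}$ can exceed $d^{1/2}H\sum_h W_h$ when the max-weight matching uses only a small subset of agents.
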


Combining the above two theorems, we have shown that we find the optimal policy for both the planner and agents at sublinear rates, and accordingly for the total regret.
Notably, the regret upper bound only depends on the size of the market through $\{W_h\}_{h=1}^H$, which is instance-dependent.
Due to the sublinear regret, our algorithm can be further adapted to a PAC algorithm~\citep[cf.][]{jin2018q}.
We also remark that our result with LSVI-type estimation can be naturally extended to Eluder dimension~\citep{wang2020reinforcement, ayoub2020model}.

\section{Proof Sketch}\label{sec: proof sketch}
\subsection{Optimistic Utility and Reward Estimates}

The key step in our analysis is to show that the estimated pseudo-reward function $\bar{r}_h^k$ satisfies optimism, i.e., $\bar{r}_h^k \geq \bar{r}_h$, and we need to ensure that $\bar{r}_h^k$ is not too far away from $\bar{r}_h$.
The lemma below justifies the optimism of utility estimates.
\begin{lemma}[UCB for Utility Estimates; proof in~\ref{sec: proof of ucb for utility}]\label{lem: ucb for utility}
For any $0 < \delta<1$, set $\beta_u$ as $\beta_u  = \sqrt{d^2\log[ 2(1 + d^2 K \max_h \min\{|I_h|, |J_h|\})/(\lambda\delta)]} + \sqrt{\lambda} d$.
Then with probability at least $1-\delta$, $ u_h^k$ and $ v_h^k$ in Algorithm~\ref{alg: matching main} satisfy $u_h^k \geq u_h$ and $v_h^k \geq v_h$. 
Furthermore, $|u_h^k(\cdot) - u_h(\cdot)|$ and $|v_h^k(\cdot)-v_h(\cdot)|$ are bounded by $2 \beta_u \left\|\bPhi(\cdot) \right\|_{(\bSigma_h^k)^{-1}}.$
\end{lemma}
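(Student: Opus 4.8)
The plan is to establish the standard UCB-style guarantee for ridge regression, applied separately to the parameters $\btheta_h$ and $\bgamma_h$ at each fixed step $h$. The two claims—optimism ($u_h^k \ge u_h$ and $v_h^k \ge v_h$) and the tightness bound ($|u_h^k - u_h|, |v_h^k - v_h| \le 2\beta_u \|\bPhi(\cdot)\|_{(\bSigma_h^k)^{-1}}$)—both follow once we show the concentration inequality
\begin{align*}
\bigl| \langle \bPhi(z), \btheta_h^k - \btheta_h \rangle \bigr| \le \beta_u \|\bPhi(z)\|_{(\bSigma_h^k)^{-1}} \qquad \text{for all } z = (C,e,i,j),
\end{align*}
and the analogous bound for $\bgamma_h^k - \bgamma_h$, holding simultaneously over all $h \in [H]$ and all episodes $k$ with probability at least $1 - \delta$.

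First I would decompose the ridge-regression error in the usual way. Writing the observed utility as $u_h^t(i,j) = \langle \bPhi(z_h^t), \btheta_h \rangle + \varepsilon_h^t$ with $\varepsilon_h^t$ independent $1$-subgaussian noise (Assumption \ref{assump: data}), one obtains
\begin{align*}
\btheta_h^k - \btheta_h = -\lambda (\bSigma_h^k)^{-1} \btheta_h + (\bSigma_h^k)^{-1} \sum_{t=1}^{k-1} \sum_{(i,j) \in X_h^t} \bPhi(z_h^t) \varepsilon_h^t.
\end{align*}
Applying Cauchy–Schwarz in the $(\bSigma_h^k)^{-1}$-norm, the first (bias) term contributes at most $\sqrt{\lambda}\,\|\btheta_h\|_2 \|\bPhi(z)\|_{(\bSigma_h^k)^{-1}} \le \sqrt{\lambda}\, d\, \|\bPhi(z)\|_{(\bSigma_h^k)^{-1}}$ using $\|\btheta_h\|_2 \le d$ from Assumption \ref{assump: model and feature}. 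The second (noise) term is controlled by the self-normalized concentration bound for vector-valued martingales (the standard Abbasi-Yadkori-style inequality): with probability at least $1-\delta'$, its $\bSigma_h^k$-weighted norm is at most $\sqrt{d^2 \log\bigl( (1 + k M /\lambda)/\delta'\bigr)}$ where $M = \max_h \min\{|I_h|,|J_h|\}$ bounds the number of observations per episode, and $d^2$ is the ambient dimension of $\bPhi$. Summing the two contributions recovers exactly the stated $\beta_u = \sqrt{d^2 \log[2(1 + d^2 K M)/(\lambda \delta)]} + \sqrt{\lambda}\,d$ after taking a union bound over $h \in [H]$, over $u$ versus $v$, and absorbing the episode/horizon factors into the logarithm (this is where the constant $2$ and the factor $d^2 K$ inside the log come from).

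Given the concentration bound, the two conclusions are immediate. For optimism: if the estimate before truncation, $\langle \bPhi(z), \btheta_h^k \rangle + \beta_u \|\bPhi(z)\|_{(\bSigma_h^k)^{-1}}$, is at least $\langle \bPhi(z), \btheta_h \rangle = u_h(z)$ by the inequality, and since the true utility satisfies $u_h(z) \in [-1,1]$ (Assumption \ref{assump: matching complexity}), truncating to $[-1,1]$ preserves the inequality $u_h^k(z) \ge u_h(z)$ — one checks that clipping a value that dominates a point in $[-1,1]$ still dominates it. For the tightness bound: on one side $u_h^k(z) - u_h(z) \le \langle \bPhi(z), \btheta_h^k - \btheta_h \rangle + \beta_u \|\bPhi(z)\|_{(\bSigma_h^k)^{-1}} \le 2\beta_u \|\bPhi(z)\|_{(\bSigma_h^k)^{-1}}$ before truncation, and truncation only decreases the value; on the other side $u_h^k(z) \ge u_h(z)$ already gives a lower bound of $0 \ge -2\beta_u\|\bPhi(z)\|_{(\bSigma_h^k)^{-1}}$. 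The same argument applies verbatim to $v$ with $\bgamma_h$. The main obstacle—really the only nontrivial point—is invoking the self-normalized martingale concentration inequality correctly: one must verify that the filtration is set up so that $\bPhi(z_h^t)$ is measurable with respect to the history up to the point the noise $\varepsilon_h^t$ is revealed, and that summing $\bPhi(z_h^t)\varepsilon_h^t$ over the (at most $\min\{|I_h|,|J_h|\}$) matched pairs within an episode still yields a martingale-difference structure; handling the within-episode aggregation of multiple noisy observations, rather than one per round, is the place to be careful, but it does not change the order of $\beta_u$.
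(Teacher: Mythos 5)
Your proposal is correct and follows essentially the same route as the paper: the self-normalized martingale confidence ellipsoid of Abbasi-Yadkori et al.\ (which the paper invokes wholesale as their Theorem 2, while you unpack it into the bias and noise terms), followed by Cauchy--Schwarz in the $(\bSigma_h^k)^{-1}$-geometry and a union bound over the two utility sides. You are in fact slightly more careful than the paper on two points it glosses over --- that truncation to $[-1,1]$ preserves optimism because the true utilities lie in that range, and that aggregating multiple matched-pair observations within one episode still yields a valid martingale-difference structure --- but neither changes the argument.
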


Next, we explain why the optimism of utility estimates implies that of reward estimates. 
By Lemma \ref{lem: ucb for utility}, we can write $ u_h^k = u_h + b_{u,h}$ and $ v_h^k = v_h + b_{v,h}$ where $b_{u,h}$ and $b_{v,h}$ are bonus functions satisfying
\begin{align*}
    0 \leq b_{u,h} (C, e , i,j) & \leq 2 \beta_u \left\|\bPhi(C, e, i,j) \right\|_{(\bSigma_h^k)^{-1}}, 
    \\ 0 \leq b_{v,h} (C, e , i,j) & \leq 2 \beta_u \left\|\bPhi(C, e, i,j) \right\|_{(\bSigma_h^k)^{-1}}. 
\end{align*}

\begin{lemma}[Planner's Optimism; proof in \ref{sec: proof of planner optimism}]\label{lem: planner_optimism}
Under the event of Lemma \ref{lem: ucb for utility}, it holds that for any $(C,e) \in \cC\times \Upsilon$,  
\begin{align*}
0 & \leq \bar{r}_h^k(C,e,I_h, J_h) - \bar r_h(C,e,I_h,J_h)  
\leq \sum\nolimits_{(i,j)\in X_h^k} (b_{u,h}(C,e,i,j) + b_{v,h}(C,e,i,j)).
\end{align*}
\end{lemma}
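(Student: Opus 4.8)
The plan is to compare three quantities: the true pseudo-reward $\bar r_h(C,e,I_h,J_h)$, which by definition \eqref{eq: pseudo-reward} equals the optimal value of $\cL\cP(I_h,J_h,u_h(C,e,\cdot,\cdot),v_h(C,e,\cdot,\cdot))$; the estimated pseudo-reward $\bar r_h^k(C,e,I_h,J_h)$, which equals the optimal value of $\cL\cP(I_h,J_h,u_h^k(C,e,\cdot,\cdot),v_h^k(C,e,\cdot,\cdot))$; and an intermediate quantity obtained by evaluating the total utility of the \emph{estimated} max-weight matching $X_h^k$ under the \emph{true} utilities. For the lower bound $\bar r_h^k \ge \bar r_h$, I would argue that increasing the weights $u(i,j)+v(i,j)$ pointwise in the LP \eqref{eq: linear program definition} — which is exactly what the event of Lemma \ref{lem: ucb for utility} guarantees, since $u_h^k \ge u_h$ and $v_h^k \ge v_h$ — cannot decrease the optimal value, because the feasible region of \eqref{eq: linear program definition} does not depend on the utilities and every feasible $w$ has nonnegative entries, so the objective is monotone in the coefficient vector. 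Hence $\bar r_h^k \ge \bar r_h$, which gives the left inequality $0 \le \bar r_h^k - \bar r_h$.

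For the upper bound, the key observation is that $X_h^k$ (equivalently the optimal integral solution $w^k$ of $\cL\cP(I_h,J_h,u_h^k,v_h^k)$) is itself a \emph{feasible} point of $\cL\cP(I_h,J_h,u_h,v_h)$, since feasibility is utility-independent. Therefore
\begin{align*}
    \bar r_h(C,e,I_h,J_h) \ \ge\ \sum\nolimits_{(i,j)\in X_h^k} \big[ u_h(C,e,i,j) + v_h(C,e,i,j) \big].
\end{align*}
On the other hand, since $X_h^k$ is optimal for the estimated LP,
\begin{align*}
    \bar r_h^k(C,e,I_h,J_h) \ =\ \sum\nolimits_{(i,j)\in X_h^k} \big[ u_h^k(C,e,i,j) + v_h^k(C,e,i,j) \big].
\end{align*}
Subtracting and substituting $u_h^k = u_h + b_{u,h}$, $v_h^k = v_h + b_{v,h}$ from the decomposition just before the lemma statement, the true-utility terms over $X_h^k$ cancel or are dominated, and what remains is
\begin{align*}
    \bar r_h^k - \bar r_h \ \le\ \sum\nolimits_{(i,j)\in X_h^k} \big[ b_{u,h}(C,e,i,j) + b_{v,h}(C,e,i,j) \big],
\end{align*}
using that $b_{u,h}, b_{v,h} \ge 0$. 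This is exactly the claimed upper bound.

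The only subtlety — and the step I expect to require the most care — is making sure the max-weight matching returned by \texttt{OM} is genuinely an optimal \emph{integral} solution of the LP and that its support $X_h^k$ is a valid matching for both the estimated and the true problem simultaneously; this follows from the classical result of \citet{shapley1971assignment} that the assignment LP \eqref{eq: linear program definition} has integral optima and that the feasible polytope is the same regardless of the weights, so the same combinatorial object $X_h^k$ can be plugged into either objective. Once that is in place, both inequalities reduce to the monotonicity of a linear objective over a fixed polytope with a nonnegative solution, and the bound on the bonus functions is just the truncated statement from Lemma \ref{lem: ucb for utility}. No further estimates are needed beyond what is already established.
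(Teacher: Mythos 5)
Your argument is correct and follows essentially the same route as the paper's proof: the lower bound comes from monotonicity of the LP objective in the (pointwise-dominated) utilities over a fixed feasible polytope with nonnegative weights, and the upper bound comes from evaluating the true utilities on the estimated optimal matching $X_h^k$ (equivalently, its suboptimality/feasibility for the true problem) and then substituting $u_h^k = u_h + b_{u,h}$, $v_h^k = v_h + b_{v,h}$. The only cosmetic difference is that you phrase the key inequality $\bar r_h \ge \sum_{(i,j)\in X_h^k}[u_h + v_h]$ via LP feasibility and integrality, whereas the paper phrases it directly as suboptimality of $X_h^k$ among matchings under $(u_h, v_h)$; these are the same observation.
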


In the sequel, we denote by $\pi_k$ the policy whose market making part is greedy w.r.t.\ $\bar{Q}_h^k$ and whose matching part chooses the max-weight stable matching given $u_h^k$ and $v_h^k$.  

\subsection{Proof Sketch of Theorem \ref{thm: agents regret}}

By definition \eqref{eq: regret agents}, the agents' regret can be interpreted as the expected sum of total SI across all time steps, where the expectation is over the trajectory induced by $\pi_k$ for $k\in[K]$. 

To bound the regret, we relate the expected SI in \eqref{eq: regret agents} with the realized SI via a martingale difference sequence. Specifically, writing $\SI_h = \SI(s_h,a_h,u_h,v_h)$ and $\SI_h^k = \SI(s_h^k,a_h^k,u_h,v_h)$, we define the sum of differences as
\begin{align}\label{eq: agents regret martingale}
    \sum\nolimits_{k=1}^K \Big\{ \EE_{\pi_k} \Big[ \sum\nolimits_{h=1}^H  \SI_h \Big] - \sum\nolimits_{h=1}^H \SI_h^k \Big\} .
\end{align} 
We bound the difference \eqref{eq: agents regret martingale} and the sum of realized SI $\sum_{k=1}^K\sum_{h=1}^H \SI_h^k$ separately, where the former is a sum of martingale difference sequences that concentrates and the latter can be bounded using the following lemma. 

\begin{lemma}[Lemma 5.4 in \citealt{jagadeesan2021learning}; proof in~\ref{sec: proof of SI bound by ucb width}]\label{lem: SI bound by ucb width}
Under the event of Lemma \ref{lem: ucb for utility}, we have
\begin{align*}
    \SI_h^k \leq \sum\nolimits_{(i,j)\in X_h^k} (b_{u,h}(C_h^k,e_h^k,i,j) + b_{v,h}(C_h^k,e_h^k,i,j)) .
\end{align*}
\end{lemma}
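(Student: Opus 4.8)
The plan is to prove Lemma~\ref{lem: SI bound by ucb width} by unwinding the definition of Subset Instability at the pair $(X_h^k, \tau_h^k)$ produced by the oracle \texttt{OM} from the optimistic utilities $(u_h^k, v_h^k)$. Recall that \texttt{OM} solves the primal-dual pair $\cL\cP(I_h,J_h,u_h^k(C_h^k,e_h^k,\cdot,\cdot),v_h^k(C_h^k,e_h^k,\cdot,\cdot))$ and $\cD\cP(\cdot)$, so by the Shapley--Shubik correspondence, $(X_h^k,\tau_h^k)$ is an \emph{exactly} stable matching-transfer with respect to the estimated utilities $(u_h^k, v_h^k)$. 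In particular, $\SI(X_h^k,\tau_h^k; I_h, J_h, u_h^k, v_h^k) = 0$. The point is then to compare the SI evaluated at the \emph{true} utilities $(u_h, v_h)$ with the SI at the estimated utilities, and to control the difference using the bound $0 \le u_h^k - u_h \le 2\beta_u \|\bPhi\|_{(\bSigma_h^k)^{-1}}$ (and likewise for $v$) granted by Lemma~\ref{lem: ucb for utility}.

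Concretely, fix the maximizing subset $I' \times J'$ and the maximizing rematching $X'$ in the definition of $\SI_h^k = \SI(X_h^k,\tau_h^k; I_h,J_h,u_h,v_h)$. The plan is to bound each of the two groups of terms in Definition~\ref{def: subset instatbility}. For the first (positive) group $\max_{X'} \sum_{i\in I'} u_h(i,X'(i)) + \sum_{j\in J'} v_h(X'(j),j)$, I would use $u_h \le u_h^k$ and $v_h \le v_h^k$ to upper bound it by the corresponding quantity with $(u_h^k, v_h^k)$. For the second (subtracted) group $\sum_{i\in I'}(u_h(i,X_h^k(i)) + \tau_h^k(i)) + \sum_{j\in J'}(v_h(X_h^k(j),j) + \tau_h^k(j))$, I would use $u_h \ge u_h^k - b_{u,h}$ and $v_h \ge v_h^k - b_{v,h}$ to lower bound it by the corresponding quantity with $(u_h^k, v_h^k)$ minus the sum of bonuses over the matched pairs in $I' \times J'$; since $I' \times J' \subseteq I_h \times J_h$ and only pairs actually in $X_h^k$ contribute nonzero matched-utility terms, this residual is at most $\sum_{(i,j)\in X_h^k}(b_{u,h}(C_h^k,e_h^k,i,j) + b_{v,h}(C_h^k,e_h^k,i,j))$. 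Combining, $\SI_h^k \le \SI(X_h^k,\tau_h^k; I_h,J_h,u_h^k,v_h^k) + \sum_{(i,j)\in X_h^k}(b_{u,h} + b_{v,h})$, and the first term vanishes by stability of $(X_h^k,\tau_h^k)$ w.r.t.\ the estimated utilities.

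The main obstacle is bookkeeping the subtracted transfer terms $\tau_h^k$ correctly: when we pass from true to estimated utilities inside the two groups, the transfers are the \emph{same} object in both expressions (they are fixed once \texttt{OM} outputs them), so they cancel cleanly in the comparison and do not introduce extra error — but one must be careful that the $\max_{X'}$ in the positive term is over the same feasible set in both evaluations and that the inequality direction is preserved when taking the max (it is, since $u_h^k \ge u_h$ pointwise implies the maximized sum is larger). A secondary subtlety is that the bonus bound from Lemma~\ref{lem: ucb for utility} is stated for the difference $|u_h^k - u_h|$ at arbitrary arguments $(C,e,i,j)$, whereas here we only need it at the realized $(C_h^k, e_h^k)$ and at pairs $(i,j)$ in the matching $X_h^k$ restricted to $I'\times J'$; since each agent is matched at most once, the number of such pairs is at most $\min\{|I_h|,|J_h|\}$, so the sum is well-defined and finite. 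Once these points are handled the argument is a direct chain of inequalities, so I would present it compactly rather than grinding through each index.
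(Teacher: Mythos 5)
Your proposal is correct and follows essentially the same route as the paper: exploit that $(X_h^k,\tau_h^k)$ is exactly stable (hence has zero SI) with respect to $(u_h^k,v_h^k)$, bound the max-rematching term using $u_h\le u_h^k$, $v_h\le v_h^k$, observe that the transfers cancel in the subtracted group so only the utility gaps $u_h^k-u_h=b_{u,h}$, $v_h^k-v_h=b_{v,h}$ on matched pairs survive. The only cosmetic difference is that the paper routes the argument through a standalone general lemma (Lemma \ref{lem: general version SI bound by ucb width}) and then specializes, whereas you carry out the same comparison inline at the realized $(C_h^k,e_h^k)$.
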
 

\begin{remark}
Note that each implemented matching induces several utility observations at a time, so bounding the bonus sum for utilities has a resemblance to lazy policy updates in the online learning literature \citep{abbasi2011improved}. It particularly is similar to techniques used in the low switching cost problem in RL \citep{bai2019provably, wang2021provably,gao2021provably}.
This will be clear in the proof of Theorem~\ref{thm: agents regret} in Appendix~\ref{sec: proof of agents regret}.
\end{remark}

\subsection{Proof Sketch of Theorem \ref{thm: planner regret}}
Define the following functions $\delta_h^k$ and terms $\zeta_{k,h}^1, \zeta_{k,h}^2$: 
\begin{align}\label{eq: define terms in regret}
    \delta_h^k(C, e, I, J) & \coloneqq [\bar{r}_h + \PP_h \bar{V}_{h+1}^k  - \bar{Q}_h^k](C,e,I, J)  \,, \notag 
    \\ \zeta_{k,h}^1 & \coloneqq ( \bar{V}_h^k - \bar{V}_h^{\pi_k} )(C_h^k, I_h, J_h)  - ( \bar{Q}_h^k - \bar{Q}_h^{\pi_k} )(C_h^k, e_h^k, I_h, J_h) \,,
    \\ \zeta_{k,h}^2 & \coloneqq \PP_h ( \bar{V}_{h+1}^k - \bar{V}_{h+1}^{\pi_k} )(C_h^k, e_h^k, I_h, J_h) - ( \bar{V}_{h+1}^k - \bar{V}_{h+1}^{\pi_k} )(C_{h+1}^k, I_{h+1}, J_{h+1}). \notag
\end{align} To simplify the notation, in the following, we omit $I_h, J_h$ from the arguments of the functions since we are conditioning on $\{I_h, J_h\}_{h=1}^H$ being fixed.

\begin{lemma}[Regret Decomposition of Planner; proof in~\ref{sec: proof of regret decomp planner}]\label{lem: regret decomp planner}
    The planner's regret defined by \eqref{eq: regret planner} satisfies
    \begin{align}\label{eq:planner_regret_decomp}
        &R^P(K) = \underbrace{\sum_{k=1}^K \sum_{h=1}^H \left( \zeta_{k,h}^1 + \zeta_{k,h}^2 \right)}_{E_1} + \underbrace{\sum_{k=1}^K \sum_{h=1}^H \left[ \EE_{\pi^{\star}}\left[\delta_h^k(C_h,e_h)| C_1^k \right] - \delta_h^k(C_h^k, e_h^k) \right] }_{E_2}
        \\ 
        &\qquad \qquad + \underbrace{\sum_{k=1}^K \sum_{h=1}^H \EE_{\pi^{\star}} \left[ \langle \bar{Q}_h^k(C_h, \cdot), \pi_h^{\star}(\cdot | C_h)- \pi_{k,h}(\cdot | C_h) \rangle_{\Upsilon} \middle| C_1^k \right]}_{E_3}, \notag
    \end{align}
    where the expectation $\EE_{\pi^{\star}}[\cdot | C_1^k]$ is with respect to the trajectory $\{C_h,e_h\}_{h=1}^H$ induced by the policy $\pi^{\star}$ conditioning on $C_1=C_1^k$ and $\langle\cdot,\cdot\rangle_\Upsilon$ means sum over all $e\in\Upsilon$.
\end{lemma}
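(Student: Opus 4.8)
The plan is to prove Lemma~\ref{lem: regret decomp planner} by a standard value-difference expansion for the planner's MDP, carefully tracking the fact that the pseudo-reward $\bar r_h$ is \emph{not} the reward used to build $\bar Q_h^k$ (the algorithm uses $\bar r_h^k$), so the Bellman error $\delta_h^k$ absorbs both the transition-estimation error and the reward mismatch. First I would fix an episode $k$ and write $\bar V_1^\star(s_1) - \bar V_1^{\pi_k}(s_1) = [\bar V_1^\star(s_1) - \bar V_1^k(s_1)] + [\bar V_1^k(s_1) - \bar V_1^{\pi_k}(s_1)]$, where $\bar V_h^k(C) = \max_{e\in\Upsilon}\bar Q_h^k(C,e)$ is the value induced by the optimistic $\bar Q_h^k$ and $\bar V_h^{\pi_k}$ is the true pseudo-value of the executed policy. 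For the second bracket I would apply the one-step decomposition recursively in $h$: by definition $\bar Q_h^{\pi_k}$ satisfies $\bar Q_h^{\pi_k} = \bar r_h + \PP_h \bar V_{h+1}^{\pi_k}$, so $(\bar Q_h^k - \bar Q_h^{\pi_k})(C_h^k,e_h^k) = \delta_h^k(C_h^k,e_h^k) + \PP_h(\bar V_{h+1}^k - \bar V_{h+1}^{\pi_k})(C_h^k,e_h^k)$; replacing $\PP_h(\cdot)$ by its realized value $(\bar V_{h+1}^k - \bar V_{h+1}^{\pi_k})(C_{h+1}^k)$ introduces exactly the term $\zeta_{k,h}^2$, and unrolling together with the telescoping $\zeta_{k,h}^1$ (which links $\bar V_h^k - \bar V_h^{\pi_k}$ at state $C_h^k$ to $\bar Q_h^k - \bar Q_h^{\pi_k}$ at $(C_h^k,e_h^k)$ via the greedy action) yields $\bar V_1^k(s_1) - \bar V_1^{\pi_k}(s_1) = \sum_{h=1}^H(\zeta_{k,h}^1 + \zeta_{k,h}^2) - \sum_{h=1}^H \delta_h^k(C_h^k,e_h^k)$, which after summing over $k$ gives $E_1$ plus the negative half of $E_2$.

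For the first bracket $\bar V_1^\star(s_1) - \bar V_1^k(s_1)$ I would run the analogous recursion along the trajectory of the \emph{optimal} policy $\pi^\star$. Writing $\bar V_h^\star = \langle \bar Q_h^\star(C_h,\cdot), \pi_h^\star(\cdot|C_h)\rangle_\Upsilon$ and using $\bar Q_h^\star = \bar r_h + \PP_h \bar V_{h+1}^\star$, a direct computation gives for each $h$
\begin{align*}
(\bar V_h^\star - \bar V_h^k)(C_h) &= \langle \bar Q_h^\star(C_h,\cdot) - \bar Q_h^k(C_h,\cdot), \pi_h^\star(\cdot|C_h)\rangle_\Upsilon + \langle \bar Q_h^k(C_h,\cdot), \pi_h^\star(\cdot|C_h) - \pi_{k,h}(\cdot|C_h)\rangle_\Upsilon,
\end{align*}
since $\bar V_h^k(C_h) = \langle \bar Q_h^k(C_h,\cdot), \pi_{k,h}(\cdot|C_h)\rangle_\Upsilon$ by greediness. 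The first inner product equals $\delta_h^k(C_h,\pi_h^\star) + \PP_h(\bar V_{h+1}^\star - \bar V_{h+1}^k)(C_h,\pi_h^\star)$, so taking $\EE_{\pi^\star}[\cdot\,|\,C_1^k]$ and unrolling over $h$ telescopes the $\PP_h(\bar V_{h+1}^\star - \bar V_{h+1}^k)$ terms and produces $\sum_{h=1}^H \EE_{\pi^\star}[\delta_h^k(C_h,e_h)|C_1^k]$ together with $E_3$. Summing over $k$ gives $E_3$ plus the positive half of $E_2$; combining with the first bracket completes the identity.

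The two genuinely delicate points I expect to spend care on are: (i) making sure all functions are evaluated consistently with $(I_h,J_h)$ suppressed — this is legitimate precisely because the excerpt conditions on the fixed agent sequence and $\PP_h(\cdot|s_h,a_h)$ depends on $a_h$ only through $e_h$, so every ``$\PP_h$'' above is really $\PP_h(\cdot|C_h,e_h)$ acting on functions of $C_{h+1}$; and (ii) the bookkeeping that shows the $\delta_h^k$ contributions from the two brackets assemble into $E_2 = \sum_{k,h}[\EE_{\pi^\star}[\delta_h^k(C_h,e_h)|C_1^k] - \delta_h^k(C_h^k,e_h^k)]$ with the correct signs — the optimal-trajectory recursion contributes the $+\EE_{\pi^\star}[\delta_h^k]$ term and the executed-trajectory recursion contributes the $-\delta_h^k(C_h^k,e_h^k)$ term. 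I would emphasize that nowhere in this lemma do we use optimism or the linear structure: it is a purely algebraic regret decomposition valid for any $\bar Q_h^k$, with the estimation error quarantined inside $\delta_h^k$ (to be controlled later using Lemma~\ref{lem: planner_optimism} and the LSVI concentration bounds). The main obstacle is thus not mathematical depth but notational discipline: keeping the greedy-action identity, the $\langle\cdot,\cdot\rangle_\Upsilon$ convention, and the conditional expectations over the $\pi^\star$-trajectory all aligned so that the telescoping is exact.
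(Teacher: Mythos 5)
Your proposal follows essentially the same route as the paper's proof: the same split into $[\bar V_1^\star - \bar V_1^k] + [\bar V_1^k - \bar V_1^{\pi_k}]$, the same recursion along the $\pi^\star$-trajectory for the first bracket (producing the $+\EE_{\pi^{\star}}[\delta_h^k(C_h,e_h)\,|\,C_1^k]$ part of $E_2$ together with $E_3$ via the greedy-policy identity) and along the executed trajectory for the second bracket (producing $E_1$ and the $-\delta_h^k(C_h^k,e_h^k)$ part of $E_2$). The only slip is a sign in your intermediate identity for the second bracket: since $\delta_h^k = \bar r_h + \PP_h\bar V_{h+1}^k - \bar Q_h^k$, one has $(\bar Q_h^k - \bar Q_h^{\pi_k}) = \PP_h(\bar V_{h+1}^k - \bar V_{h+1}^{\pi_k}) - \delta_h^k$ rather than $+\delta_h^k$; your final unrolled formula $\bar V_1^k(s_1)-\bar V_1^{\pi_k}(s_1)=\sum_{h}(\zeta_{k,h}^1+\zeta_{k,h}^2)-\sum_h\delta_h^k(C_h^k,e_h^k)$ is nonetheless the correct one and matches the paper.
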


In decomposition \eqref{eq:planner_regret_decomp}, term $E_1$ is controlled using a standard martingale concentration.
Next, to bound $E_2$, we show that $\delta_h^k\leq 0$ with high probability, which implies that $E_1 \leq \sum_{k=1}^K \sum_{h=1}^H \left| \delta_h^k(C_h^k, e_h^k) \right|$.
Bounding each $|\delta_h^k(C_h^k,e_h^k)|$ by the corresponding optimistic bonus $\|\bpsi(C_h^k,e_h^k)\|_{(\bLambda_h^{k+1})^{-1}}$, we then apply Elliptical Potential lemma to get 
\begin{align*}
    E_1 & \leq 2 \beta_V \sum\nolimits_{h=1}^H \sum\nolimits_{k=1}^K \sqrt{2} \left\| \bpsi(C_h^k,e_h^k) \right\|_{ {(\bLambda_h^{k+1})}^{-1}}  \leq 2 \sqrt{2} \beta_V H \sqrt{Kd \log\left( (K+d)/d\right)}.
\end{align*} 
Finally, for $E_3$, note that by Algorithm \ref{alg: matching main}, the market-making part of the policy $\pi_{k,h}$ is the greedy policy with respect to $\bar{Q}_h^k$. 
Based on this observation, it follows that $\sum_{e\in\Upsilon}  \bar{Q}_h^k(C_h, e)(\pi_h^{\star}(e| C_h)- \pi_{k,h}(e | C_h)) = \sum_{e\in\Upsilon} \bar{Q}_h^k(C_h, e) \pi_h^{\star}(e| C_h) - \max_{e \in \Upsilon} \bar{Q}_h^k(C_h, e) \leq 0$, so $E_3 \leq 0$. 
Combining yields the bound in Theorem~\ref{thm: planner regret}. Full details are presented in Appendix~\ref{sec: complete proof of planner regret}.

\section{Conclusion}
We propose a novel Markov matching market model and a general framework that incorporates max-weight matching and RL algorithms for efficient learning.
We show that our proposed algorithms achieve sublinear regret under proper structural assumptions.

\bibliography{reference}
\bibliographystyle{ims}

\appendix

\section{Clarification of Notation} \label{sec: tab_notation}

In this section, we provide a comprehensive clarification on the use of notation in this paper. 

Throughout the paper, we use $\cO(\cdot)$ to hide problem-independent constants and use $(\cdot)_{[a,b]}$ to denote the truncation into the range $[a,b]$. 
$\cI$ and $\cJ$ denote the sets of all side-1 and side-2 agents respectively. 
Further, we use $I \in 2^{\cI}$ and $J \in 2^{\cJ}$ to denote any set of participating agents, which are the subsets of $\cI$ and $\cJ$. 

For any given stage $h\in[H]$, we use $s_h$ and $(C_h, I_h, J_h)$ interchangeably when describing any state $s_h \in \cS$ where $\cS= \cC \times 2^{\cI} \times 2^{\cJ}$. Analogously, we also use $a_h$ and $(e_h, X_h, \tau_h)$ interchangeably for the action $a_h \in \cA$ where $\cA= \Upsilon \times \cX \times \cT$ 

We use $\pi = \{\pi_h\}_{h=1}^H$ to denote a policy, where each $\pi_h$ is defined to be a mapping from $\cS$ to a distribution $\Delta_\cA$ on $\cA$. Therefore, for any $h\in[H]$ and $s_h \in \cS$, $\pi_h(\cdot| s_h)$ denotes a probability distribution on $\cA$. Note that because $\cA = \Upsilon \times \cX \times \cT$, the policy $\pi$ is a joint policy. We may slightly abuse the notation in the paper and refer to $\pi$ as the policy restricted on $\Upsilon$ only, whenever it is clear from the context. In such case, we refer to $\pi_h(\cdot| s_h)$ as a distribution on $\Upsilon$ only. 

We also present the following table of notations. The $\pi$ in the superscript can be replaced by $\pi_k$ or $\pi_\star$, where the former refers to the policy in episode $k$, and the latter refers to the optimal policy.

\begin{table}[!ht]
\caption{Notation}
\centering
\renewcommand*{\arraystretch}{1.5}
\begin{tabular}{ >{\centering\arraybackslash}m{2cm} | >{\centering\arraybackslash}m{12cm} } 
\hline\hline
Notation & Meaning \\ 
\hline

$\cC, \cI, \cJ$ & set of contexts, side-1 agents, side-2 agents\\ 

$\Upsilon, \cX, \cT$ & set of planner's actions, all matchings and transfers over all possible subsets of $\cI \times \cJ$\\ 

\hline

$r_h, \bar{r}_h$ & reward, pseudo-reward functions \\ 

$V_h^{\pi}, Q_h^{\pi}, V^{\star}_h$ & value, Q functions under $\pi$, optimal value functions w.r.t. the transition functions $\{\PP_h\}_{h=1}^H$ and reward functions $\{r_h\}_{h=1}^H$ \\ 

$\bar{V}_h^{\pi}, \bar{Q}_h^{\pi}, \bar{V}_h^{\star}$ & pseudo-value, pseudo-Q functions under $\pi$, optimal pseudo-value functions w.r.t. the transition functions $\{\PP_h\}_{h=1}^H$ and reward functions $\{\bar{r}_h\}_{h=1}^H$ \\ 

\hline
$\bar{V}_h^k, \bar{Q}_h^k$ & estimated value, Q functions for stage $h$ in episode $k$ in Algorithm \ref{alg: matching main}\\

\hline
$\pi_k$ & the policy followed by Algorithm \ref{alg: matching main} in episode $k$, where $\pi_k = \{\pi_{k,h}\}_{h=1}^H$\\

\hline
$\pi_{k,h}$ & the policy followed by Algorithm \ref{alg: matching main} at stage $h$ in episode $k$ \\

\hline \hline
\end{tabular}
\label{tab:notation}
\end{table}

\section{Supplementary Information on Matching and Stability}\label{sec: all details about matching stability and SI}

In this section, we review some basics on the matching problem. 
We first introduce the classic problem of (static) matching with transfers and the notion of stability. 
We then recap the primal-dual formulation that provides an efficient way to solve a stable matching \citep{shapley1971assignment}. 
Finally, we give more details about Subset Instability and its properties.

\subsection{Matching with Transferable Utilities}

This section is a supplementary to Section \ref{sec:pre_MMM} in the main text. 
We introduce the two-sided static matching with transferable utilities. 

Denote the sets of participating agents by $I$ and $J$ for two sides respectively. 
A matching $X\subseteq I\times J$ is a set of pairs of agents, and $(i,j)\in X$ means $i\in I$ is matched to $j \in J$. 
Each agent is matched at most once.
We denote by $X(i) = j$ and $X(j) = i$ for any matched pair $(i,j) \in X$, while for any unmatched agent $a \in I \cup J$, we write $X(a) = a$.

Matched agents receive utilities, denoted by $u: I \times J \to \RR$ for agents in $I$ and $v: I \times J \to \RR$ for agents in $J$. 
Specifically, if $(i,j) \in X$, then agent $i$ receives an utility $u(i,j)$ and agent $j$ receives an utility $v(i,j)$. Remaining unmatched agents receive zero utility.
With a slight abuse of notation, we overwrite $u(i,i) = 0$ and $v(j,j) = 0$ for any $i\in I$ and $j \in J$.

In addition, there are utility transfers between (and only between) agents.
We denote the transfer function by $\tau: I \times J \to \RR$ such that for any agent $a \in I \cup J$, $\tau(a)$ is the transfer received by agent $a$. 
Since the transfers are within agents, we have 
\begin{align*}
    \sum_{a \in I \cup J} \tau(a) = 0.
\end{align*}

We denote the market outcome by $(X,\tau)$, under which the net utility received by an agent $i\in I$ is $u(i,j) + \tau(i)$ if $(i,j) \in X$, and similarly for agents in $J$. 

The notion of \emph{stable matching} is as follows.
\begin{definition}[Stable matching]\label{def: stable matching}
    A matching-transfer pair $(X,\tau)$ on $I, J$ is stable if:
    \begin{enumerate}
        \item The net utility of of any agent is non-negative, i.e. 
        \begin{align*}
            & u(i,X(i)) + \tau(i) \geq 0,
            \\ & v(X(j), j) + \tau(j) \geq 0,
        \end{align*}
        for all $i \in I$ and $j \in J$.
        
        \item There are no blocking pairs, i.e. 
        \begin{align*}
            \left[ u(i, X(i)) + \tau(i) \right] + \left[ v(X(j), j) + \tau(j) \right] \geq u(i,j) + v(i,j),
        \end{align*}
        for all pairs $(i,j) \in I \times J$. 
    \end{enumerate}
\end{definition}

Stable matching implies that no matched agents would rather be unmatched and no pair of agents can find a transfer between themselves so that both would rather match with each other than follow $(X,\tau)$. 
The following proposition provides a fundamental and important max-weight interpretation for stable matchings.  

\begin{proposition}[\citealt{shapley1971assignment}]\label{prop: stable is max} For the matching with transfer problem, if $(X,\tau)$ is a stable matching under Definition \ref{def: stable matching}, then $X$ must be the max-weight matching, i.e., 
\begin{align*}
    X  = \arg\max_{X'} \sum_{i\in I, j\in J} u(i,X'(i)) + v(X'(j), j)
\end{align*}where the maximum is over all matchings on $I \times J$.
\end{proposition}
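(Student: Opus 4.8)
The statement to prove is Proposition \ref{prop: stable is max}, attributed to Shapley--Shubik (1971): if $(X,\tau)$ is a stable matching (in the sense of Definition \ref{def: stable matching}), then $X$ is a max-weight matching on $I\times J$.

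\textbf{Proof proposal.} The plan is to compare the weight of $X$ against the weight of an arbitrary competing matching $X'$ by summing the stability inequalities over the edges of $X'$ and the non-negativity inequalities over the agents left unmatched by $X'$. First I would fix an arbitrary matching $X'$ on $I\times J$ and write its weight as $\sum_{(i,j)\in X'}\big(u(i,j)+v(i,j)\big)$. For each such pair $(i,j)\in X'$, the no-blocking-pair condition (part 2 of Definition \ref{def: stable matching}) gives
\begin{align*}
u(i,j)+v(i,j)\leq \big[u(i,X(i))+\tau(i)\big]+\big[v(X(j),j)+\tau(j)\big].
\end{align*}
For each agent $a\in I\cup J$ that is unmatched under $X'$ (i.e. $X'(a)=a$, so it contributes $0$ to the weight of $X'$, using the convention $u(i,i)=v(j,j)=0$), the individual-rationality condition (part 1) gives $0\leq u(a,X(a))+\tau(a)$ when $a\in I$, and similarly for $a\in J$. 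Summing all of these inequalities, the left-hand side telescopes to exactly $\sum_{(i,j)\in X'}\big(u(i,j)+v(i,j)\big)$, the weight of $X'$.

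Next I would argue that the right-hand side of the summed inequality equals the weight of $X$. The key bookkeeping point is that each agent $a\in I\cup J$ appears exactly once on the right-hand side: either $a$ is matched by $X'$ (contributing its net-utility term from the blocking-pair inequality) or $a$ is unmatched by $X'$ (contributing its net-utility term from the individual-rationality inequality). Hence the right-hand side is $\sum_{i\in I}\big(u(i,X(i))+\tau(i)\big)+\sum_{j\in J}\big(v(X(j),j)+\tau(j)\big)$. Now the transfers cancel: $\sum_{i\in I}\tau(i)+\sum_{j\in J}\tau(j)=\sum_{a\in I\cup J}\tau(a)=0$. What remains, $\sum_{i\in I}u(i,X(i))+\sum_{j\in J}v(X(j),j)$, is precisely $\sum_{(i,j)\in X}\big(u(i,j)+v(i,j)\big)$ — the weight of $X$ — because agents unmatched under $X$ contribute $0$ via the conventions $u(i,i)=v(j,j)=0$. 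Chaining the two observations gives $\mathrm{weight}(X')\leq \mathrm{weight}(X)$ for every matching $X'$, which is the claim.

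\textbf{Main obstacle.} There is no deep obstacle here; the proof is a short aggregation of the defining inequalities. The one place to be careful is the counting argument showing that the summed right-hand side assigns to each agent exactly one net-utility term — this requires using that $X'$, being a matching, matches every agent at most once, so the partition of $I\cup J$ into ``$X'$-matched'' and ``$X'$-unmatched'' is well defined, and that the $X'$-matched agents' terms were introduced via $X'$-edges while the $X'$-unmatched agents' terms were introduced via individual rationality. I would also state the conventions $u(i,i)=0$, $v(j,j)=0$ explicitly at the outset (they are already set up in Appendix \ref{sec: all details about matching stability and SI}) so that ``weight of a matching'' unambiguously refers to summing only over matched pairs while the inequalities can be written uniformly over all agents. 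Finally, strictness is not needed — the proposition only asserts $X$ is \emph{a} max-weight matching, so the weak inequalities suffice.
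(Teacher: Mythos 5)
The paper does not prove this proposition; it is cited directly from \citet{shapley1971assignment} as a classical fact, so there is no in-paper argument to compare against. Your proof is correct and is the standard one: summing the no-blocking-pair inequalities over the edges of an arbitrary competing matching $X'$ and the individual-rationality inequalities over the agents that $X'$ leaves unmatched, then cancelling the transfers via $\sum_{a\in I\cup J}\tau(a)=0$, is exactly the aggregation that underlies the weak-duality direction of the Shapley--Shubik LP formulation \eqref{eq: def of LP in apdx}--\eqref{eq: def of DP in apdx} (with $p(a)$ playing the role of the net utility $u(a,X(a))+\tau(a)$, the stability conditions are precisely dual feasibility). Your bookkeeping point about each agent appearing exactly once on the right-hand side is the one place where care is needed, and you handle it correctly by partitioning $I\cup J$ into $X'$-matched and $X'$-unmatched agents and invoking the conventions $u(i,i)=v(j,j)=0$. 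No gaps.
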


Therefore, by Proposition \ref{prop: stable is max}, to maximize the total social welfare (i.e. sum of utilities), it suffices to find a stable matching. 
But how?
This is answered in the next subsection.

\subsection{The Linear Program and Dual Program}\label{sec: detail on linear program and oracle}

In this subsection, we explain how to find a stable matching $(X,\tau)$ given input $I,J,u,v$, which gives rise to the algorithm \texttt{OM} (i.e. Algorithm \ref{alg: optimal matching oracle}) in the main text. 

\citet{shapley1971assignment} showed that, assuming the utility functions are known, the stable $(X,\tau)$ can be found by solving the following linear program and its dual program (recapped from Section \ref{sec:reward_estimation} in the main text):
\begin{align}\label{eq: def of LP in apdx}
    \begin{aligned}
    \cL\cP(I, J, u,v): \qquad \qquad \max_{w\in\RR^{|I|\times|J |}} & \sum_{(i,j)\in I\times J} w_{i,j}\left[u(i,j) + v(i,j) \right] \qquad\qquad\qquad
    \\ \text{s.t.  } \quad  & \sum_{j\in J_h} w_{i,j} \leq 1, \forall \ i\in I , 
    \\ & \sum_{i\in I_h} w_{i,j}\leq 1, \forall \ j\in J , 
    \\& w_{i,j}\geq 0, \forall \ (i,j)\in I\times J ,
    \end{aligned}
\end{align}and its dual program:
\begin{align}\label{eq: def of DP in apdx}
    \cD\cP(I,J, u,v): \qquad \qquad \min_{p: I\cup J \to \RR^{+}} & \sum_{a \in I \cup J} p(a)
    \\ \text{s.t.  } \quad  & p(i) + p(j) \geq u(i,j) + v(i,j), \ \forall(i,j) \in I\times J . \notag
\end{align}

Denote the solution pair to the primal-dual problems by $(w,p)$. \citet{shapley1971assignment} proved that $(w,p)$ leads to a max-weight stable matching-transfer pair $(X,\tau)$. 
Specifically, it is proved that the vector $w$ must have integer entries, i.e., $w_{i,j} = 0$ or $1$, which naturally induces a matching $X$ such that $(i,j)\in X$ if and only if $w_{i,j}=1$.
Correspondingly, the transfers are $\tau(i) = p(i) - u(i, 
X(i))$ for $i \in I$, and similarly for $j\in J$. 

The above procedure constitutes the subroutine oracle \texttt{OM} as displayed in Algorithm \ref{alg: optimal matching oracle}.
It takes as input the sets of participating agents and estimated utility functions, then outputs the stable matching $(X,\tau)$ by solving the primal-dual linear program described above. 
Note that the matching is only stable with respect to the estimated utility functions. 

\subsection{Details about Subset Instability}\label{sec: details of SI}
Next, we review the notion of Subset Instability and its properties. 
We refer the interested reader to \citep{jagadeesan2021learning} for the full details.

Given a matching-pair $(X,\tau)$, define its utility difference as
\begin{align}\label{eq: utility difference}
    \Bigg[ \max_{X'} \sum_{i \in I, j \in J} u(i, X'(i)) + v(X'(j), j) \Bigg] - \Bigg[  \sum_{i \in I, j \in J} u(i, X(i)) + v(X(j), j) \Bigg].
\end{align}
Recall the definition of Subset Instability:
\begin{repdefinition}{def: subset instatbility}[Subset Instability, \citealt{jagadeesan2021learning}]
    Given any agent sets $I, \ J$ and utility functions $u, \ v: I \times J \to \RR$, the Subset Instability $\SI(X, \tau; I,J,u,v)$ of the matching and transfer $(X,\tau)$ is defined as 
    \begin{align*}
        \max_{I'\times J' \subseteq I\times J} \Big[  \Big( \max_{X'} \sum_{i \in I'}  u(i, X'(i)) + \sum_{j \in J'}  v(X'(j), j) \Big) - \Big( \sum_{i\in I'} \left( u(i,X(j)) + \tau(i) \right)  \Big) - \Big( \sum_{j\in J'} \left( v(X(j),j) + \tau(j) \right)  \Big) \Big]
    \end{align*}
    where $X(\cdot)$ and $X'(\cdot)$ denotes the matched agent in matching $X$ and $X'$ respectively.
\end{repdefinition}

By definition, Subset Instability indicates whether there exists any subset $I' \times J'$ of agents who can achieve a higher total utility by taking some alternative matching $X'$ among themselves other than the current matching-transfer pair $(X,\tau)$. 
Thus, Subset Instability is an upper bound of the total utility difference, and quantifies the distance from a proposed matching to the optimal stable matching, as summarized in the following proposition.

\begin{proposition}[Proposition 4.4 in \citealt{jagadeesan2021learning}]\label{prop: properties of SI}
The following holds for Subset Instability
\begin{enumerate}
    \item Subset Instability is always nonnegative and is zero if and only if $(X,\tau)$ is stable matching.
    \item Subset Instability is Lipschitz continuous with respect to the $\ell_\infty$ norm of the utility functions.
    \begin{align*}
        & \left| \SI(X,\tau; I,J,u,v) - \SI(X,\tau; I,J, \tilde{u}, \tilde{v}) \right|  \leq 2 \Big( \sum_{i \in I} \left\| u(i,\cdot) - \tilde{u}(i,\cdot) \right\|_{\infty}  - \sum_{j \in J} \left\| v(\cdot,j) - \tilde{v}(\cdot ,j) \right\|_{\infty}  \Big).
    \end{align*}
    \item Subset Instability is always at least the utility difference \eqref{eq: utility difference}.
\end{enumerate}
\end{proposition}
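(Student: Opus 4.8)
The plan is to prove the three claims separately, since each isolates a distinct structural feature of the max--max expression defining $\SI$. Throughout I write the inner objective, for a fixed subset $I'\times J'\subseteq I\times J$ and a fixed alternative matching $X'$ on $I'\times J'$, as the difference of an \emph{alternative-welfare} term $\sum_{i\in I'}u(i,X'(i))+\sum_{j\in J'}v(X'(j),j)$ and the fixed \emph{net-utility} term $\sum_{i\in I'}(u(i,X(i))+\tau(i))+\sum_{j\in J'}(v(X(j),j)+\tau(j))$, so that $\SI$ is the maximum of this difference over all subsets and all $X'$, using the convention $u(i,i)=v(j,j)=0$ for unmatched agents.

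For the nonnegativity in claim (1), I would take $I'=J'=\emptyset$, which makes every sum empty and the inner value equal to $0$, so the maximum is at least $0$. The two halves of the ``zero iff stable'' equivalence are then obtained by testing special subsets. For the direction $\SI=0\Rightarrow$ stable, I would specialize to singletons $I'=\{i\},\,J'=\emptyset$ (and symmetrically $I'=\emptyset,\,J'=\{j\}$) to recover individual rationality, and to pairs $I'=\{i\},\,J'=\{j\}$ to recover the no-blocking-pair inequality; in each case the relevant condition of Definition~\ref{def: stable matching} is precisely the assertion that the corresponding inner term is $\leq 0$, which must hold because $\SI=0$ and every term is dominated by the maximum.

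The converse, stability $\Rightarrow\SI=0$, is the step I expect to be the main obstacle, since it must control \emph{every} subset and \emph{every} alternative matching simultaneously rather than a finite list of test cases. The plan is to fix arbitrary $I',J'$ and $X'$ and decompose $I'\cup J'$ according to $X'$: for each matched pair $(i,j)\in X'$ apply the no-blocking inequality $u(i,j)+v(i,j)\leq(u(i,X(i))+\tau(i))+(v(X(j),j)+\tau(j))$, and for each agent left unmatched by $X'$ apply individual rationality $0\leq u(i,X(i))+\tau(i)$ (resp.\ for $j$). Summing these inequalities over all agents of $I'\cup J'$, each of which appears exactly once, the left-hand side collapses to the alternative-welfare term and the right-hand side to the net-utility term, giving inner value $\leq 0$. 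Maximizing over $X'$ and over subsets yields $\SI\leq 0$, and combined with nonnegativity this forces $\SI=0$.

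For claim (2) I would invoke that a maximum of functions is nonexpansive, $|\max_\alpha g_\alpha(x)-\max_\alpha g_\alpha(y)|\leq\sup_\alpha|g_\alpha(x)-g_\alpha(y)|$, applied once to the outer maximum over subsets and once to the inner maximum over $X'$; this reduces the bound to estimating, for fixed $I',J',X'$, how much the two explicit sums change when $(u,v)$ is replaced by $(\tilde u,\tilde v)$. Each agent contributes at most $\|u(i,\cdot)-\tilde u(i,\cdot)\|_\infty$ or $\|v(\cdot,j)-\tilde v(\cdot,j)\|_\infty$, and the $\tau$ terms cancel since the transfers are unchanged, so the alternative-welfare term and the net-utility term each contribute one copy of $\sum_{i\in I}\|u(i,\cdot)-\tilde u(i,\cdot)\|_\infty+\sum_{j\in J}\|v(\cdot,j)-\tilde v(\cdot,j)\|_\infty$, producing the factor of $2$. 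Finally, claim (3) is immediate by restricting the outer maximum to the single choice $I'=I,\,J'=J$: the resulting lower bound on $\SI$ contains the transfer sum $\sum_{i\in I}\tau(i)+\sum_{j\in J}\tau(j)=\sum_{a\in I\cup J}\tau(a)=0$, which vanishes, leaving exactly the utility difference in \eqref{eq: utility difference}.
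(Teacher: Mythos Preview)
The paper does not supply its own proof of this proposition: it is quoted verbatim as Proposition~4.4 of \citet{jagadeesan2021learning} and used as a black box (notably in the proof of Proposition~\ref{prop: sum of two regrets} and in the discussion of Subset Instability). So there is no in-paper argument to compare against.

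That said, your argument is sound on all three claims. For claim~(1), the empty subset gives nonnegativity; singletons recover individual rationality and pairs recover no-blocking, yielding $\SI=0\Rightarrow$ stable; and your decomposition of an arbitrary $(I',J',X')$ into $X'$-matched pairs (handled by no-blocking) and $X'$-unmatched agents (handled by individual rationality) correctly gives the converse, since each agent of $I'\cup J'$ contributes exactly once on each side. For claim~(3), restricting to $I'=I,\,J'=J$ and using $\sum_{a\in I\cup J}\tau(a)=0$ is exactly right. For claim~(2), the nonexpansiveness of $\max$ together with the per-agent bound is the standard route and gives the factor of $2$.

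One remark: the inequality as printed in the paper has a minus sign between the two sums on the right-hand side; this is evidently a typographical slip (the bound would otherwise be vacuous or negative). Your argument establishes the intended inequality with a plus sign, which is the correct Lipschitz statement.
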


In our problem, this allows us to bound the total regret of the agents by the sum of Subset Instability of matchings across all episodes, as reflected in Proposition \ref{prop: sum of two regrets}.

\section{Proof of the Main Theory}\label{apdx: proof of main}

\subsection{Proof of Proposition \ref{prop: sum of two regrets}}\label{sec: proof of prop sum of two regrets}
\begin{proof}[Proof of Proposition \ref{prop: sum of two regrets}]
By \eqref{eq: regret total}, the total regret can be written as 
    \begin{align*}
        R(K) & = \sum_{k=1}^K \left[ V_1^{\star}(s) - V_1^{\pi_k} (s) \right] = \sum_{k=1}^K \left[ V_1^{\star}(s) - \bar{V}_1^{\pi_k} (s) \right] + \sum_{k=1}^K \left[\bar{V}_1^{\pi_k} (s) - V_1^{\pi_k} (s) \right],
    \end{align*}where $\bar{V}_1^{\pi_k}$ is the pseudo-value function defined by \eqref{eq: pseudo-value function} corresponding to the pseudo-reward $\bar{r}_h$ (defined by \eqref{eq: pseudo-reward}) and induced by the policy $\pi_k$. Note that here we only care about the $\Upsilon$ part of $\pi_k$ since matching-transfer has been maximized out by the definition of $\bar{r}_h$. 
    
    Now for any policy $\pi \in \Pi$ where $\pi = \{\pi_h\}_{h \in [H]}$, there exists a counterpart $\pi' =\{\pi'_h\}_{h \in [H]}$, such that $\pi_h(C| s) = \pi'_h(C| s)$ for all $C \in \Upsilon$ and $s \in \cS$, whereas for the matching part $\pi'$ always chooses the stable matching w.r.t. the true and unknown utility functions $u_h(C_h,e_h,\cdot,\cdot)$ and $v_h(C_h,e_h,\cdot,\cdot)$. Since $\pi'$ can be viewed as a function of $\pi$, we write $\pi' = \pi'(\pi)$.
    Since the matching does not affect the context transition by assumption, and the stable matching maximizes total utility by Proposition \ref{prop: stable is max}, we then have that 
    \begin{align*}
        V_1^{\star} = \max_{\pi \in \Pi} V_1^{\pi} & = \max_{\pi \in \Pi} \EE_{\pi}\bigg[  \sum_{h=1}^H r_h(s_h,a_h) \ \bigg| \  s_1=s; \ a_h \sim \pi_h(\cdot|s_h), s_{h+1}\sim\PP_h(\cdot|s_h,a_h), \forall \ h \in [H] \bigg]
        \\ & = \max_{\pi'(\pi): \pi \in \Pi } \EE_{\pi'}\bigg[  \sum_{h=1}^H r_h(s_h,a_h) \ \bigg| \  s_1=s; \ a_h \sim \pi'_h(\cdot|s_h), s_{h+1}\sim\PP_h(\cdot|s_h,a_h), \forall \ h \in [H] \bigg]
        \\ & = \max_{\pi'(\pi): \pi \in \Pi} \EE_{\pi'}\bigg[  \sum_{h=1}^H \bar{r}_h(s_h,e_h) \ \bigg| \  s_1=s; \ e_h \sim \pi'_h(\cdot|s_h), s_{h+1}\sim\PP_h(\cdot|s_h,e_h), \forall \ h \in [H] \bigg] = \bar{V}_1^{\star},
    \end{align*}where the fourth step is by definition of $\bar{r}_h$. It follows that 
    \begin{align}\label{eq: proof of R_k bound eq 2}
        R(K) & = \sum_{k=1}^K \left[ \bar{V}_1^{\star}(s) - \bar{V}_1^{\pi_k} (s) \right] + \sum_{k=1}^K \left[\bar{V}_1^{\pi_k} (s) - V_1^{\pi_k} (s) \right] = R^P(K) + \sum_{k=1}^K \left[\bar{V}_1^{\pi_k} (s) - V_1^{\pi_k} (s) \right]. 
    \end{align}
    The second term in the R.H.S. can be written as
    \begin{align*}
        & \sum_{k=1}^K \left[\bar{V}_1^{\pi_k} (s) - V_1^{\pi_k} (s) \right]  = \EE_{\pi_k}\bigg[  \sum_{h=1}^H \bar{r}_h(s_h,e_h) - r_h(s_h, a_h)\ \bigg| \  s_1=s; \ a_h \sim \pi_{k,h}(\cdot|s_h), s_{h+1}\sim\PP_h(\cdot|s_h,a_h), \forall \ h \in [H] \bigg].
    \end{align*}
    Note that $\bar{r}_h(s_h,e_h) - r_h(s_h, a_h)$ is exactly the utility difference defined by \eqref{eq: utility difference}. Since Subset Instability is at least the utility difference by Proposition \ref{prop: properties of SI}, it follows that 
    \begin{align*}
        \sum_{k=1}^K \left[\bar{V}_1^{\pi_k} (s) - V_1^{\pi_k} (s) \right] &\leq \sum_{k=1}^K \EE_{\pi_k}\bigg[  \sum_{h=1}^H \SI(s_h,a_h, u_h,v_h)\ \bigg| \  s_1=s; \ a_h \sim \pi_{k,h}(\cdot|s_h), s_{h+1}\sim\PP_h(\cdot|s_h,a_h), \forall \ h \in [H] \bigg]
        \\ 
        & = R^M(K),
    \end{align*}where the last step is by the definition of $R^M(K)$ in \eqref{eq: regret agents}. Plugging into \eqref{eq: proof of R_k bound eq 2}, we get 
    \begin{align*}
        R(K) \leq R^P(K) + R^M(K). 
    \end{align*}
    This completes the proof.
\end{proof}

\subsection{Bounds for Regression Estimators}

We first bound the norm of the regression estimators in the algorithm.

\begin{lemma}\label{lem: bound of regression estimator of PV}
    The regression estimators $\wb_h^k$ in Algorithm \ref{alg: matching main} satisfy
    \begin{align*}
        \left\| \wb_h^k \right\|_2 & \leq \Big( \sum_{l=h}^{H} W_l \Big) \cdot \sqrt{\frac{dk}{\lambda}}, \quad\left\|\btheta_h^k \right\|_2 \leq \sqrt{\frac{d^2 k \cdot \min\{|\cI|, |\cJ| \}}{\lambda}}, \quad \left\|\bgamma_h^k \right\|_2 \leq \sqrt{\frac{d^2 k \cdot \min\{|\cI|, |\cJ| \}}{\lambda}}.
    \end{align*}
\end{lemma}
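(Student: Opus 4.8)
The plan is to bound each of the three regression estimators by controlling the least-squares solution with a Cauchy--Schwarz / operator-norm argument, together with the trivial bound $\bSigma_h^k \succeq \lambda \bI$ (resp. $\bLambda_h^k \succeq \lambda \bI$), exactly as in the standard linear-MDP analysis of \citet{jin2020provably}, but keeping careful track of the fact that each episode contributes \emph{multiple} feature vectors (one per matched pair).

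For $\wb_h^k$, I would start from the closed form $\wb_h^k = (\bLambda_h^k)^{-1} \sum_{t=1}^{k-1} \bpsi(C_h^t,e_h^t)\, \bar V_{h+1}^k(C_{h+1}^t)$. Then for any unit vector $\xv$,
\begin{align*}
    |\xv^\top \wb_h^k| &= \Big| \sum_{t=1}^{k-1} \xv^\top (\bLambda_h^k)^{-1} \bpsi(C_h^t,e_h^t)\, \bar V_{h+1}^k(C_{h+1}^t) \Big| \\
    &\leq \Big( \sum_{l=h}^H W_l \Big) \sum_{t=1}^{k-1} \big| \xv^\top (\bLambda_h^k)^{-1} \bpsi(C_h^t,e_h^t) \big|,
\end{align*}
using that $\bar V_{h+1}^k$ is truncated into $[0,\sum_{l=h+1}^H W_l] \subseteq [0,\sum_{l=h}^H W_l]$ in the \texttt{QE} oracle. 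Applying Cauchy--Schwarz over $t$ and then the standard inequality $\sum_{t=1}^{k-1} \bpsi^t (\bpsi^t)^\top \preceq \bLambda_h^k$ gives $\sum_{t=1}^{k-1} \big( \xv^\top (\bLambda_h^k)^{-1}\bpsi^t \big)^2 \leq \xv^\top (\bLambda_h^k)^{-1} \xv \leq 1/\lambda$, so the sum is at most $\sqrt{(k-1)/\lambda} \leq \sqrt{dk/\lambda}$ (absorbing the harmless factor $d\ge 1$ to match the stated form). Taking the supremum over $\xv$ yields the claimed bound on $\|\wb_h^k\|_2$.

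For $\btheta_h^k$ (and $\bgamma_h^k$ identically), the only new ingredient is the double sum over episodes \emph{and} matched pairs. From the closed form in \eqref{eq: utility parameter estimate}, for a unit vector $\xv$,
\begin{align*}
    |\xv^\top \btheta_h^k| \leq \sum_{t=1}^{k-1} \sum_{(i,j)\in X_h^t} \big| \xv^\top (\bSigma_h^k)^{-1} \bPhi(C_h^t,e_h^t,i,j) \big| \cdot |u_h^t(i,j)|,
\end{align*}
where $|u_h^t(i,j)|\le 1$ by the truncation in \eqref{eq: utilitu function estimate} (or Assumption~\ref{assump: matching complexity}). There are at most $\min\{|\cI|,|\cJ|\}$ matched pairs per episode, so the total number of summands is at most $(k-1)\min\{|\cI|,|\cJ|\}$. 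Cauchy--Schwarz over all these summands, followed by $\sum_{t}\sum_{(i,j)\in X_h^t} \bPhi \bPhi^\top \preceq \bSigma_h^k$, bounds the sum-of-squares by $\xv^\top (\bSigma_h^k)^{-1}\xv \le 1/\lambda$, giving $|\xv^\top \btheta_h^k| \le \sqrt{(k-1)\min\{|\cI|,|\cJ|\}/\lambda}$; since $\bPhi \in \RR^{d^2}$ one writes this as $\sqrt{d^2 k \min\{|\cI|,|\cJ|\}/\lambda}$ in the stated form. Taking the sup over $\xv$ finishes the bound; the same computation with $v_h^t$ in place of $u_h^t$ handles $\bgamma_h^k$.

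I do not anticipate a genuine obstacle here — this is a routine norm estimate. The one point needing a little care is getting the dimensional/combinatorial factors to line up with the exact form stated (the $d$ versus $d^2$ discrepancy is cosmetic, coming from the ambient dimension $d^2$ of $\bPhi$, and the $\min\{|\cI|,|\cJ|\}$ factor is the per-episode bound on $|X_h^t|$); one should also note $\bar V_{h+1}^k$ is well-defined and bounded because it is computed before $\wb_h^k$ in the backward pass, so there is no circularity. Beyond that, it is just Cauchy--Schwarz plus $\bSigma_h^k, \bLambda_h^k \succeq \lambda \bI$.
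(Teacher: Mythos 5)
Your proof is correct and follows essentially the same route as the paper: write the ridge estimator in closed form, bound $|\xv^\top \wb_h^k|$ (resp. $|\xv^\top\btheta_h^k|$) via Cauchy--Schwarz, and use $\bLambda_h^k,\bSigma_h^k \succeq \lambda \bI$ together with the boundedness of $\bar V_{h+1}^k$ and of the utilities. The only (immaterial) difference is where the Gram matrix is used: the paper splits the Cauchy--Schwarz product as $\sqrt{\sum_t \xv^\top(\bLambda_h^k)^{-1}\xv}\cdot\sqrt{\sum_t\bpsi_t^\top(\bLambda_h^k)^{-1}\bpsi_t}$ and invokes Lemma \ref{lem: matrix inequality gram sum} to get the $d$ (resp.\ $d^2$) factor, while you pull out $\sqrt{k-1}$ and bound $\sum_t(\xv^\top(\bLambda_h^k)^{-1}\bpsi_t)^2\le \xv^\top(\bLambda_h^k)^{-1}\xv$, which is marginally tighter and then padded by $d\ge 1$ to match the stated form.
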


\begin{proof}[Proof of Lemma \ref{lem: bound of regression estimator of PV}]
    Consider $\wb_h^k$ for arbitrary $h,k$. For any vector $\mathbf{v} \in \RR^d$,
    \begin{align*}
        \left| \mathbf{v}^\top \wb_h^k\right| & = \Big| \left(\bLambda_h^k\right)^{-1} \sum_{t=1}^{k-1}\bpsi(C_h^t, e_h^t) \bar{V}_{h+1}^k(C_{h+1}^t) \Big| \leq \sum_{t=1}^{k-1} \left| \mathbf{v}^\top \left( \bLambda_h^k \right)^{-1} \bpsi(C_h^t, e_h^t) \right| \cdot \sum_{l=h}^{H} W_l
        \\ &\leq \sum_{l=h}^{H} W_l \cdot \sqrt{\Big[ \sum_{t=1}^{k-1} \mathbf{v}^\top \left( \bLambda_h^k\right)^{-1} \mathbf{v}\Big]\cdot \Big[ \sum_{t=1}^{k-1}\bpsi(C_h^t,e_h^t)^\top \left( \bLambda_h^k\right)^{-1}\bpsi(C_h^t,e_h^t)\Big]}
        \\ & \leq \Big( \sum_{l=h}^{H} W_l \Big) \cdot \sqrt{dk/\lambda} \cdot \| \mathbf{v}\|_2,
    \end{align*}where the first inequality holds because of the truncation of $\bar{Q}_h^k$ and $\bar{V}_h^k(C,I,J) = \max_{e} \bar{Q}_h^k(C,e,I,J)$, the second inequality is by the Cauchy-Schwarz inequality, and the last inequality is by Lemma \ref{lem: matrix inequality gram sum}. Since above holds for any $\mathbf{v}$, we conclude that
    \begin{align*}
        \left\| \wb_h^k \right\|_2 & = \max_{\mathbf{v}\in \RR^d: \|\mathbf{v}\|_2=1} \left| \mathbf{v}^\top\wb_h^k\right| \leq \Big( \sum_{l=h}^{H} W_l\Big) \cdot \sqrt{dk/\lambda}. 
    \end{align*}
    
    For $\btheta_h^k$, let $\mathbf{v} \in \RR^{d\times d}$. Then by the same analysis we have
    \begin{align*}
        \left| \mathbf{v}^\top \btheta_h^k \right| & = \Big| \mathbf{v}^\top \left( \bSigma_h^k \right)^{-1} \sum_{t=1}^{k-1} \sum_{(i,j)\in X_h^t}\bPhi(C_h^t, e_h^t, i,j) u_h^t (i,j)\Big| \leq \sum_{t=1}^{k-1} \sum_{(i,j)\in X_h^t} \left| \mathbf{v}^\top \left( \bSigma_h^k\right)^{-1} \bPhi(C_h^t, e_h^t, i,j) \right| \cdot 1
        \\ & \leq \sqrt{\Big[ \sum_{t=1}^{k-1} \sum_{(i,j)\in X_h^t}  \mathbf{v}^\top \left( \bSigma_h^k\right)^{-1} \mathbf{v} \Big] \cdot \Big[ \sum_{t=1}^{k-1} \sum_{(i,j)\in X_h^t} \bPhi(C_h^t, e_h^t, i,j)^\top \left( \bSigma_h^k\right)^{-1} \bPhi(C_h^t, e_h^t, i,j) \Big] \cdot } 
        \\ & \leq \|\mathbf{v}\|_2 \cdot \sqrt{k \min\{|\cI|, |\cJ|\} \cdot d^2/\lambda},
    \end{align*}which implies
    \begin{align*}
        \left\|\btheta_h^k \right\|_2 \leq \sqrt{\frac{d^2 k \cdot \min\{|\cI|, |\cJ| \}}{\lambda}}.
    \end{align*}The same holds for $\bgamma_h^k$. 
\end{proof}

\subsection{Proof of Theorem \ref{thm: agents regret}}\label{sec: proof of agents regret}

We present the complete proof of Theorem \ref{thm: agents regret}.

\begin{proof}[Proof of Theorem \ref{thm: agents regret}]
    Recall from Proposition \ref{prop: properties of SI} that Subset Instability is at least the utility difference. We thus have
    \begin{align*}
    R^M (K) &= \sum_{k=1}^K \left[ \max_{\substack{\pi\in \Pi }} V_1^{\pi} (s_1) - V_1^{\pi_k} (s_1) \right] \leq\sum_{k=1}^K \EE_{\pi_k}\left[\sum_{h=1}^H \SI(s_h,a_h, u_h, v_h)\right] \\ 
    &= \sum_{k=1}^K \bigg\{ \underbrace{ \EE_{\pi_k} \left[ \sum_{h=1}^H  \SI (s_h,a_h,u_h,v_h) \right]}_{R^M_k} - \underbrace{\sum_{h=1}^H \SI(s_h^k, a_h^k, u_h, v_h)}_{\tilde{R}_k^M}  \bigg\} + \sum_{k=1}^K \sum_{h=1}^H \SI(s_h^k, a_h^k, u_h, v_h) .
    \end{align*} 
    
    For each $k\in[K]$, let $\cF_{k-1}$ denote all the history until the beginning of Episode $k$. Then $\pi_k \sim \cF_{k-1}$ and $R_k^M \sim \cF_{k-1}$. Furthermore, we have $R_k^M  = \EE[ \tilde{R}_k^M \mid \cF_{k-1}]$, i.e., $R_k^M$ is the conditional expectation of the realized quantity $\tilde{R}_k^M$. We can view \eqref{eq: agents regret martingale} as the sum of a martingale difference sequence. From Definition \ref{def: subset instatbility}, it holds almost surely that 
    \begin{align*}
        \left| \EE_{\pi_k} \left[\SI(s_h,a_h,u_h,v_h) \right] - \SI(s_h^k, a_h^k, u_h, v_h) \right| \leq W_h.
    \end{align*}

    By Lemma \ref{lem: azuma hoeffding vanilla}, we have that, for any $0<\delta<1$, with probability at least $1-\delta$, 
    \begin{align}\label{eq: proof agents regret martingale diff}
    \sum_{l=1}^K \left( R_k^M - \tilde{R}_k^M \right) \leq \Big(\sum_{h=1}^H W_h \Big) \sqrt{2K\log\left( \frac{2}{\delta} \right)} . 
    \end{align}
    To bound the second term $\sum_{k=1}^K \tilde{R}_k^M$, note that by Lemma \ref{lem: ucb for utility} and Lemma \ref{lem: SI bound by ucb width}, we have 
    \begin{align}\label{eq: proof agents regret eq 2}
        \sum_{k=1}^K \tilde{R}_k^M & \leq \sum_{k=1}^K \sum_{h=1}^H \sum_{(i,j)\in X_h^k} \left( 4 \beta_u \left\| \bPhi(C_h^k, e_h^k, i,j) \right\|_{(\bSigma_h^k)^{-1}}\right)  = 4 \beta_u \sum_{h=1}^H \Big(  \sum_{k=1}^K \sum_{(i,j)\in X_h^k} \left\| \bPhi(C_h^k, e_h^k, i,j) \right\|_{(\bSigma_h^k)^{-1}} \Big). 
    \end{align}Note that by definition, we have 
    \begin{align*}
        \bSigma_h^{k+1} = \bSigma_h^k + \sum_{(i,j)\in X_h^k} \bPhi(C_h^k, e_h^k, i,j)\bPhi(C_h^k, e_h^k, i,j)^\top.
    \end{align*}
    By picking $\lambda = 1$ and the assumption that $\|\bPhi\| \leq 1$, we have
    \begin{align*}
        \sum_{(i,j)\in X_h^k} \bPhi(C_h^k, e_h^k, i,j)\bPhi(C_h^k, e_h^k, i,j)^\top \preccurlyeq \min(|I_h|, |J_h|) \cdot \bI_{d^2} \preccurlyeq \min(|I_h|, |J_h|) \cdot \bSigma_h^k. 
    \end{align*}It follows that $\bSigma_h^{k+1} \preccurlyeq ( 1+ \min(|I_h|, |J_h|))\cdot \bSigma_h^k $, and thus $(\bSigma_h^k)^{-1} \preccurlyeq ( 1+ \min(|I_h|, |J_h|))\cdot(\bSigma_h^{k+1})^{-1}$. Combining with \eqref{eq: proof agents regret eq 2}, we get that 
    \begin{align}\label{eq: proof agents regret eq 3}
        \sum_{k=1}^K \tilde{R}_k^M & \leq 4 \beta_u \sum_{h=1}^H \Big( \sqrt{1+ \min(|I_h|, |J_h|)}\cdot \sum_{k=1}^K \sum_{(i,j)\in X_h^k} \left\| \bPhi(C_h^k, e_h^k, i,j) \right\|_{(\bSigma_h^{k+1})^{-1}} \Big) \notag 
        \\ & \leq 4\sqrt{2} \beta_u \sum_{h=1}^H \sqrt{ \min(|I_h|, |J_h|) } \cdot \Big( \sum_{k=1}^K \sum_{(i,j)\in X_h^k} \left\| \bPhi(C_h^k, e_h^k, i,j) \right\|_{(\bSigma_h^{k+1})^{-1}}\Big),
    \end{align}where the second step is by $\min(|I_h|, |J_h|)\geq 1$. To bound the summation in the bracket, we seek to use the elliptical potential lemma. However, note tat the telescoping sum involves several utility observations (i.e. all $(i,j) \in X_h^k$) at one time, instead of a single observation. 
    To address this issue, for any $(k,h)$, let's index all the pairs $(i,j) \in X_h^k$ by an index $m=1,\cdots, |X_h^k|$. Here the order of the indexing does not matter. With a slight abuse of notation, we denote $\bPhi_h^k(m) = \bPhi(C_h^k, e_h^k, i, j)$ if $(i,j)$ has index $m$ in our indexing. 
    We also define, for $n = 1 ,\cdots, |X_h^k|$, that 
    \begin{align*}
        \bSigma_h^{k+1}(n) = \bSigma_h^k + \sum_{m=1}^n \bPhi_h^k(m) \bPhi_h^k(m)^\top.
    \end{align*}By the above definition, we have $\bSigma_h^k(n_1) \preccurlyeq \bSigma_h^k(n_2)$ for all $1\leq n_1 < n_2 \leq |X_h^k|$, and $\bSigma_h^{k+1} = \bSigma_h^{k+1}(|X_h^k|)$. It follows that 
    \begin{align*}
        \sum_{k=1}^K \sum_{(i,j)\in X_h^k} \left\| \bPhi(C_h^k, e_h^k, i,j) \right\|_{(\bSigma_h^{k+1})^{-1}} & = \sum_{k=1}^K \sum_{(i,j)\in X_h^k} \left\| \bPhi(C_h^k, e_h^k, i,j) \right\|_{(\bSigma_h^{k+1}(|X_h^k|))^{-1}} 
        \\ & = \sum_{k=1}^K \sum_{m=1}^{|X_h^k|} \left\| \bPhi_h^k(m) \right\|_{(\bSigma_h^{k+1}(|X_h^k|))^{-1}} 
        \\ & \leq \sum_{k=1}^K \sum_{m=1}^{|X_h^k|} \left\| \bPhi_h^k(m) \right\|_{(\bSigma_h^{k+1}(m))^{-1}} 
        \\ & \leq \sqrt{K \cdot \min(|I_h|, |J_h|) \cdot d^2 \log\left( \frac{K \min(|I_h|, |J_h|) + d^2}{d^2}\right)} ,
    \end{align*} where the first inequality is by $\bSigma_h^k(|X_h^k|)^{-1} \preccurlyeq \bSigma_h^k(m)^{-1}$, and the second inequality is by Lemma \ref{lem: elliptical potential} with $\lambda = 1$, $\bPhi \in \RR^{d^2}$, and $|X_h^k| \leq \min(|I_h, J_h|)$. Combining with \eqref{eq: proof agents regret eq 3}, we get that 
    \begin{align*}
        \sum_{k=1}^K \tilde{R}_k^M \leq 4\sqrt{2} \beta_u d \Big( \sum_{h=1}^H \min(|I_h|, |J_h|) \Big) \cdot \sqrt{K \log\left( \frac{K \min(|I_h |, |J_h|)+d^2}{d^2}\right)}. 
    \end{align*}The final bound follows by plugging in the expression of $\beta_u$, and using the fact that $W_h \leq \min\{|I_h|, |J_h|\}$. The $1-2\delta$ probability comes from the union bound of the event of Lemma \ref{lem: ucb for utility} and the event of \eqref{eq: proof agents regret martingale diff}.

\end{proof}

\subsection{Proof of Theorem \ref{thm: planner regret}}\label{sec: complete proof of planner regret}

To prove Theorem \ref{thm: planner regret}, we need the following two lemmas which is helpful for bounding $E_1$ and $E_2$. 

\begin{lemma}[Proof in Section \ref{sec: proof of delta negative}]\label{lem: E1 bound delta negative}
    Under the setting of Theorem \ref{thm: planner regret}, with probability at least $1-\delta$, for all $(h,k)\in[H]\times [K]$ and $(C,e) \in \cC\times\Upsilon$, it holds that 
    \begin{align*}
        -2\beta_V \cdot \| \bpsi(C,e) \|_{ {(\bLambda_h^{k})}^{-1}} \leq \delta_h^k(C,e) \leq 0.
    \end{align*}
\end{lemma}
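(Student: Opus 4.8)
\textbf{Proof proposal for Lemma \ref{lem: E1 bound delta negative}.}

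The plan is to control $\delta_h^k(C,e) = [\bar{r}_h + \PP_h \bar{V}_{h+1}^k - \bar{Q}_h^k](C,e)$ by comparing each of its three pieces --- the true pseudo-reward, the true one-step transition applied to the current value estimate, and the algorithm's $\bar{Q}_h^k$ --- against their estimated counterparts. By construction (subroutine \texttt{QE}) we have $\bar{Q}_h^k = (\bar{r}_h^k + \hat\PP_h \bar{V}_{h+1}^k)_{[0,\sum_{l\geq h} W_l]}$, so the difference decomposes into (i) the reward-estimation error $\bar{r}_h - \bar{r}_h^k$, (ii) the transition-estimation error $\PP_h \bar{V}_{h+1}^k - \hat\PP_h \bar{V}_{h+1}^k$, and (iii) a nonnegative slack from the upper truncation at $\sum_{l\geq h} W_l$ (the lower truncation at $0$ can only help us since $\bar r_h + \PP_h \bar V_{h+1}^k \geq 0$ makes removing it harmless for the direction $\delta_h^k \le 0$, once we argue the truncated quantity is itself nonnegative). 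For piece (i), Lemma \ref{lem: planner_optimism} already gives $0 \leq \bar{r}_h^k - \bar{r}_h \leq \sum_{(i,j)\in X_h^k}(b_{u,h} + b_{v,h})$, so this contributes a nonpositive term to $\delta_h^k$, with magnitude bounded by the utility bonus sum.

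For piece (ii), I would invoke the standard LSVI-type self-normalized concentration argument (as in \citet{jin2020provably}): conditioned on the event of Lemma \ref{lem: ucb for utility} and on a further high-probability event, one shows
\begin{align*}
\left| \bpsi(C,e)^\top \wb_h^k - \PP_h \bar{V}_{h+1}^k(C,e) \right| \leq \beta_V \|\bpsi(C,e)\|_{(\bLambda_h^k)^{-1}}
\end{align*}
for all $(h,k,C,e)$ simultaneously. The technical content here is (a) a covering-number bound over the class of value functions $\bar V_{h+1}^k$ that the algorithm can produce --- these are parametrized by $\wb_{h+1}^k$ (whose norm is controlled by Lemma \ref{lem: bound of regression estimator of PV}) plus the reward estimates (parametrized by $\btheta_{h+1}^k, \bgamma_{h+1}^k$, again norm-bounded) plus the bonus matrices --- and (b) plugging the resulting log-covering term into the self-normalized martingale bound of the type in \citet{abbasi2011improved}. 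This is what pins down the choice $\beta_V = \eta d^2 (\sum_h W_h)\sqrt{\iota}$: the $d^2$ comes from the effective dimension $d^2$ of the utility features entering the value-function cover, and the $\sum_h W_h$ from the range of $\bar V$. Given this bound, $\hat\PP_h \bar V_{h+1}^k = \bpsi^\top \wb_h^k + \beta_V \|\bpsi\|_{(\bLambda_h^k)^{-1}} \geq \PP_h \bar V_{h+1}^k$, so piece (ii) also contributes nonpositively to $\delta_h^k$, and its two-sided magnitude is at most $2\beta_V \|\bpsi\|_{(\bLambda_h^k)^{-1}}$.

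Putting the pieces together: before truncation, $\tilde Q_h^k = \bar r_h^k + \hat\PP_h \bar V_{h+1}^k \geq \bar r_h + \PP_h \bar V_{h+1}^k$, so $\delta_h^k \leq 0$ once we check the truncation does not break this; and $\bar r_h + \PP_h \bar V_{h+1}^k$ lies in $[0,\sum_{l\geq h}W_l]$ (nonnegativity of pseudo-reward and of $\bar V$, and Assumption \ref{assump: matching complexity} bounding the per-step welfare, summed over the horizon via backward induction), so truncating $\tilde Q_h^k$ into this same interval preserves the inequality $\bar Q_h^k \geq \bar r_h + \PP_h \bar V_{h+1}^k$, i.e.\ $\delta_h^k \le 0$. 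For the lower bound, $\delta_h^k = (\bar r_h - \bar r_h^k) + (\PP_h \bar V_{h+1}^k - \hat \PP_h \bar V_{h+1}^k) + (\hat Q_h^k\text{-truncation slack})$; the truncation slack from clipping above is nonpositive in this direction and the clip below is inactive, the reward error is at least $-\sum_{(i,j)\in X_h^k}(b_{u,h}+b_{v,h})$, and the transition error is at least $-2\beta_V\|\bpsi\|_{(\bLambda_h^k)^{-1}}$. To land exactly at the stated bound $-2\beta_V\|\bpsi(C,e)\|_{(\bLambda_h^k)^{-1}}$, I would absorb the utility-bonus term into the transition-bonus term via the constant $\eta$ --- since $\sum_{(i,j)\in X_h^k}(b_{u,h}+b_{v,h}) \le 2\beta_u \min\{|I_h|,|J_h|\}\max_{(i,j)}\|\bPhi\|_{(\bSigma_h^k)^{-1}}$ is of strictly lower order than $\beta_V \|\bpsi\|_{(\bLambda_h^k)^{-1}}$ once $\eta$ is chosen large enough (this is exactly why $\beta_V$ carries an unspecified problem-independent constant $\eta$). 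The main obstacle is step (ii): getting a clean, valid covering-number bound for the algorithm-generated value-function class --- one must be careful that $\bar V_{h+1}^k$ depends on data through finitely many norm-bounded parameters so the cover is polynomial in the relevant quantities, and that the union bound over $(h,k)$ and the cover only inflates things logarithmically, matching the $\iota = \log(dKH\min\{|\cI|,|\cJ|\}/\delta)$ in the statement.
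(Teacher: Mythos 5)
Your proposal follows the paper's proof essentially step for step: the same decomposition of $\delta_h^k$ into the reward-estimation error (controlled by Lemma \ref{lem: planner_optimism} under the event of Lemma \ref{lem: ucb for utility}) and the transition-estimation error (controlled by a self-normalized martingale bound combined with a covering argument over the algorithm-generated value-function class, which is exactly what pins down $\beta_V = \eta d^2(\sum_{h=1}^H W_h)\sqrt{\iota}$). You are in fact slightly more careful than the paper about the truncation in \texttt{QE}, which the paper silently drops when it writes $\bar Q_h^k = \bar r_h^k + \hat\PP_h\bar V_{h+1}^k$.

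The one step that does not hold as you state it is the final absorption: you claim that $\sum_{(i,j)\in X_h^k}(b_{u,h}+b_{v,h})$ is ``of strictly lower order'' than $\beta_V\|\bpsi(C,e)\|_{(\bLambda_h^k)^{-1}}$ and can therefore be hidden inside the constant $\eta$. These two quantities are quadratic-form norms with respect to different feature maps ($\bPhi\in\RR^{d^2}$ versus $\bpsi\in\RR^d$) and different Gram matrices ($\bSigma_h^k$ versus $\bLambda_h^k$), so there is no pointwise domination, and no problem-independent constant $\eta$ can make one dominate the other at every $(C,e)$. To be fair, the paper's own write-up has the same loose end: its bound on the reward error gives a lower contribution of $-4\beta_u\sum_{(i,j)\in X_h^k}\|\bPhi(C,e,i,j)\|_{(\bSigma_h^k)^{-1}}$ to $\delta_h^k$, which is never reconciled with the stated lower bound $-2\beta_V\|\bpsi(C,e)\|_{(\bLambda_h^k)^{-1}}$. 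The honest fix is to carry the extra term, i.e.\ $-\delta_h^k \le 2\beta_V\|\bpsi(C,e)\|_{(\bLambda_h^k)^{-1}} + 4\beta_u\sum_{(i,j)\in X_h^k}\|\bPhi(C,e,i,j)\|_{(\bSigma_h^k)^{-1}}$; when summed over $(k,h)$ in the term $E_1$ of the planner's regret, the second piece is controlled by exactly the same elliptical-potential computation already used for the agents' regret, so the final theorem survives, but the lemma's lower bound as literally stated needs either this extra term or a redefined bonus.
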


\begin{lemma}[Proof in Section \ref{sec: proof of E2 bound}]\label{lem: E2 bound}
    For any $\delta >0$, with probability at least $1 - \delta$, it holds that 
    \begin{align*}
        \sum_{k=1}^K \sum_{h=1}^H \left( \zeta_{k,h}^1 + \zeta_{k,h}^2 \right) & \leq 3\Bigg(\sum_{h=1}^H W_h \Bigg) \sqrt{K \log2/\delta} . 
    \end{align*}
\end{lemma}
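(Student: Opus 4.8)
The plan is to recognize $E_1=\sum_{k=1}^K\sum_{h=1}^H(\zeta_{k,h}^1+\zeta_{k,h}^2)$ as a sum over a martingale difference sequence and to control it by the Azuma--Hoeffding inequality. First I would fix the filtration. Because the policy $\pi_k$ and all the estimated objects $\bar Q_h^k,\bar V_h^k$ employed in episode $k$ are measurable with respect to the data $\cF_{k-1}$ gathered in episodes $1,\dots,k-1$, while within episode $k$ the trajectory $C_1^k,e_1^k,C_2^k,\dots$ is revealed step by step, one verifies two zero-mean properties: $\zeta_{k,h}^1$ has conditional mean zero given the history up to and including the realization of $C_h^k$ — this uses that $\bar V_h^k(C_h^k)-\bar V_h^{\pi_k}(C_h^k)$ equals the $\pi_{k,h}(\cdot\mid C_h^k)$-expectation of $(\bar Q_h^k-\bar Q_h^{\pi_k})(C_h^k,\cdot)$ and that $e_h^k\sim\pi_{k,h}(\cdot\mid C_h^k)$; and $\zeta_{k,h}^2$ has conditional mean zero given the history up to and including the choice of $e_h^k$, since $\PP_h(\bar V_{h+1}^k-\bar V_{h+1}^{\pi_k})(C_h^k,e_h^k)$ is by definition the conditional mean of $(\bar V_{h+1}^k-\bar V_{h+1}^{\pi_k})(C_{h+1}^k)$. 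Listing all $2HK$ terms in time order (within episode $k$: $\zeta_{k,1}^1,\zeta_{k,1}^2,\dots,\zeta_{k,H}^2$; then across episodes) therefore produces a single martingale difference sequence.

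Next I would bound the increments via the boundedness of the value/$Q$ functions. The truncation performed inside subroutine \texttt{QE} forces $\bar Q_h^k$, and hence $\bar V_h^k$, into $[0,\sum_{l=h}^H W_l]$; the identical range holds for $\bar Q_h^{\pi_k}$ and $\bar V_h^{\pi_k}$ because the pseudo-reward satisfies $0\le\bar r_l\le W_l$ by Assumption~\ref{assump: matching complexity} (the empty matching gives nonnegativity, and the max-weight value is at most $W_l$). Consequently $|\zeta_{k,h}^1|\le 2\sum_{l=h}^H W_l$ and $|\zeta_{k,h}^2|\le 2\sum_{l=h+1}^H W_l$, both at most $2\sum_{l=1}^H W_l$. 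Applying Azuma--Hoeffding to the concatenated sequence then gives, with probability at least $1-\delta$, a deviation of the order $(\sum_{h=1}^H W_h)\sqrt{K\log(1/\delta)}$, and tracking constants delivers the stated bound $3(\sum_{h=1}^H W_h)\sqrt{K\log(2/\delta)}$; to sharpen the per-episode contribution one can, alternatively, use $G_k:=\sum_h(\zeta_{k,h}^1+\zeta_{k,h}^2)$ as a single martingale increment and the telescoping identity $G_k=(\bar V_1^k-\bar V_1^{\pi_k})(C_1^k)+\sum_h\delta_h^k(C_h^k,e_h^k)$ together with the sign information $\delta_h^k\le0$ from Lemma~\ref{lem: E1 bound delta negative}.

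I expect the main obstacle to be precisely this constant- and rate-bookkeeping: one must argue that the effective per-step (or per-episode) range collapses to something controlled by $\sum_h W_h$ rather than by $H\max_h W_h$, which is where the ordering of the increments, the telescoping identity above, and the one-sided bound $\delta_h^k\le0$ have to be combined with care. A secondary, routine point is that the value-function range bounds and the sign of $\delta_h^k$ hold only on the good events of Lemma~\ref{lem: ucb for utility} and Lemma~\ref{lem: E1 bound delta negative}; this only costs a union bound, which is absorbed into the $\log(2/\delta)$ in the final statement. Beyond these, the argument is the standard martingale-concentration step that accompanies the regret decomposition in Lemma~\ref{lem: regret decomp planner}.
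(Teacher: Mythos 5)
Your overall route---linearize the $\zeta^1,\zeta^2$ terms into a martingale difference sequence with respect to the within-episode filtration and apply Azuma--Hoeffding---is exactly the paper's. One small difference: the paper disposes of $\zeta_{k,h}^1$ more strongly than you do. Since the planner's action $e_h^k=\argmax_{e\in\Upsilon}\bar Q_h^k(s_h^k,e)$ is deterministic and $\bar V_h^k=\max_e\bar Q_h^k$, both $\bar V_h^k(C_h^k,I_h,J_h)-\bar Q_h^k(C_h^k,e_h^k,I_h,J_h)$ and $\bar V_h^{\pi_k}(C_h^k,I_h,J_h)-\bar Q_h^{\pi_k}(C_h^k,e_h^k,I_h,J_h)$ vanish pointwise, so $\zeta_{k,h}^1\equiv0$ identically, not merely in conditional expectation; only $\zeta_{k,h}^2$ needs concentration. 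Your weaker zero-mean claim still suffices for Azuma, so this is cosmetic.

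The genuine gap is the step you yourself flag and then wave away with ``tracking constants delivers the stated bound.'' With your per-increment range $2\sum_{l=1}^H W_l$ over $HK$ (or $2HK$) terms, Azuma yields a deviation of order $(\sum_h W_h)\sqrt{HK\log(1/\delta)}$, which carries an extra $\sqrt H$ relative to the lemma. Neither of your proposed repairs closes this. The per-episode alternative fails because Azuma requires a two-sided bound on the increment $G_k$: via the telescoping identity, $G_k=(\bar V_1^k-\bar V_1^{\pi_k})(C_1^k)+\sum_h\delta_h^k(C_h^k,e_h^k)$, and while $\delta_h^k\le0$ on the good event, its lower bound from Lemma \ref{lem: E1 bound delta negative} is $-2\beta_V\|\bpsi(C_h^k,e_h^k)\|_{(\bLambda_h^k)^{-1}}$, whose sum over $h$ is of order $H\beta_V\gg\sum_h W_h$; one-sided sign information cannot be fed into a Hoeffding-type bound. (Conditioning on the good event also breaks the martingale property, which is a second reason this route is delicate.) For the record, the paper obtains the clean $\sqrt K$ rate by asserting that $\bar V_h^k,\bar Q_h^k,\bar V_h^{\pi_k},\bar Q_h^{\pi_k}$ are all bounded by $W_h$, so that $\sum_h B_{k,h}^2\le 4\sum_h W_h^2\le 4(\sum_h W_h)^2$; given that \texttt{QE} truncates $\bar Q_h^k$ into $[0,\sum_{l=h}^H W_l]$, the honest per-step range is $2\sum_{l=h+1}^H W_l$, so you have in fact located a soft spot in the paper's own bookkeeping rather than a step you could have recovered by more careful constants.
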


We are now ready to prove Theorem \ref{thm: planner regret}. 

\begin{proof}[Proof of Theorem \ref{thm: planner regret}]
    By Lemma \ref{lem: regret decomp planner}, we bound the three terms separately. 
    
    \paragraph{Bound on $E_3$.}    According to Algorithm \ref{alg: matching main}, the planner's policy $\pi_{k,h}$ is the greedy policy with respect to $\bar{Q}_h^k$. It follows that 
    \begin{align*}
        & \ \langle \bar{Q}_h^k(C_h,I_h, J_h \cdot), \pi_h^{\star}(\cdot | C_h,I_h, J_h)- \pi_{k,h}(\cdot | C_h,I_h, J_h) \rangle_{\Upsilon} 
        \\ 
        & \qquad = \langle \bar{Q}_h^k(C_h,I_h, J_h, \cdot), \pi_h^{\star}(\cdot | C_h,I_h, J_h)\rangle_{\Upsilon} - \max_{e \in \Upsilon} \bar{Q}_h^k(C_h,I_h, J_h, e) \leq 0.
    \end{align*}Therefore, we have $E_3 \leq 0$.  
    
    \paragraph{Bound on $E_1$.} We apply Lemma \ref{lem: E1 bound delta negative}. 
    The $E_1$ term can be bounded as
    \begin{align*}
        E_1 & \leq \sum_{k=1}^K \sum_{h=1}^H - \delta_h^k(C_h^k, e_h^k) \leq 2 \beta_V \cdot \sum_{k=1}^K \sum_{h=1}^H \left\| \bpsi(C_h^k,e_h^k) \right\|_{ {(\bLambda_h^{k})}^{-1}}.
    \end{align*} Note that $\bLambda_h^k = \lambda \bI + \sum_{t=1}^{k-1} \bpsi(C_h^t, e_h^t) \bpsi(C_h^t, e_h^t)^\top$ by definition, where $\|\bpsi\|_2\leq 1$ by Assumption \ref{assump: model and feature} and $\lambda = 1$ by our choice. Thus we have $\bLambda_h^{k+1} = \bLambda_h^k + \bpsi(C_h^t, e_h^t) \bpsi(C_h^t, e_h^t)^\top \preccurlyeq 2 \bLambda_h^k $, or equivalently, $(\bLambda_h^k)^{-1} \preccurlyeq 2 (\bLambda_h^{k+1})^{-1}$, for all $k$. Applying this to the above inequality, we get the final bound for $E_1$:
    \begin{align}\label{eq: proof E1 bound}
        E_1 & \leq 2 \beta_V \cdot \sum_{h=1}^H \sum_{k=1}^K \sqrt{2} \left\| \bpsi(C_h^k,e_h^k) \right\|_{ {(\bLambda_h^{k+1})}^{-1}} \leq 2 \sqrt{2} \beta_V H \cdot \sqrt{Kd \cdot \log\left( \frac{K+d}{d}\right)}, 
    \end{align} where the second step is by the Elliptical Potential Lemma (Lemma \ref{lem: elliptical potential}). 
    
    \paragraph{Bound on $E_2$.} By Lemma \ref{lem: E2 bound}, we have 
    \begin{align}\label{eq: proof E2 bound}
        E_2 \leq  3\Big(\sum_{h=1}^H W_h \Big) \cdot \sqrt{K \cdot \log\frac{2}{\delta}}.
    \end{align}
    
    Combining Lemma \ref{lem: regret decomp planner}, \eqref{eq: proof E1 bound}, \eqref{eq: proof E2 bound} and $E_3 \leq 0$, we get that, with probability at least $1-3\delta$, 
    \begin{align*}
        R^P(K) \leq 6 \eta d^{5/2} H \Big( \sum_{h=1}^H W_h\Big) \cdot \sqrt{K} \cdot \log \left( \frac{dKH \min\{|\cI|, |\cJ|\} }{\delta} \right),
    \end{align*}where the $1-3\delta$ probability is from the union bound on the events of Lemma \ref{lem: ucb for utility}, Lemma \ref{lem: E1 bound delta negative} and Lemma \ref{lem: E2 bound}. Since $dKH\min\{|\cI|, |\cJ|\}/\delta > 3$, replacing $\delta$ with $\delta/3$ and absorbing the constant into the big-O notation, we finish the proof.

\end{proof}

\section{Proof of Lemmas}

\subsection{Proof of Lemma \ref{lem: ucb for utility}}\label{sec: proof of ucb for utility}

\begin{proof}[Proof of Lemma \ref{lem: ucb for utility}]
    Fix arbitrary $h$. Since by Assumption \ref{assump: model and feature}, the noises in the observed utilities are independent and 1-sub-Gaussian, we can apply Theorem 2 in \citep{abbasi2011improved}. We then have that, with probability at least $1-\delta/2$, for any $k \in [K]$, 
    \begin{align*}
        & \left\| \btheta_h^k - \btheta_h \right\|_{\bSigma_h^k}  \leq \sqrt{d^2 \cdot \log \left( \frac{1+k\cdot \min\{|I_h|, |J_h|\}\cdot d^2/\lambda }{\delta/2}\right)} + \sqrt{\lambda} d.
    \end{align*} We then have that for any $C,e,i,j$, 
    \begin{align*}
        u_h^k(C,e,i,j) - u_h(C,e,i,j) 
        &= \langle \bPhi(C,e,i,j), \btheta_h^k - \btheta_h \rangle + \beta_u \|\Phi(C,e,i,j)\|_{(\bSigma_h^k)^{-1}} 
        \\ 
        &= \langle (\bSigma_h^k)^{-1/2}\bPhi(C,e,i,j), (\bSigma_h^k)^{1/2}(\btheta_h^k - \btheta_h) \rangle + \beta_u \|\Phi(C,e,i,j)\|_{(\bSigma_h^k)^{-1}} 
        \\ 
        &\geq \beta_u \|\Phi(C,e,i,j)\|_{(\bSigma_h^k)^{-1}} - \left\|\bPhi(C, e, i,j) \right\|_{(\bSigma_h^k)^{-1}} \cdot \left\| \btheta_h^k - \btheta_h \right\|_{\bSigma_h^k} 
        \\ 
        &\geq 0, 
    \end{align*}where the first inequality follows from the Cauchy-Schwarz inequality and the second in equality is my the choice of $\beta_u$. Similarly, we have
    \begin{align*}
        u_h^k(C,e,i,j) - u_h(C,e,i,j) 
        & \leq \beta_u \|\Phi(C,e,i,j)\|_{(\bSigma_h^k)^{-1}} + \left\|\bPhi(C, e, i,j) \right\|_{(\bSigma_h^k)^{-1}} \cdot \left\| \btheta_h^k - \btheta_h \right\|_{\bSigma_h^k} 
        \\ & \leq 2 \beta_u \|\Phi(C,e,i,j)\|_{(\bSigma_h^k)^{-1}} .
    \end{align*} The same argument holds for $v_h^k - v_h$ with probability at least $1- \delta/2$.
    
    Finally, we take a union bound and conclude that the event holds with probability at least $1-\delta$. 
\end{proof}

\subsection{Proof of Lemma \ref{lem: planner_optimism}}\label{sec: proof of planner optimism}
\begin{proof}[Proof of Lemma \ref{lem: planner_optimism}]
Note that $\bar r_h$ is the maximum value of the linear program \eqref{eq: linear program definition} with $(I_h, J_h, u_h, v_h)$, and $\bar r_h^k$ is the maximum value of \eqref{eq: linear program definition} with $(I_h, J_h, u_h^k, v_h^k)$. 
Since $u_h\leq  u_h^k$ and $v_h\leq v_h^k$ by Lemma \ref{lem: ucb for utility} and the weights in \eqref{eq: linear program definition} are restricted to be, it immediately holds that $\bar r_h(C,e,I_h,J_h) \leq  \bar{r}_h^k(C,e,I_h,J_h)$.

On the other hand, denote by $X_h$ the matching corresponding to $\bar{r}_h$. It follows that
\begin{align*}
    \bar r_h(C,e,I_h,J_h) &= \sum_{(i,j)\in X_h} (u_h(C,e,i,j) + v_h(C,e,i,j)) \geq \sum_{(i,j)\in X_h^k} (u_h(C,e,i,j) + v_h(C,e,i,j))\\ 
    &= \sum_{(i,j)\in X_h^k} ( u_h^k(C,e,i,j) + v_h^k(C,e,i,j))  - \sum_{(i,j)\in X_h^k} (b_{u,h}(C,e,i,j) + b_{v,h}(C,e,i,j)),
\end{align*}where the inequality is due to the sub-optimality of $X_h^k$ under $(u_h,v_h)$. The result follows by using
\begin{align*}
\bar r_h^k(C,e,I_h,J_h)  = \sum_{(i,j)\in X_h^k} ( u_h^k(C,e,i,j) + v_h^k(C,e,i,j)) ,
\end{align*}and rearranging the terms.
This completes the proof.
\end{proof}

\subsection{Proof of Lemma \ref{lem: SI bound by ucb width}}\label{sec: proof of SI bound by ucb width}
Lemma \ref{lem: SI bound by ucb width} is a restatement of Lemma 5.4 in \citet{jagadeesan2021learning}. For completeness, we present the proof here. 
Specifically, we prove a general version of Lemma \ref{lem: SI bound by ucb width}, which is Lemma \ref{lem: general version SI bound by ucb width} below.

\begin{lemma}\label{lem: general version SI bound by ucb width}
    Let $(u,v)$ and $(\hat{u}, \hat{v})$ be two pairs of utility functions on the agents set $I,J$, such that each of $u$, $v$, $\hat{u}$, $\hat{v}$ maps from $I\times J$ to $\RR$. Let $(\hat{X},\hat{\tau})$ be a stable matching on $(I,J)$ w.r.t. the utility functions $(\hat{u},\hat{v})$. Suppose $u\leq \hat{u}$, $v\leq\hat{v}$. Then the Subset Instability of $(\hat{X}, \hat{\tau})$ w.r.t. the utility $(u,v)$ satisfies
    \begin{align*}
        \SI(\hat{X},\hat{\tau}; I,J, u , v) \leq \sum_{i \in I} \left|\hat{u}(i, \hat{X}(i)) - u(i, \hat{X}(i))\right| + \sum_{j \in J} \left|\hat{v}(\hat{X}(j),j) - v(\hat{X}(j),j) \right|.
    \end{align*}
\end{lemma}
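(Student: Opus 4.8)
The plan is to unfold the definition of Subset Instability and bound the maximum over subsets $I' \times J'$ by the full-agent-set quantity, exploiting both the optimistic domination $u \le \hat u$, $v \le \hat v$ and the stability of $(\hat X, \hat\tau)$ with respect to $(\hat u, \hat v)$. Fix an arbitrary subset pair $I' \times J' \subseteq I \times J$ and an arbitrary matching $X'$ on $I' \times J'$ achieving the inner maximum in $\SI(\hat X, \hat\tau; I, J, u, v)$. The quantity to control is
\begin{align*}
  \Big( \sum_{i \in I'} u(i, X'(i)) + \sum_{j \in J'} v(X'(j), j) \Big) - \sum_{i \in I'} \big( u(i, \hat X(i)) + \hat\tau(i) \big) - \sum_{j \in J'} \big( v(\hat X(j), j) + \hat\tau(j) \big).
\end{align*}

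First I would replace $u$ and $v$ by $\hat u$ and $\hat v$ in the positive (alternative-matching) terms using $u \le \hat u$ and $v \le \hat v$; this only increases the expression, so it suffices to bound the version with $\hat u(i, X'(i))$ and $\hat v(X'(j), j)$ in the first group. Next, I would use the stability of $(\hat X, \hat\tau)$ under $(\hat u, \hat v)$: summing the no-blocking-pair inequality over the matched pairs of $X'$ restricted to $I' \times J'$ (and the individual-rationality inequalities for agents in $I' \cup J'$ left unmatched by $X'$) shows that the total $\hat u$-$\hat v$ value that $I' \cup J'$ can extract via $X'$ is at most $\sum_{i \in I'}(\hat u(i, \hat X(i)) + \hat\tau(i)) + \sum_{j \in J'}(\hat v(\hat X(j), j) + \hat\tau(j))$ — this is exactly the statement that $\SI(\hat X, \hat\tau; I, J, \hat u, \hat v) = 0$, which holds by Proposition~\ref{prop: properties of SI} since $(\hat X, \hat\tau)$ is stable for $(\hat u, \hat v)$. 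Subtracting, the whole expression collapses to at most $\sum_{i \in I'}\big(\hat u(i, \hat X(i)) - u(i, \hat X(i))\big) + \sum_{j \in J'}\big(\hat v(\hat X(j), j) - v(\hat X(j), j)\big)$, since the transfer terms $\hat\tau$ cancel exactly. Finally, bounding each summand by its absolute value and extending the sums from $I', J'$ to all of $I, J$ (each term is nonnegative because $u \le \hat u$, $v \le \hat v$) gives the claimed bound, uniformly over all choices of $I', J', X'$, hence for the maximizing one.

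I expect the main obstacle to be handling the bookkeeping cleanly when $X'$ leaves some agents in $I' \cup J'$ unmatched: one must combine the blocking-pair inequalities for matched pairs with the individual-rationality inequalities for unmatched agents so that every agent in $I' \cup J'$ is accounted for exactly once, and verify that the $\hat\tau$ contributions telescope/cancel rather than accumulate. A tidy way to avoid ad hoc casework is to invoke Proposition~\ref{prop: properties of SI}(1) directly — $\SI(\hat X, \hat\tau; I', J', \hat u, \hat v) \le \SI(\hat X, \hat\tau; I, J, \hat u, \hat v) = 0$, noting $\SI$ is monotone under enlarging the agent set since adding agents only enlarges the feasible set of the outer maximum — which packages precisely the stability bound needed, so the proof reduces to the two elementary steps of (i) passing from $(u, v)$ to $(\hat u, \hat v)$ in the alternative-matching term and (ii) reading off the residual as the sum of pointwise utility gaps along $\hat X$.
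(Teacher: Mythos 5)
Your proof is correct and takes essentially the same route as the paper's: both arguments combine (i) the monotonicity $u\le\hat u$, $v\le\hat v$ to dominate the alternative-matching term, (ii) the fact that $\SI(\hat X,\hat\tau;I,J,\hat u,\hat v)=0$ by stability, and (iii) the exact cancellation of the transfers followed by extending the sum from $I',J'$ to $I,J$ via nonnegativity of the pointwise gaps. The only cosmetic difference is that the paper phrases step (ii) by bounding the difference $\SI(\hat X,\hat\tau;I,J,u,v)-\SI(\hat X,\hat\tau;I,J,\hat u,\hat v)$ through a term-by-term decomposition of the objective, whereas you bound the expression directly for a fixed $I',J',X'$.
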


\begin{proof}[Proof of Lemma \ref{lem: general version SI bound by ucb width}]
    Define the function 
    \begin{align*}
        & f(I', J', X,\tau; u,v) 
        \\ &\qquad =  \max_{X'} \bigg(\sum_{i \in I'}  u(i, X'(i)) + \sum_{j \in J'}  v(X'(j), j) \bigg)  - \sum_{i\in I'} \left( u(i,X(i)) + \tau(i) \right) - \sum_{j\in J'} \left( v(X(j),j) + \tau(j) \right),
    \end{align*}where $I'\times J' \subset I \times J$.
    Then by Definition \ref{def: subset instatbility}, $\SI(X,\tau; I,J, u ,v) = \max_{I' \times J' \subset I \times J } f(I', J', X,\tau; u,v)$, thus
    \begin{align*}
        \SI(\hat{X}, \hat{\tau}; I,J,u,v ) -  \SI(\hat{X}, \hat{\tau}; I,J,\hat{u},\hat{v} ) \leq \max_{I'\times J' \subset I \times J} \left[ f(I', J', \hat{X}, \hat{\tau}; u,v) - f(I', J', \hat{X}, \hat{\tau}; \hat{u}, \hat{v}) \right].
    \end{align*}To bound $f(I', J', \hat{X}, \hat{\tau}; u,v) - f(I', J', \hat{X}, \hat{\tau}; \hat{u}, \hat{v})$, we decompose
    \begin{align*}
        & f(I', J', \hat{X}, \hat{\tau}; u,v) - f(I', J', \hat{X}, \hat{\tau}; \hat{u}, \hat{v}) 
        \\ & \qquad = \underbrace{ \max_{X'} \bigg(\sum_{i \in I'}  u(i, X'(i)) + \sum_{j \in J'}  v(X'(j), j) \bigg) - \max_{X'} \bigg(  \sum_{i \in I'}  \hat{u}(i, X'(i)) + \sum_{j \in J'}  \hat{v}(X'(j), j) \bigg)}_{\textnormal{I}}\\ 
        & \quad \qquad + \underbrace{ \sum_{i\in I'} \left( \hat{u}(i,\hat{X}(i)) + \hat{\tau}(i) \right) + \sum_{j\in J'} \left( \hat{v}(\hat{X}(j),j) + \hat{\tau}(i) \right)- \sum_{i\in I'} \left( u(i,\hat{X}(i)) + \hat{\tau}(i) \right)  - \sum_{j\in J'} \left( v(\hat{X}(j),j) + \hat{\tau}(j) \right)}_{\textnormal{II}}
    \end{align*}
    Term I is nonpositive. To see this, note that by assumption $u \leq \hat{u}$ and $v \leq \hat{v}$. Thus the max-weight matching on $I'\times J'$ w.r.t. the utility functions $(u,v)$ cannot exceed the max-weight matching on $I'\times J'$ w.r.t. $(\hat{u}, \hat{v})$. 
    
    To bound term II, note that all the transfers $\hat{\tau}(i)$ and $\hat{\tau}(j)$ in the expression cancel out, and it follows that 
    \begin{align*}
        \text{II} &= \sum_{i\in I'} \left( \hat{u}(i,\hat{X}(i)) - u(i,\hat{X}(i)) \right) + \sum_{j\in J'} \left( \hat{v}(\hat{X}(j),j) - v(\hat{X}(j),j) \right)\\   
        &\leq \sum_{i\in I} \left( \hat{u}(i,\hat{X}(i)) - u(i,\hat{X}(i)) \right) + \sum_{j\in J} \left( \hat{v}(\hat{X}(j),j) - v(\hat{X}(j),j) \right) 
    \end{align*}
    where the inequality follows from the assumption that $u\leq \hat u$ and $v\leq \hat v$ and $I'\times J'\subseteq I\times J$.
    This finishes the proof. 
\end{proof}

\begin{proof}[Proof of Lemma \ref{lem: SI bound by ucb width}] 
    For any fixed $k\in[K]$ and $h\in[H]$, replace $I,J$ in Lemma \ref{lem: general version SI bound by ucb width} with $I_h, J_h$, and replace $u$, $v$, $\hat{u}$, $\hat{v}$ in with $u_h(C_h^k, e_h^k, \cdot, \cdot)$, $v_h(C_h^k, e_h^k, \cdot, \cdot)$, $u_h^k(C_h^k, e_h^k, \cdot, \cdot)$, $v_h^k(C_h^k, e_h^k, \cdot, \cdot)$. Since $X_h^k$ is the stable matching w.r.t. $u_h^k(C_h^k, e_h^k, \cdot, \cdot)$, $v_h^k(C_h^k, e_h^k, \cdot, \cdot)$, we can replace $\hat{X}$ with $X_h^k$. It then follows from Lemma \ref{lem: general version SI bound by ucb width} that
    \begin{align*}
        \SI_h^k & \leq \sum_{i \in I_h} \left|u_h^k(C_h^k, e_h^k, i, X_h^k(i)) - u_h(C_h^k, e_h^k, i, X_h^k(i))\right| + \sum_{j \in J_h} \left|v_h^k(C_h^k, e_h^k, X_h^k(j),j) - v_h(C_h^k, e_h^k,, X_h^k(j),j) \right|
        \\ & = \sum_{(i,j)\in X_h^k} \left|u_h^k(C_h^k, e_h^k, i, j) - u_h(C_h^k, e_h^k, i, j)\right| + \left|v_h^k(C_h^k, e_h^k, i, j) - v_h(C_h^k, e_h^k, i, j)\right|
        \\ &  = \sum_{(i,j)\in X_h^k} \left(  b_{u,h}(C_h^k, e_h^k,i,j) + b_{v,h}(C_h^k,e_h^k,i,j) \right),
    \end{align*}where the second step holds because the true utility and the estimated utility are zero for unmatched agents under $X_h^k$, and the last step is by the definition of the bonus function $b_{u,h}$ and $b_{v,h}$. 
\end{proof}

\subsection{Proof of Planner's Regret Decomposition}\label{sec: proof of regret decomp planner}

We first restate the lemma in its complete form. 
\begin{replemma}{lem: regret decomp planner}
    The planner's regret defined by \eqref{eq: regret planner} can be decomposed as
    \begin{align*}
        R^P(K) & = \underbrace{\sum_{k=1}^K \sum_{h=1}^H \left[ \EE_{\pi^{\star}}\left[\delta_h^k(C_h, I_h, J_h, e_h) \mid C_1 = C_1^k \right] - \delta_h^k(C_h^k, I_h, J_h, e_h^k) \right] }_{E_1} + \underbrace{\sum_{k=1}^K \sum_{h=1}^H \left( \zeta_{k,h}^1 + \zeta_{k,h}^2 \right)}_{E_2} \notag
        \\ 
        & \qquad\qquad + \underbrace{\sum_{k=1}^K \sum_{h=1}^H \EE_{\pi^{\star}} \left[ \langle \bar{Q}_h^k(C_h, I_h, J_h,\cdot), \pi_h^{\star}(\cdot | C_h, I_h, J_h)- \pi_{k,h}(\cdot | C_h, I_h, J_h) \rangle_{\Upsilon} \middle| C_1 = C_1^k \right]}_{E_3},
    \end{align*}where the expectation is over the trajectory $\{C_h,e_h\}_{h\in[H]}$ induced by executing the policy $\pi^{\star}$ (on the choice of $e\in\Upsilon$ only), and conditioning on $\{I_h, J_h\}_{h\in[H]}$ being fixed.
\end{replemma}

\begin{proof}[Proof of Lemma \ref{lem: regret decomp planner}]
    Recall the definition of the planner's regret from \eqref{eq: regret planner}. We write
    \begin{align*}
        \bar{V}_1^{\star}(s_1^k) - \bar{V}_1^{\pi_k} (s_1^k) & = \underbrace{\bar{V}_1^{\star}(s_1^k) - \bar{V}_1^k(s_1^k)}_{\textbf{I}} + \underbrace{\bar{V}_1^k(s_1^k)- \bar{V}_1^{\pi_k} (s_1^k)}_{\textbf{II}},
    \end{align*} where $s_h^k = (C_h^k, I_h, J_h)$ by our notation.
    
    \paragraph{Term I.} 
    We define two operators $\JJ_h^{\star}$ and $\JJ_{k,h}$ as
    \begin{align*}
        \JJ_h^{\star} Q (s) = \langle Q(s, \cdot) , \pi^{\star}_h(\cdot | s) \rangle_{\Upsilon} , \quad \JJ_{k,h}^{\star} Q (s) = \langle Q(s, \cdot) , \pi_{k,h}(\cdot | s) \rangle_{\Upsilon},
    \end{align*}for all $(k,h) \in [H]\times[K]$, $s\in \cC \times 2^\cI\times 2^\cJ$, and function $Q: \cC\times \cI \times \cJ \times \Upsilon \to \RR$. Then by definition, we have $\bar{V}_h^k = \JJ_{k,h} \bar{Q}_h^k$, and $\bar{V}_h^{\star} = \JJ_h^{\star} \bar{Q}_h^{\star}$. 
    It follows that
    \begin{align}\label{eq: decomp term 1 eq 1}
        \bar{V}_h^{\star} - \bar{V}_h^k & = \JJ_h^{\star} \bar{Q}_h^{\star} - \JJ_{k,h} \bar{Q}_h^k = \left( \JJ_h^{\star} \bar{Q}_h^{\star} - \JJ_h^{\star} \bar{Q}_h^k\right) + \left( \JJ_h^{\star} \bar{Q}_h^k - \JJ_{k,h} \bar{Q}_h^k \right) = \left( \JJ_h^{\star} \bar{Q}_h^{\star} - \JJ_h^{\star} \bar{Q}_h^k\right) + \xi_h^k,
    \end{align}
    where $\xi_h^k \coloneqq \JJ_h^{\star} \bar{Q}_h^k - \JJ_{k,h} \bar{Q}_h^k$. 
    Also, by the definition of $\delta_h^k$ in \eqref{eq: define terms in regret}, we have
    \begin{align*}
        \bar{Q}_h^{\star} - \bar{Q}_h^k & = \bar{r}_h + \PP_h \bar{V}_{h+1}^{\star} - \left( \bar{r}_h + \PP_h \bar{V}_{h+1}^k \right) + \delta_h^k = \PP_h \left( \bar{V}_{h+1}^{\star} - \bar{V}_{h+1}^k \right) + \delta_h^k.
    \end{align*}Combining with \eqref{eq: decomp term 1 eq 1}, we get 
    \begin{align*}
        \bar{V}_h^{\star} - \bar{V}_h^k & = \JJ_h^{\star} \PP_h \left( \bar{V}_{h+1}^{\star} - \bar{V}_{h+1}^k \right) + \JJ_h^{\star} \delta_h^k + \xi_h^k.
    \end{align*}Applying the above equation recursively, we have that for any $C_1, I_1, J_1$, 
    \begin{align}\label{eq: decomp term 1 eq 2}
        & \bar{V}_1^{\star} (C_1,I_1,J_1) - \bar{V}_1^k(C_1, I_1, J_1) \notag 
        \\ 
        & \qquad = \prod_{h=1}^H \left(\JJ_h^{\star} \PP_h \right) \left(\bar{V}_{H+1}^{\star} - \bar{V}_{H+1}^k \right) (C_1,I_1,J_1) \notag
        \\ & \qquad \qquad + \sum_{h=1}^H \left( \prod_{l=1}^{h-1}\JJ_l^{\star} \PP_l \right) \JJ_h^{\star} \delta_h^k (C_1,I_1,J_1) + \sum_{h=1}^H \left( \prod_{l=1}^{h-1}\JJ_l^{\star} \PP_l \right) \xi_h^k  (C_1,I_1,J_1) \notag
        \\ 
        & \qquad = \EE_{\pi^{\star}} \left[ \sum_{h=1}^H \delta_h^k (C_h,I_h, J_h, e_h) \middle| C_1, I_1, J_1 \right] \notag
        \\ & \qquad \qquad + \EE_{\pi^{\star}} \left[ \sum_{h=1}^H \langle \bar{Q}_h^k(C_h,I_h, J_h, \cdot), [\pi_h^{\star} - \pi_{k,h}](\cdot \mid C_h, I_h, J_h) \rangle_{\Upsilon} \middle| C_1, I_1, J_1 \right] , 
    \end{align} where the second step holds because $\bar{V}_{H+1}^{\star} = \bar{V}_{H+1}^k=0$. By definition of the operators, it is clear that here the expectation $\EE_{\pi^{\star}}$ is over the trajectory $\{(C_h, e_h)\}_{h \in [H]}$ induced by the planner executing the policy $\pi^{k}$ to choose actions in $\Upsilon$. 
    
    \paragraph{Term II.}
    First note that by \eqref{eq: define terms in regret}, the function $\delta_h^k: \cC \times 2^\cI \times 2^\cJ \times \Upsilon \to \RR$ can be written as 
    \begin{align}\label{eq: term 2 delta alt form}
        \delta_h^k & = \bar{r}_h + \PP_h \bar{V}_{h+1}^k - \bar{Q}_h^k = \bar{r}_h + \PP_h \bar{V}_{h+1}^k - \bar{Q}_h^{\pi_k} + \bar{Q}_h^{\pi_k} - \bar{Q}_h^k = \PP_h\left( \bar{V}_{h+1}^k - \bar{V}_h^{\pi_k} \right) + \left( \bar{Q}_h^{\pi_k} - \bar{Q}_h^k \right),
    \end{align} where the last step is by $\bar{Q}_h^{\pi_k} = \bar{r}_h + \PP_h V_{h+1}^{\pi_k}$. Then for any $h$, we can write
    \begin{align*}
        \left[\bar{V}_h^k - \bar{V}_h^{\pi_k} \right] (C_h^k, I_h, J_h) &= \left[ \bar{V}_h^k - \bar{V}_h^{\pi_k}  + \delta_h^k - \delta_h^k\right] (C_h^k, I_h, J_h)\\ 
        &= \left[ \bar{V}_h^k - \bar{V}_h^{\pi_k} \right] (C_h^k, I_h, J_h) + \left[\bar{Q}_h^{\pi_k} - \bar{Q}_h^k \right](C_h^k, I_h, J_h, e_h^k) \\ 
        & \qquad\qquad + \PP_h \left[\bar{V}_{h+1}^k - \bar{V}_{h+1}^{\pi_k} \right] (C_h^k, I_h, J_h, e_h^k) - \delta_h^k (C_h^k, I_h, J_h, e_h^k)
        \\ 
        &= \left[ \bar{V}_h^k - \bar{V}_h^{\pi_k} \right] (C_h^k, I_h, J_h) - \left[ \bar{Q}_h^k - \bar{Q}_h^{\pi_k}  \right](C_h^k, I_h, J_h, e_h^k) 
        \\ 
        & \qquad\qquad + \PP_h \left[\bar{V}_{h+1}^k - \bar{V}_{h+1}^{\pi_k} \right] (C_h^k, I_h, J_h, e_h^k) - [\bar{V}_{h+1}^k - \bar{V}_{h+1}^{\pi_k} ] (C_{h+1}^k, I_{h+1}, J_{h+1})\\
        &\qquad\qquad + [\bar{V}_{h+1}^k - \bar{V}_{h+1}^{\pi_k} ] (C_{h+1}^k, I_{h+1}, J_{h+1}) - \delta_h^k (C_h^k, I_h, J_h, e_h^k) \\ 
        &= [ \bar{V}_{h+1}^k - \bar{V}_{h+1}^{\pi_k} ] (C_{h+1}^k, I_{h+1}, J_{h+1}) - \delta_h^k (C_h^k, I_h, J_h, e_h^k) 
        \\ 
        &\qquad\qquad + \underbrace{\left[ \PP_h[\bar{V}_h^k - \bar{V}_h^{\pi_k}] (C_h^k, I_h,J_h, e_h^k) - [ \bar{V}_{h+1}^k - \bar{V}_{h+1}^{\pi_k}](C_{h+1}^k, I_{h+1}, J_{h+1}) \right] }_{\zeta_{k,h}^2}
        \\ 
        &\qquad\qquad + \underbrace{ [ \bar{V}_h^k - \bar{V}_h^{\pi_k} ] (C_h^k, I_h, J_h) - [\bar{Q}_h^k - \bar{Q}_h^{\pi_k}] (C_h^k, I_h,J_h, e_h^k) }_{\zeta_{k,h}^1} , 
    \end{align*} where the second step is by \eqref{eq: term 2 delta alt form}. Applying the above equation recursively, we get 
    \begin{align}\label{eq: decomp term 2 eq 1}
        [\bar{V}_1^k - \bar{V}_1^{\pi_k}] (C_1^k, I_1, J_1) & = \sum_{h=1}^H \left( \zeta_{k,h}^1 + \zeta_{k,h}^2 \right) - \sum_{h=1}^H \delta_h^k (C_h^k, I_h, J_h, e_h^k),
    \end{align}where we use $\bar{V}_{H+1}^{\star} = \bar{V}_{H+1}^k=0$ again. 
    
    Combining \eqref{eq: decomp term 1 eq 2} and \eqref{eq: decomp term 2 eq 1}, we get 
    \begin{align*}
        & \sum_{k=1}^K \left[ \bar{V}_1^{\star} (C_1^k, I_1, J_1) - \bar{V}_1^{\pi_k} (C_1^k, I_1, J_1) \right] 
        \\ &\qquad = \sum_{k=1}^K \sum_{h=1}^H \EE_{\pi^{\star}} \left[  \delta_h^k (C_h,I_h, J_h, e_h) \middle| C_1, I_1, J_1 \right] - \sum_{k=1}^K \sum_{h=1}^H \delta_h^k (C_h^k, I_h, J_h, e_h^k) 
        \\ 
        &\qquad \qquad+ \sum_{k=1}^K \sum_{h=1}^H \left( \zeta_{k,h}^1 + \zeta_{k,h}^2 \right)
        \\ 
        &\qquad \qquad + \EE_{\pi^{\star}} \left[ \sum_{h=1}^H \langle \bar{Q}_h^k(C_h,I_h, J_h, \cdot), [\pi_h^{\star} - \pi_{k,h}](\cdot \mid C_h, I_h, J_h) \rangle_{\Upsilon} \middle| C_1, I_1, J_1 \right] ,
    \end{align*}which finishes the proof. 
    
\end{proof}

\subsection{Proof of Lemma \ref{lem: E1 bound delta negative}}\label{sec: proof of delta negative}

\begin{proof}
    
By the definition of $\bar{Q}_h^k$ in Algorithm \ref{alg: matching main}, the function $\delta_h^k$ satisfies 
    \begin{align*}
        \delta_h^k(C, I,J, e) & = \bar{r}_h(C,I,J,e) + \PP_h \bar{V}_{h+1}^k(C, I,J,e) - \bar{Q}_h^k (C, I,J, e)
        \\ & = \bar{r}_h(C,I,J,e) + \PP_h \bar{V}_{h+1}^k(C,I,J,e) - \bar{r}_h^k(C,I,J,e) - \hat{\PP}_h \bar{V}_{h+1}^k (C, I,J, e).
    \end{align*} 

In the sequel, we will show that $\bar{r}_h^k$ and $\hat{\PP}_h \bar{V}_{h+1}^k$ upper bound $\bar{r}_h$ and $\PP_h \bar{V}_{h+1}^k$ respectively.

We first consider the term $\bar{r}_h(C,e, I_h, J_h) - \bar{r}_h^k(C,e, I_h, J_h)$. It immediately follows from Lemma \ref{lem: planner_optimism} that, under the event of Lemma \ref{lem: ucb for utility}, 
\begin{align}\label{eq: E1 bound r}
    -4 \sum_{(i,j)\in X_h^k}  \beta_u \left\|\bPhi(C, e, i,j) \right\|_{(\bSigma_h^k)^{-1}} &\leq - \sum_{(i,j)\in X_h^k} (b_{u,h}(C,e,i,j) + b_{v,h}(C,e,i,j)) \notag
    \\ & \leq \bar{r}_h(C,e, I_h, J_h) - \bar{r}_h^k(C,e, I_h, J_h)\notag\\
    &\leq 0. 
\end{align}

We now consider $\hat{\PP}_h \bar{V}_{h+1}^k$. Since we are conditioning on $\{I_h, J_h\}$ which is independent of anything else, we can essentially treat it as a deterministic sequence. By \eqref{eq: context transition model}, for any $\bar{V}_{h+1}^k$, there exists $\bar{\wb}_h^k \in \RR^d$ such that for any $C,e$, $\PP_h \bar{V}_{h+1}^k(C,e, I_h, J_h) = \bpsi(C,e)^\top \bar{\wb}_h^k$. 
This is because 
\begin{align*}
    \PP_h \bar{V}_{h+1}^k (C,e, I_h, J_h) & = \int \bar{V}_{h+1}^k (C', I_{h+1}, J_{h+1}) \diff \PP(C'| C,e)
    \\ & = \int \bar{V}_{h+1}^k (C', I_{h+1}, J_{h+1}) \langle \bpsi(C,e), \diff \bmu_h(C')\rangle 
    \\ & = \langle \bpsi(C,e) , \int \bar{V}_{h+1}^k (C', I_{h+1}, J_{h+1})  \diff \bmu_h(C') \rangle. 
\end{align*}

Then we can write
\begin{align}\label{eq: E1 bound PV main}
    & \PP_h \bar{V}_{h+1}^k (C,e, I_h, J_h) - \hat{\PP}_h \bar{V}_{h+1}^k (C,e, I_h, J_h) \notag
    \\ 
    & \qquad = \bpsi(C,e)^\top \bar{\wb}_h^k - \bpsi(C,e)^\top \left( \bLambda_h^k \right)^{-1} \sum_{t=1}^{k-1} \bpsi(C_h^t, e_h^t) \bar{V}_{h+1}^k (C_{h+1}^t, I_{h+1}, J_{h+1}) - \beta_V \cdot \| \bpsi(C,e) \|_{ {(\bLambda_h^{k})}^{-1}} \notag
    \\ 
    & \qquad = \bpsi(C,e)^\top \left( \bLambda_h^k \right)^{-1} \left[ \bLambda_h^k \bar{\wb}_h^k - \sum_{t=1}^{k-1} \bpsi(C_h^t, e_h^t) \bar{V}_{h+1}^k (C_{h+1}^t, I_{h+1}, J_{h+1}) \right] - \beta_V \cdot \| \bpsi(C,e) \|_{ {(\bLambda_h^{k})}^{-1}}\notag
    \\ 
    & \qquad = \bpsi(C,e)^\top \left(\bLambda_h^k \right)^{-1} \left[ \sum_{t=1}^{k-1} \bpsi(C_h^t, e_h^t) \left( \PP_h \bar{V}_{h+1}^k(C_h^t, e_h^t, I_h, J_h) - \bar{V}_{h+1}^k(C_{h+1}^t, I_{h+1}, J_{h+1}) \right) \right] \notag 
    \\ 
    & \qquad\qquad + \lambda \ \bpsi(C,e)^\top \left(\bLambda_h^k \right)^{-1} \bar{\wb}_h^k - \beta_V \cdot \| \bpsi(C,e) \|_{ {(\bLambda_h^{k})}^{-1}} \,,
\end{align}where the first step uses the construction of $\wb_h^k$ in Algorithm \ref{alg: matching main} and the last step uses the construction of $\bLambda_h^k$. It follows from the Cauchy-Schwarz inequality that the first part on the R.H.S. of \eqref{eq: E1 bound PV main} satisfies
\begin{align}\label{eq: eq: E1 bound PV CS sum}
    &\left| \bpsi(C,e)^\top \left(\bLambda_h^k \right)^{-1} \left[ \sum_{t=1}^{k-1} \bpsi(C_h^t, e_h^t) \left( \PP_h \bar{V}_{h+1}^k(C_h^t, e_h^t, I_h, J_h) - \bar{V}_{h+1}^k(C_{h+1}^t, I_{h+1}, J_{h+1}) \right) \right] + \lambda \ \bpsi(C,e)^\top \left(\bLambda_h^k \right)^{-1} \bar{\wb}_h^k \right| \notag
    \\ 
    & \qquad \leq \left\| \bpsi(C,e)\right\|_{(\bLambda_h^k)^{-1}} \cdot \left\| \sum_{t=1}^{k-1} \bpsi(C_h^t, e_h^t, I_h, J_h) \left( \PP_h \bar{V}_{h+1}^k(C_h^t, e_h^t, I_h, J_h) - \bar{V}_{h+1}^k(C_{h+1}^t, I_{h+1}, J_{h+1}) \right)  \right\|_{(\bLambda_h^k)^{-1}} \notag 
    \\  
    & \qquad\qquad + \lambda \ \left\| \bpsi(C,e)\right\|_{(\bLambda_h^k)^{-1}} \cdot \left\| \bar{\wb}_h^k\right\|_{(\bLambda_h^k)^{-1}} .
\end{align}In the following, to bound \eqref{eq: E1 bound PV main}, we first bound the self-normalized stochastic process using tools from self-normalized martingale. The issue is that, according to Algorithm \ref{alg: matching main}, the function $\bar{V}_{h+1}^k$ depends on the first $(k-1)$ episodes and thus depends on the trajectory $\{(C_h^t, e_h^t, C_{h+1}^t)\}_{t \in [k-1]}$. We thus adopt a common approach to solve this issue by considering the function class containing each value function estimator $\bar{V}_{h+1}^k$. We discuss the detail of the construction of the function class and its covering in Section \ref{sec: detail of covering}. 

The covering trick allows us to get the following lemma.

\begin{lemma}\label{lem: self normalized term}
    Under the setting of Theorem \ref{thm: planner regret}, with probability at least $1-\delta$, for any $(h,k)\in[H]\times [K]$, 
    \begin{align*}
        & \left\| \sum_{t=1}^{k-1} \bpsi(C_h^t, e_h^t) \left( \PP_h \bar{V}_{h+1}^k(C_h^t, e_h^t, I_h, J_h) - \bar{V}_{h+1}^k(C_{h+1}^t, I_{h+1}, J_{h+1}) \right)  \right\|_{(\bLambda_h^k)^{-1}} 
        \\ 
        & \qquad \leq 16 d^2 \cdot \left( \sum_{h=1}^{H} W_h\right) \cdot \sqrt{ \log\left( \frac{dKH \min\{|\cI|, |\cJ|\}\cdot (\beta_V + \beta_u)}{\delta}  \right) }. 
    \end{align*}
\end{lemma}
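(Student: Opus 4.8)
The plan is to follow the standard uniform-concentration-via-covering route used to analyze \texttt{LSVI}-type algorithms, adapted to the fact that the quantity playing the role of ``reward'' inside $\bar V_{h+1}^k$ is itself the optimal value of the matching linear program $\cL\cP$ evaluated at the \emph{estimated} utilities. The obstruction to invoking a self-normalized martingale inequality directly is that $\bar V_{h+1}^k$ is built from the data of the first $k-1$ episodes, so the summand $\bpsi(C_h^t,e_h^t)\big(\PP_h\bar V_{h+1}^k-\bar V_{h+1}^k\big)$ is not a fixed function of the past.

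First I would exhibit a parametric function class $\cV$ containing every realizable $\bar V_{h+1}^k$. Reading off Algorithm~\ref{alg: Q estimation} together with \eqref{eq:estimate_exp_V} and \eqref{eq: utilitu function estimate} (and conditioning on $\{I_h,J_h\}$ being fixed throughout), one sees that any such value function has the form $\bar V(C)=\max_{e\in\Upsilon}\big(\bar r_{\btheta,\bgamma,B}(C,e)+\bpsi(C,e)^\top\wb+\sqrt{\bpsi(C,e)^\top A\,\bpsi(C,e)}\big)_{[0,\sum_{l=h+1}^H W_l]}$, where $\bar r_{\btheta,\bgamma,B}(C,e)$ is the value of $\cL\cP(I_{h+1},J_{h+1},u,v)$ with $u(i,j)=(\bPhi(C,e,i,j)^\top\btheta+\beta_u\sqrt{\bPhi(C,e,i,j)^\top B\,\bPhi(C,e,i,j)})_{[-1,1]}$ and $v$ defined analogously via $\bgamma$, and the parameters range over explicit bounded sets: $\|\wb\|_2,\|\btheta\|_2,\|\bgamma\|_2$ are bounded by Lemma~\ref{lem: bound of regression estimator of PV}, $0\preccurlyeq A\preccurlyeq(\beta_V^2/\lambda)\bI_d$, and $0\preccurlyeq B\preccurlyeq(1/\lambda)\bI_{d^2}$. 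The crucial observation is that this is a \emph{single-layer} parameterization: the value estimates at deeper stages enter only through the pair $(\wb,A)$, so the class does not blow up recursively in $H$.

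Next I would build an $\epsilon$-net for $\cV$ in the sup-norm. This hinges on a Lipschitz estimate for the matching reward: since the feasible region of $\cL\cP$ is fixed and its objective is linear in the utilities, $\bar r_{\btheta,\bgamma,B}(C,e)$ is a maximum over the finitely many matchings of sums of at most $\min\{|I_{h+1}|,|J_{h+1}|\}$ utility entries, hence $\min\{|I_{h+1}|,|J_{h+1}|\}$-Lipschitz in the sup-norm of $(u,v)$ --- the same Lipschitz property that underlies Proposition~\ref{prop: properties of SI}. Composing this with the $1$-Lipschitz operations (truncation to $[-1,1]$, $x\mapsto\sqrt{x}$, truncation to $[0,\sum_{l\ge h+1}W_l]$) and using $\|\bPhi\|_2,\|\bpsi\|_2\le1$ shows $\bar V$ is Lipschitz in $(\wb,\btheta,\bgamma,A,B)$ --- matrices measured in Frobenius norm --- with constants polynomial in $d,k,\beta_u,\min\{|\cI|,|\cJ|\}$. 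Feeding in the standard covering bounds $(1+2R/\epsilon)^m$ for a radius-$R$ Euclidean ball in $\RR^m$ and $(1+2R/\epsilon)^{m^2}$ for $m\times m$ matrices then yields $\log\cN_\epsilon\le C\,d^4\log\!\big(dkH\min\{|\cI|,|\cJ|\}(\beta_V+\beta_u)/(\lambda\epsilon)\big)$, where the dominant exponent $d^4=(d^2)^2$ comes from covering the $d^2\times d^2$ matrix $B$ and is exactly what produces the $d^2$ factor in the statement.

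Finally I would assemble the pieces. For a fixed $V\in\cV$, the increments $\epsilon_t:=\PP_h V(C_h^t,e_h^t)-V(C_{h+1}^t)$ form a martingale difference sequence (the sets $\{I_h,J_h\}$ are fixed and independent of all randomness) bounded by $\sum_{l=1}^H W_l$ in magnitude, so the self-normalized bound for vector-valued martingales \citep[Theorem~1]{abbasi2011improved}, together with $\det(\bLambda_h^k)\le(\lambda+k)^d$, gives $\|\sum_t\bpsi(C_h^t,e_h^t)\epsilon_t\|_{(\bLambda_h^k)^{-1}}^2\le\cO\big((\sum_l W_l)^2\big)\big(\log(1/\delta')+d\log((\lambda+k)/\lambda)\big)$ with probability $1-\delta'$; a union bound over the $\epsilon$-net and over all $(h,k)$ with $\delta'=\delta/(\cN_\epsilon HK)$, followed by passing from the true $\bar V_{h+1}^k$ to its nearest net point --- the discretization error being at most $2\epsilon\sum_t\|\bpsi(C_h^t,e_h^t)\|_{(\bLambda_h^k)^{-1}}\le 2\epsilon k/\sqrt{\lambda}$ --- and taking $\epsilon$ polynomially small so that this error is negligible while $\log(1/\epsilon)$ stays logarithmic in the problem parameters, produces the claimed bound after absorbing absolute constants. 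I expect the Lipschitz/covering bookkeeping of the middle step to be the main obstacle --- in particular, verifying that the matching-LP value depends smoothly enough on the estimated utilities, and carefully tracking that the $d^2\times d^2$ matrix $B$ (not the $d$-dimensional transition features) is what forces the $d^2$ factor --- since the martingale concentration step itself is routine once the class has been shown to be ``small.''
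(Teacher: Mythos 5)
Your proposal is correct and follows essentially the same route as the paper: a single-layer parametric class for $\bar V_{h+1}^k$ (linear-plus-bonus transition part composed with the LP value at bonus-augmented utilities), an $\epsilon$-net built from the $\min\{|I|,|J|\}$-Lipschitz dependence of the matching-LP value on the utilities and standard ball/matrix covering bounds (with the $d^2\times d^2$ matrix giving the dominant $d^4$ log-covering number and hence the $d^2$ prefactor), and a self-normalized martingale inequality with a union bound over the net plus a discretization term. The only cosmetic differences are that the paper unions only over $h$ (uniformity in $k$ coming from the anytime self-normalized bound) and handles the bonus term's H\"older-$1/2$ dependence on the covariance matrix by covering at scale $\epsilon^2$, a bookkeeping point you correctly flag as the delicate part.
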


\begin{proof}
    See Appendix \ref{apdx: proof of lemma self normalized term} for the proof. 
\end{proof}

For the term $\left\| \bar{\wb}_h^k\right\|_{(\bLambda_h^k)^{-1}}$, by Assumption \ref{assump: model and feature} and $|\bar{V}_h^k| \leq H \min\{|\cI|,|\cJ|\}$, we have 
\begin{align*}
    \left\| \bar{\wb}_h^k\right\|_2 = \left\| \int_\cC \bar{V}_h^k(C, I_h, J_h) \ \diff \mu_h(C) \right\|_2 \leq \sqrt{d} \cdot \left( \sum_{h=1}^{H} W_h \right).
\end{align*}

Combine the above inequality with Lemma \ref{lem: self normalized term} and \eqref{eq: eq: E1 bound PV CS sum}, and we get that 
\begin{align*}
    &\left| \bpsi(C,e)^\top \left(\bLambda_h^k \right)^{-1} \left[ \sum_{t=1}^{k-1} \bpsi(C_h^t, e_h^t) \left( \PP_h \bar{V}_{h+1}^k(C_h^t, e_h^t, I_h, J_h) - \bar{V}_{h+1}^k(C_{h+1}^t, I_{h+1}, J_{h+1}) \right) \right] + \lambda \ \bpsi(C,e)^\top \left(\bLambda_h^k \right)^{-1} \bar{\wb}_h^k \right| \notag
    \\ 
    & \qquad \leq 17 d^2 \cdot \left(\sum_{h=1}^{H} W_h\right) \sqrt{\chi} \cdot \left\|\bpsi(C,e) \right\|_{(\bLambda_h^k)^1} , 
\end{align*}where
\begin{align*}
    \chi \coloneqq \log\left( \frac{dKH \min\{|\cI|, |\cJ|\}\cdot (\beta_V + \beta_u)}{\delta}  \right). 
\end{align*}
It remains to show that there exists choice of $\beta_V$ (or equivalently, the constant $\eta$ in the description of Theorem \ref{thm: planner regret}) such that 
\begin{align*}
     17 d^2 \cdot \left( \sum_{h=1}^{H} W_h \right) \sqrt{\chi} \leq \beta_V. 
\end{align*}Specifically, we show that we can pick some constant $\eta$ and set
\begin{align*}
    \beta_V = \eta d^2 \left( \sum_{h=1}^H W_h\right) \cdot \sqrt{\log \frac{dKH \min\{|\cI|, |\cJ|\} }{\delta}} . 
\end{align*} Indeed, plug in the expression of $\beta_V$ and $\beta_u$ and we get
\begin{align*}
    {\chi} & \leq { \log\left( \frac{3d^3 KH^2 \min\{|\cI|, |\cJ|\}^2 \cdot \eta \log\left( 3dKH \min\{|\cI|, |\cJ|\} / \delta \right) }{\delta}  \right) } 
    \\ & \leq  {3\log \left( \frac{ 3\eta dKH \min\{|\cI|, |\cJ|\} \cdot \iota }{\delta} \right)} 
    \\ & = 3 \iota + 3 \log(\eta)+ 3 \log\left( \iota \right)  ,
\end{align*}where $\iota = \log(3dKH\min\{|\cI|,|\cJ|\}/\delta)$. Since $\iota > \log 3$, it suffices to pick $\eta$ such that 
\begin{align*}
    13\sqrt{ 3 \log 3 + 3 \log(\eta)+ 3 \log\left( \log 3 \right) } & \leq \eta \cdot \log 3,
\end{align*} which finishes the proof. 

\end{proof}

\subsection{Proof of Lemma \ref{lem: E2 bound}}\label{sec: proof of E2 bound}

\begin{proof}[Proof of Lemma \ref{lem: E2 bound}]
    The lemma can be proven by standard martingale concentration, similar to the analysis in \citep{cai2020provably, yang2020function}. Specifically, we define the $\sigma$-fields as 
    \begin{align*}
        \cF_{k,h,0} & = \sigma\left( \left\{(C_l^t, e_l^t)_{(l,t)\in[k-1]\times [H]} \right\} \cup \left\{(C_l^k, e_l^k) \right\}_{l \in [H]} \right) , 
        \\ \cF_{k,h,1} & = \sigma\left( \left\{(C_l^t, e_l^t)_{(l,t)\in[k-1]\times [H]} \right\} \cup \left\{(C_l^k, e_l^k) \right\}_{l \in [H]} \cup \{C_{h+1}^k\}\right).
    \end{align*} By definition, it is clear that these $\sigma$-fields form a filtration under the dictionary order on the index tuple $(k,h,o)$ where $o\in\{0,1\}$. 
    
    Note that for any $(k,h) \in [K]\times[H]$, since $\bar{V}_h^k$, $\bar{Q}_h^k$ and the policy $\pi_k$ are all functions of the first $(k-1)$ episodes, they are all $\cF_{k,1,1}$-measurable. As a result, $\zeta_{k,h}^1$ is $\cF_{k,h,1}$-measurable and  $\zeta_{k,h}^2$ is $\cF_{k,h,2}$-measurable, for all $(k,h)$. 
    
    According to Algorithm \ref{alg: matching main}, the planner's action $e_h^k \sim \pi_k(\cdot | C_h^k)$. This indicates that condition on $C_h^k$, $\bar{V}_h^{\pi_k}(C_h^k, I_h, J_h) - \bar{Q}_h^{\pi_k}(C_h^k, e_h^k, I_h, J_h) = 0$. Also, since $e_h^k \leftarrow \argmax_{e \in \Upsilon} \bar{Q}_h^k (s_h^k , e, I_h, J_h)$, and $\bar{V}_h^k (C,I,J) = \max_{e \in \Upsilon} \bar{Q}_h^k(C,e,I,J)$ for all $(C, I, J)$ by the algorithm, we have $\bar{V}_h^k(C_h^k, I_h, J_h) - \bar{Q}_h^k(C_h^k, e_h^k, I_h, J_h) = 0$. Altogether we have $\zeta_{k,h}^1 = 0$ for all $(k,h) \in [K]\times[H]$. 
    
    For $\zeta_{k,h}^2$, first note that there is no dependence issue between the value functions and the trajectory since $\bar{V}_{h+1}^k$ and $\bar{V}_{h+1}^{\pi_k}$ are functions of the first $(k-1)$ episodes. Then since $C_{h+1}^k \sim \PP_h(\cdot| C_h^k, e_h^k)$, we have 
    \begin{align}\label{eq: zeta2 is martingale diff}
        \EE \left[ \zeta_{k,h}^2 \mid \cF_{k,h,1} \right] = 0.
    \end{align} Thus we conclude that $\{(\zeta_{k,h}^1, \zeta_{k,h}^2)\}_{(k,h)\in[K]\times[H]}$ is a martingale difference sequence. Since $\bar{V}_h^k$, $\bar{Q}_h^k$, $\bar{V}_h^{\pi_k}$, $\bar{Q}_h^{\pi_k}$ are all bounded by $W_h$ for all $h,k$, we apply the Azuma-Hoeffding inequality (Lemma \ref{lem: azuma hoeffding vanilla}) and get that, 
    \begin{align*}
        \PP\left( \left| E_2\right| \geq \epsilon \right)& \leq 2 \exp\left( \frac{- \epsilon^2}{8 K\sum_{h = 1}^H W_h^2} \right) . 
    \end{align*}Equivalently, with probability at least $1-\delta$, we have 
    \begin{align*}
        |E_2| & \leq \sqrt{8 K \cdot \log\frac{2}{\delta}} \cdot \sqrt{\sum_{h=1}^H W_h^2} \leq \sqrt{8 K \cdot \log\frac{2}{\delta}} \cdot \Big(\sum_{h=1}^H W_h \Big) ,
    \end{align*}
    which finishes the proof.

\end{proof}

\subsection{Proof of Lemma \ref{lem: self normalized term}}\label{apdx: proof of lemma self normalized term}
\begin{proof}[Proof of Lemma \ref{lem: self normalized term}]
    By the analysis in Section \ref{sec: detail of covering}, there exists a function class $\cV$ containing all $\bar{V}_h^k$, and the $\epsilon$-covering number of $\cV$ is given by Lemma \ref{lem: covering number of V}. Also note that by the truncation, we have $|\bar{V}_h^k| \leq \sum_{l=h}^{H} W_l $. Then we apply Lemma \ref{lem: self normalized for V function class raw} with $R = \sum_{l=h}^{H} W_l$ and combine with Lemma \ref{lem: covering number of V}, and get that, fix any $0<\epsilon < 1$, with probability at least $1-\delta/H$, for all $k\in[K]$, 
    \begin{align*}
        & \left\| \sum_{t=1}^{k-1} \bpsi(C_h^t, e_h^t) \left( \PP_h \bar{V}_{h+1}^k(C_h^t, e_h^t, I_h, J_h) - \bar{V}_{h+1}^k(C_{h+1}^t, I_{h+1}, J_{h+1}) \right)  \right\|^2_{(\bLambda_h^k)^{-1}} 
        \\ 
        & \qquad \leq 4 \Big(\sum_{l=h}^{H} W_l \Big)^2 \bigg[ \frac{d}{2}\log\left(\frac{ k+\lambda}{\lambda} \right) +6 d^2 \log \left( 1 + \frac{dkH \min\{|\cI|, |\cJ|\} \cdot \beta_V }{\min\{\lambda, 1\} \cdot \epsilon  } \right) 
        \\ 
        & \qquad\qquad + 4 d^4 \log \left( 1 + \frac{d \min\{|\cI|, |\cJ| \} \cdot \beta_u  }{\min\{\lambda, 1\} \cdot \epsilon } \right) + \log \frac{1}{\delta} \bigg]  + \frac{8 k^2 \epsilon^2}{\lambda} . 
    \end{align*}
    Let $\lambda=1$, pick $\epsilon = d^2 \left( \sum_{l=h}^{H} W_l \right) /K$ and then take a union bound over $h\in[H]$, we get 
    \begin{align*}
        & \left\| \sum_{t=1}^{k-1} \bpsi(C_h^t, e_h^t) \left( \PP_h \bar{V}_{h+1}^k(C_h^t, e_h^t, I_h, J_h) - \bar{V}_{h+1}^k(C_{h+1}^t, I_{h+1}, J_{h+1}) \right)  \right\|_{(\bLambda_h^k)^{-1}} 
        \\ 
        & \qquad \leq 16 d^2 \cdot\left(\sum_{l=h}^{H} W_l\right) \cdot \sqrt{ \log\left( \frac{dKH \min\{|\cI|, |\cJ|\}\cdot (\beta_V + \beta_u)}{\delta}  \right) }. 
    \end{align*} 

\end{proof}

\section{Covering Number of Function Classes}\label{sec: detail of covering}
In this section, we will construct a function class $\cV$ that provably contains $\bar{V}_h^k$ for all $(k,h)\in[K]\times[H]$. 
And we will compute the covering number of $\cV$. The result is summarized by Lemma \ref{lem: covering number of V} below.

\begin{lemma}\label{lem: covering number of V}
    Assume $KH > 32$. For any $\epsilon<1$, the $\epsilon$-covering number of $\cV$ is upper bounded by 
    \begin{align*}
        \log \cN^{\cV}_{\epsilon} \leq 6 d^2 \log \left( 1 + \frac{dKH \min\{|\cI|, |\cJ|\} \cdot \beta_V }{\min\{\lambda, 1\} \cdot \epsilon  } \right) + 4 d^4 \log \left( 1 + \frac{d \min\{|\cI|, |\cJ| \} \cdot \beta_u  }{\min\{\lambda, 1\} \cdot \epsilon } \right).
    \end{align*} 
\end{lemma}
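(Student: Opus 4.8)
The plan is to build the function class $\cV$ explicitly by reading off the functional form of $\bar{V}_h^k$ from Algorithm \ref{alg: Q estimation}, and then bound its covering number by covering the finitely many parameters that determine each element. First I would observe that, conditioning on the fixed sets $\{I_h, J_h\}_{h=1}^H$, the estimate $\bar{V}_h^k(C) = \max_{e \in \Upsilon} \bar{Q}_h^k(C, e)$, where $\bar{Q}_h^k$ is a truncation (into $[0, \sum_{l \ge h} W_l]$) of $\bar{r}_h^k(C, e) + \bpsi(C,e)^\top \wb + \beta_V \|\bpsi(C,e)\|_{\bLambda^{-1}}$. Here $\bar{r}_h^k(C,e)$ is itself the optimal value of the linear program $\cL\cP(I_h, J_h, u_h^k, v_h^k)$, and by the linear utility model each $u_h^k(C,e,i,j)$ and $v_h^k(C,e,i,j)$ is a truncation of $\bPhi(C,e,i,j)^\top \btheta + \beta_u \|\bPhi(C,e,i,j)\|_{\bSigma^{-1}}$ for some vector $\btheta \in \RR^{d^2}$ and some PSD matrix $\bSigma$. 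Thus every $\bar{V}_h^k$ is determined by: the vector $\wb \in \RR^d$ (with $\|\wb\|_2 \le (\sum_l W_l)\sqrt{dK/\lambda}$ by Lemma \ref{lem: bound of regression estimator of PV}), the matrix $\bLambda^{-1}$ (equivalently a PSD matrix with operator norm at most $1/\lambda$, living in a space of dimension $O(d^2)$), the two vectors $\btheta, \bgamma \in \RR^{d^2}$ (norm-bounded similarly by Lemma \ref{lem: bound of regression estimator of PV}), and the matrix $\bSigma^{-1}$ (PSD, operator norm $\le 1/\lambda$, dimension $O(d^4)$). So I define $\cV$ as the set of all functions of this parametric form over the stated parameter ranges.

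The second step is the Lipschitz estimate: I would show that $\|\bar{V} - \bar{V}'\|_\infty$ is controlled by the discrepancies in the defining parameters. The truncations are $1$-Lipschitz and the $\max$ over $e$ is $1$-Lipschitz, so it suffices to track how $\bPhi(C,e,i,j)^\top \btheta$, $\beta_u\|\bPhi\|_{\bSigma^{-1}}$, $\bpsi(C,e)^\top\wb$, and $\beta_V\|\bpsi\|_{\bLambda^{-1}}$ change. Using $\|\bPhi\|_2, \|\bpsi\|_2 \le 1$ (Assumption \ref{assump: model and feature}), a change of $\Delta\btheta$ in $\btheta$ changes the utility estimate by at most $\|\Delta\btheta\|_2$; a change $\Delta A$ in $\bSigma^{-1}$ changes $\|\bPhi\|_{\bSigma^{-1}}$ by at most (a constant times) $\sqrt{\|\Delta A\|}$ or $\|\Delta A\|$ via the standard inequality $|\sqrt{x^\top A x} - \sqrt{x^\top A' x}| \le \|A - A'\|^{1/2}\|x\|$ (or the cruder linear bound after noting both matrices are bounded); similarly for $\wb$ and $\bLambda^{-1}$. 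Propagating through the LP is the one point needing care: the optimal value of $\cL\cP(I_h, J_h, \hat u, \hat v)$ is a max over the (finitely many, bounded) feasible weight matrices of a linear functional of $(\hat u, \hat v)$, so it is $1$-Lipschitz in $\max_{i,j}(|\Delta u(i,j)| + |\Delta v(i,j)|)$ after accounting for the $\le \min\{|I_h|,|J_h|\}$ matched pairs — this is where the $\min\{|\cI|,|\cJ|\}$ factor enters. Collecting, $\|\bar V - \bar V'\|_\infty \lesssim \|\Delta\wb\|_2 + \beta_V\|\Delta\bLambda^{-1}\| + \min\{|\cI|,|\cJ|\}(\|\Delta\btheta\|_2 + \|\Delta\bgamma\|_2 + \beta_u\|\Delta\bSigma^{-1}\|)$, with the matrix norms possibly under a square root — either way a volumetric covering argument goes through.

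The third step is to assemble the covering number from covering each parameter block: an $\epsilon$-net (in the relevant induced metric, matching the Lipschitz constants) of a ball of radius $R$ in $\RR^m$ has log-cardinality $O(m \log(1 + R/\epsilon))$; the PSD matrix in $\RR^{d\times d}$ contributes $O(d^2 \log(1 + \cdot/\epsilon))$ and the one in $\RR^{d^2\times d^2}$ contributes $O(d^4\log(1+\cdot/\epsilon))$, while the vectors $\wb \in \RR^d$ and $\btheta, \bgamma \in \RR^{d^2}$ contribute $O(d^2\log(\cdot))$ terms that are dominated. Summing and absorbing the norm bounds ($R$ polynomial in $d, K, H, \min\{|\cI|,|\cJ|\}, \beta_V, \beta_u$) into the logarithm — and using $KH > 32$ and $\epsilon < 1$ to clean up constants — yields exactly
\[
\log \cN^{\cV}_\epsilon \le 6d^2 \log\!\left(1 + \frac{dKH\min\{|\cI|,|\cJ|\}\cdot\beta_V}{\min\{\lambda,1\}\cdot\epsilon}\right) + 4d^4\log\!\left(1 + \frac{d\min\{|\cI|,|\cJ|\}\cdot\beta_u}{\min\{\lambda,1\}\cdot\epsilon}\right).
\]
The main obstacle I anticipate is getting the Lipschitz dependence through the linear program $\bar{r}_h^k$ cleanly — one must argue that perturbing the utility inputs within $[-1,1]^{|I_h|\times|J_h|}$ perturbs the LP optimum by at most the $\ell_1$-type discrepancy over the at most $\min\{|I_h|,|J_h|\}$ matched pairs, uniformly over $(C,e)$ — and then making sure the resulting $\min\{|\cI|,|\cJ|\}$ factors land inside the logarithm with the stated coefficients $6$ and $4$ rather than leaking out front; the rest is bookkeeping over the parameter ball dimensions $d, d^2, d^4$.
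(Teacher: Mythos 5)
Your proposal is correct and follows essentially the same route as the paper: parametrize $\bar V_h^k$ by $(\wb,\bLambda^{-1},\btheta,\bgamma,\bSigma^{-1})$ with the norm bounds from Lemma \ref{lem: bound of regression estimator of PV}, exploit that truncation and $\max_e$ are contractions and that the LP optimal value is Lipschitz in the utilities with constant $\min\{|I_h|,|J_h|\}$ (so that factor is absorbed into the covering radius and hence the logarithm), and then cover the vector blocks ($d,d^2$) and the PSD matrix blocks ($d^2,d^4$) volumetrically. The paper merely organizes the same argument modularly, covering the classes $\cG$, $\cU$, $\cR$, $\cQ$, $\cV$ in sequence via Lemma \ref{lem: covering of regularized linear class}, so there is no substantive difference.
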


To prove Lemma \ref{lem: covering number of V}, we will first construct a function class $\cG$ that contains all $\hat{\PP}_h \bar{V}_{h+1}^k$, $\cR$ that contains all $\bar{r}_h^k$, and $\cQ$ that contains all $\bar{Q}_h^k$. The formal definition of these classes will be given in the following.

We also introduce the following technical lemma.

\begin{lemma}[Covering Number of $\ell_2$ Ball]
     For any $\epsilon >0$, the $\epsilon$-covering number of the $\ell_2$ ball in $\RR^d$ with radius $L$ is upper bound by $(1+2L/\epsilon)^d$.
\end{lemma}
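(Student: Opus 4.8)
The plan is to use the classical volumetric packing argument, which converts the covering question into a volume comparison in $\RR^d$. First I would recall the definition: the $\epsilon$-covering number $\cN_\epsilon$ is the smallest cardinality of a set $S=\{x_1,\dots,x_N\}\subseteq B(0,L)$ (the $\ell_2$ ball of radius $L$) such that every point of $B(0,L)$ lies within Euclidean distance $\epsilon$ of some $x_i$. The key device is the duality between covering and packing: I would take a \emph{maximal} $\epsilon$-separated subset $S=\{x_1,\dots,x_N\}$ of $B(0,L)$, meaning $\|x_i-x_j\|_2\geq\epsilon$ for all $i\neq j$ and $S$ cannot be enlarged while preserving this property. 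Maximality immediately forces $S$ to be an $\epsilon$-net: any $x\in B(0,L)$ must lie within distance $\epsilon$ of some $x_i$, since otherwise $S\cup\{x\}$ would still be $\epsilon$-separated, contradicting maximality. Hence $\cN_\epsilon\leq N$, and it suffices to upper bound $N$.

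Next I would bound $N$ by a volume argument. Because the centers are at pairwise distance at least $\epsilon$, the open balls $B(x_i,\epsilon/2)$ are pairwise disjoint. Moreover, each $x_i\in B(0,L)$ implies $B(x_i,\epsilon/2)\subseteq B(0,L+\epsilon/2)$, since any $y\in B(x_i,\epsilon/2)$ satisfies $\|y\|_2\leq\|x_i\|_2+\epsilon/2\leq L+\epsilon/2$. Summing the Lebesgue volumes of the disjoint small balls and using this containment gives
\begin{align*}
    N\cdot \mathrm{vol}\big(B(0,\epsilon/2)\big)\;\leq\;\mathrm{vol}\big(B(0,L+\epsilon/2)\big).
\end{align*}

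Finally, I would invoke the scaling of volume under dilation in $\RR^d$, namely $\mathrm{vol}(B(0,r))=r^d\,\mathrm{vol}(B(0,1))$, so the common factor $\mathrm{vol}(B(0,1))$ cancels and the displayed inequality reduces to $N\,(\epsilon/2)^d\leq (L+\epsilon/2)^d$. Rearranging yields $N\leq\big((L+\epsilon/2)/(\epsilon/2)\big)^d=(1+2L/\epsilon)^d$, and combined with $\cN_\epsilon\leq N$ this is exactly the claimed bound. There is no genuine obstacle here, as the result is classical; the one place demanding care is the covering–packing duality step, where I must use \emph{maximality} of the packing (not mere $\epsilon$-separation) to conclude it is simultaneously an $\epsilon$-cover. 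The remainder is the elementary observation that $d$-dimensional volume scales as the $d$-th power of the radius.
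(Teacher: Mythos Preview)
Your argument is correct and is precisely the standard volumetric (maximal packing) proof. The paper does not actually supply its own proof of this lemma; it simply cites Chapter~5 of Vershynin's notes, where exactly this packing--volume argument appears, so your approach coincides with the intended one.
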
 The proof of this classical result can be found in, for example, Chapter 5 in \citep{vershynin2010introduction}. Now we prove Lemma \ref{lem: covering number of V}.

\subsection{Proof of Lemma \ref{lem: covering number of V}}

\paragraph{Covering of $\hat{\PP}_h \bar{V}_{h+1}^k$.} The next lemma is helpful to bound the covering number of the function class containing the function $\hat{\PP}_h \bar{V}_{h+1}^k$. The proof is the same as that of Lemma D.6. in \citep{jin2020provably}. 

\begin{lemma}\label{lem: covering of regularized linear class}
    Let $\cG = \cG(L,B)$ denote the function class with functions mapping from $\cC \times \Upsilon$ to $\RR$ and of the following parametric form
    \begin{align*}
        g(\cdot,\cdot) = \bpsi(\cdot,\cdot)^\top \wb + \beta \cdot \left\| \bpsi(\cdot,\cdot) \right\|_{\bLambda^{-1}} \,, 
    \end{align*}where $\|\wb\|\leq L$, $\beta \in [0, B]$, and $\lambda_{\min}(\bLambda) \geq \lambda>0$. Assume $\|\bpsi(\cdot,\cdot)\|_2 \leq 1$. Let $\cN_\epsilon$ denote the $\epsilon$-covering number of $\cG$ with respect to the $\ell_\infty$ distance. Then we have
    \begin{align*}
        \log (\cN_\epsilon^{\cG}) \leq d \log(1 + 4L/\epsilon) + d^2 \log\left[ 1+ 8d^{1/2} B^2/(\lambda \epsilon^2) \right]. 
    \end{align*}
\end{lemma}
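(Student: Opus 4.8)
The plan is to run the standard $\epsilon$-net argument for linear-plus-bonus function classes, following the proof of Lemma~D.6 in \citep{jin2020provably}, and the first move is to reparametrize so that the nonlinear dependence on $\bLambda$ becomes a linear dependence on an auxiliary matrix. I would rewrite each $g\in\cG$ as
\begin{align*}
    g(\cdot,\cdot) = \bpsi(\cdot,\cdot)^\top \wb + \sqrt{\bpsi(\cdot,\cdot)^\top A\,\bpsi(\cdot,\cdot)}\,, \qquad A \coloneqq \beta^2 \bLambda^{-1},
\end{align*}
where $A$ is positive semidefinite. Since $\lambda_{\min}(\bLambda)\geq\lambda$ and $\beta\leq B$, we have $\|A\|_2 \leq B^2/\lambda$, and hence $\|A\|_F \leq \sqrt{d}\,B^2/\lambda$. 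Thus every $g\in\cG$ is specified by a pair $(\wb,A)$ in $\{\wb\in\RR^d:\|\wb\|_2\leq L\}\times\{A\succeq 0:\|A\|_F\leq \sqrt d\,B^2/\lambda\}$, and it suffices to cover this parameter set.

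Second, I would establish the key Lipschitz-type bound: for $g_1,g_2\in\cG$ with parameters $(\wb_1,A_1)$ and $(\wb_2,A_2)$,
\begin{align*}
    \sup_{(C,e)\in\cC\times\Upsilon} \bigl| g_1(C,e) - g_2(C,e) \bigr| \leq \|\wb_1 - \wb_2\|_2 + \sqrt{\|A_1 - A_2\|_F}\,.
\end{align*}
This follows from $|\bpsi^\top(\wb_1-\wb_2)|\leq\|\wb_1-\wb_2\|_2$ when $\|\bpsi\|_2\leq1$, the elementary inequality $|\sqrt{x}-\sqrt{y}|\leq\sqrt{|x-y|}$ for $x,y\geq0$ applied with $x=\bpsi^\top A_1\bpsi$ and $y=\bpsi^\top A_2\bpsi$, and the crude bound $|\bpsi^\top(A_1-A_2)\bpsi|\leq\|A_1-A_2\|_2\leq\|A_1-A_2\|_F$ for $\|\bpsi\|_2\leq1$.

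Third, I would take an $(\epsilon/2)$-net $\cC_\wb$ of the $\ell_2$-ball of radius $L$ in $\RR^d$ and an $(\epsilon^2/4)$-net $\cC_A$ of the Frobenius-ball of radius $\sqrt d\,B^2/\lambda$ in $\RR^{d\times d}\cong\RR^{d^2}$, using the preceding covering-number lemma for $\ell_2$-balls. For any $g$ with parameters $(\wb,A)$, choosing the nearest $\wb'\in\cC_\wb$ and $A'\in\cC_A$ and setting $g'(\cdot,\cdot)=\bpsi(\cdot,\cdot)^\top\wb'+\sqrt{\bpsi(\cdot,\cdot)^\top A'\,\bpsi(\cdot,\cdot)}$, the previous step gives $\sup|g-g'|\leq\epsilon/2+\sqrt{\epsilon^2/4}=\epsilon$. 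Therefore
\begin{align*}
    \cN_\epsilon^{\cG} \leq |\cC_\wb|\cdot|\cC_A| \leq \Bigl(1+\frac{4L}{\epsilon}\Bigr)^d\Bigl(1+\frac{8d^{1/2}B^2}{\lambda\epsilon^2}\Bigr)^{d^2},
\end{align*}
and taking logarithms yields the stated bound.

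The only point needing care — more bookkeeping than a genuine obstacle — lies in the second step: one must pass through the square root of the quadratic form correctly and use $\|A_1-A_2\|_2\leq\|A_1-A_2\|_F$, so that covering $A$ at Frobenius granularity $\epsilon^2/4$ yields only an $\epsilon/2$ perturbation of $g$ (hence the squared scale on the $A$-net). One should also note that the covering functions $g'$ need not lie in $\cG$ themselves, since their $A'$ need not be of the form $\beta^2\bLambda^{-1}$; this is harmless because only an external $\epsilon$-net is required, and if one insists, the net $\cC_A$ can be taken inside the (convex) PSD Frobenius-ball so that each $A'$ stays positive semidefinite.
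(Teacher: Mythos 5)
Your proposal is correct and is precisely the argument the paper invokes: the paper gives no proof of its own but states that "the proof is the same as that of Lemma D.6 in Jin et al.\ (2020)," which is exactly the reparametrization $A=\beta^2\bLambda^{-1}$, the Lipschitz bound $\sup|g_1-g_2|\leq\|\wb_1-\wb_2\|_2+\sqrt{\|A_1-A_2\|_F}$, and the product of an $(\epsilon/2)$-net on the $\wb$-ball with an $(\epsilon^2/4)$-net on the Frobenius ball that you carry out. Your constants match the stated bound exactly, and your closing remark about the net being external to $\cG$ is a correct (and standard) observation.
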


Suppose for now that there exists $L_{\wb}>0$ such that $\|\wb_h^k\|_2 \leq L_{\wb}$ for all $(k,h)$. The value of $L_{\wb}$ will be determined later.
By applying Lemma \ref{lem: covering of regularized linear class} with $L = L_{\wb}$ and $B = \beta_V$, we get the following upper bound on the $\epsilon$-covering number of the function class $\cG(l_{\wb}, \beta_V)$ which contains all $\hat{\PP}_h \bar{V}_{h+1}^k$:
\begin{align}\label{eq: covering number of P V}
    \log (\cN_\epsilon^{\cG}) \leq d \log(1 + 4L_{\wb}/\epsilon) + d^2 \log\left[ 1+ 8d^{1/2} \beta_V^2/(\lambda \epsilon^2) \right]. 
\end{align}

\paragraph{Covering of $\bar{r}_h^k$.} We now define a function class $\cR$ which provably contains all the pseudo-reward estimates $\bar{r}_h^k$. Formally speaking, by Algorithm \ref{alg: reward estimation}, the functions in $\cR$ are parametrized by the utility function estimates $u_h^k$ and $v_h^k$. Denote the functions class containing all these utility function estimates by $\cU$. Then according to Algorithm \ref{alg: utility estimation}, any function $u: \cC \times \Upsilon \times \cI \times \cJ \to \RR $ in $\cU$ can be written as
\begin{align}\label{eq: def of utility function class}
    u(C,e,i,j) = \left( \langle \bPhi(C,e,i,j) , \btheta \rangle + \beta_u \sqrt{\bPhi(C,e,i,j)^\top \bSigma^{-1} \bPhi (C,e,i,j)}\right)_{[-1,1]} .  
\end{align}where $\bPhi \in \RR^{d^2}$ satisfying $\| \bPhi\|_1 \leq 1$ by Assumption \ref{assump: model and feature}, $\|\btheta\| \leq L_u$ for some $L_u>0$ to be determined, and $\bSigma \in \RR^d\times\RR^d$ such that $\lambda_{\min}(\bSigma) \geq \lambda$. Since the truncation is a contraction mapping, by Lemma \ref{lem: covering of regularized linear class}, the $\epsilon$-covering number of $\cU$ is upper bounded by 
\begin{align}\label{eq: covering number of utility class original form}
    \log\left( \cN_\epsilon^u\right) & \leq d^2 \log\left( 1+4L_u/\epsilon \right) + d^4 \log \left[ 1 + 8d\beta_u^2 / (\lambda \epsilon^2) \right] .  
\end{align}
We now consider the function class $\cR$. We formally define $\cR$ to be the function class such that any function $r \in \cR$ can be represented by
\begin{align*}
    r(C,e,I,J) = \texttt{RE}(u(C,e,\cdot,\cdot), v(C,e,\cdot,\cdot), I,J)
\end{align*}for some $u,v \in \cU$. 
Let functions $r_1$, $r_2 \in \cR$ be parametrized by $u_1, v_1$ and $u_2, v_2$ respectively, such that  
\begin{align*}
    r_1(C,e,I,J) & = \texttt{RE}(u_1(C,e,\cdot,\cdot), v_1(C,e,\cdot,\cdot), I,J),
    \\ r_2(C,e,I,J) & = \texttt{RE}(u_2(C,e,\cdot,\cdot), v_2(C,e,\cdot,\cdot), I,J) ,
\end{align*} for all $C \in \cC$, $e\in \upsilon$, $I \subset \cI$ and $J \subset \cJ$. According to the linear program \ref{eq: linear program definition}, there exist some weights $w_1 = \{w_{1,i,j}\}_{(i,j)\in I\times J}$ and $w_2 =\{w_{2,i,j}\}_{(i,j)\in I\times J}$, such that
\begin{align*}
    r_1(C,e,I,J) & = \sum_{(i,j) \in I\times J} w_{1,i,j}\left[ u_1(C,e,i,j) + v_1(C,e,i,j)\right] , 
    \\ r_2(C,e,I,J) & = \sum_{(i,j) \in I\times J} w_{2,i,j}\left[ u_2(C,e,i,j) + v_2(C,e,i,j)\right] .
\end{align*} It follows that 
\begin{align*}
    (r_1-r_2)(C,e,I,J)  & = \sum_{(i,j)\in I\times J} w_{1,i,j} \left[ u_1(C,e,i,j) + v_1(C,e,i,j) \right] \notag
    \\ & \qquad \qquad - \sum_{(i,j)\in I\times J} w_{2,i,j} \left[ u_2(C,e,i,j) + v_2(C,e,i,j) \right] 
    \\ & \leq \sum_{(i,j)\in I\times J} w_{2,i,j} \left[ u_1(C,e,i,j) + v_1(C,e,i,j) \right] 
    \\ & \qquad \qquad - \sum_{(i,j)\in I\times J} w_{2,i,j} \left[ u_2(C,e,i,j) + v_2(C,e,i,j) \right]
    \\ & \leq \sum_{(i,j)\in I\times J} w_{2,i,j} \left[ \left( u_1(C,e,i,j)-u_2(C,e,i,j) \right) + \left( v_1(C,e,i,j)-v_2(C,e,i,j)  \right)\right] 
    \\ & \leq \min\{|I|, |J|\} \cdot \left( \|u_1 - u_2 \|_\infty + \| v_1 - v_2\|_\infty \right),
\end{align*}where the second step holds because $w_1$ is the optimal weight given $u_1$ and $v_1$ and $w_2$ satisfies the constraint of the linear program with $u_1$ and $v_1$. The same upper bound holds for the difference $(r_2- r_1)$. Therefore, for any $I,J$, we have 
\begin{align*}
    \left\| r_1(\cdot, \cdot, I,J) - r_2(\cdot,\cdot, I,J)\right\|_\infty \leq \min\{|I|, \ |J|\} \cdot \left( \|u_1-u_2\|_\infty + \|v_1 - v_2\|_\infty \right). 
\end{align*} Since $I \subset \cI$ and $J \subset \cJ$, in order for $\|r_1 - r_2\|_{\infty} \leq \epsilon$ to hold, it suffices to have $\|u_1 - u_2\|_\infty \leq \epsilon'$ and $\|v_1 - v_2\|_\infty \leq \epsilon'$ where $\epsilon' = \epsilon/(2 \min\{|\cI|, |\cJ|\})$.
Therefore, by \eqref{eq: covering number of utility class original form}, the $\epsilon$-covering number of $\cR$ satisfies
\begin{align}\label{eq: covering number of R}
    \log \cN_{\epsilon}^{\cR} \leq 2 \log \cN_{\epsilon'}^u & \leq 2 d^2 \log\left( 1+4L_u/\epsilon' \right) + 2 d^4 \log \left[ 1 + 8d\beta_u^2 / (\lambda {\epsilon'}^2) \right] \notag 
    \\ & \leq 2 d^2 \log\left( 1 + \frac{8L_u \min\{|\cI|, |\cJ|\} }{\epsilon} \right) + 2d^4 \log \left[ 1 + \frac{32 d \beta_u^2 \left(\min\{|\cI|, |\cJ|\}\right)^2 }{\lambda \epsilon^2} \right] .
\end{align}

In the above we have shown that the function class $\cR$ contains all $\bar{r}_h^k$ and $\cG$ contains all $\hat{\PP}_h \bar{V}_{h+1}^k$. 
We now define the function class $\cQ \coloneqq \cR + \cG$ as
\begin{align*}
    \cQ \coloneqq \left\{(r+g)_{[0, \sum_{l=h}^H W_l]} \middle| \  r \in \cR, \ g \in \cG\right\}. 
\end{align*}
Then it immediately follows from the algorithm that $\cQ$ contains all $\bar{Q}_h^k$ functions. 
By \eqref{eq: covering number of P V} and \eqref{eq: covering number of R}, the $\epsilon$-covering number of the function class $\cQ$ can be upper bounded by 
\begin{align}\label{eq: covering Q}
    \log \cN_{\epsilon}^{\cQ} & \leq d \log(1 + 4L_{\wb}/\epsilon) + d^2 \log\left[ 1+ 8d^{1/2} \beta_V^2/(\lambda \epsilon^2) \right] \notag
    \\ 
    & \qquad + 2 d^2 \log\left( 1 + \frac{8L_u \min\{|\cI|, |\cJ|\} }{\epsilon} \right) \notag
    \\ 
    & \qquad + 2d^4 \log \left[ 1 + \frac{32 d \beta_u^2 \left(\min\{|\cI|, |\cJ|\}\right)^2 }{\lambda \epsilon^2} \right] ,
\end{align}
Since by construction, $\bar{V}_h^k(C,I,J) = \max_{e} \bar{Q}_h^k(C,e,I,J)$ and taking the maximum is a contraction mapping, the upper bound in \eqref{eq: covering Q} also holds for $\log \cN_{\epsilon}^{\cV}$. 

By Lemma \ref{lem: bound of regression estimator of PV}, we can pick 
\begin{align*}
    L_{\wb} = \left(\sum_{h=1}^{H} W_h\right) \cdot \sqrt{dK/\lambda}\qquad \text{and} \qquad L_u = \sqrt{\frac{d^2 K \cdot \min\{|\cI|, |\cJ| \}}{\lambda}}.
\end{align*}

From the above analysis, we can simplify the R.H.S. of \eqref{eq: covering Q} and get the desired bound for the covering number of $\cV$. This finishes the proof of Lemma \ref{lem: covering number of V}.

\section{Auxiliary Lemmas}

\begin{lemma}[Lemma D.1 in \citealt{jin2020provably}]\label{lem: matrix inequality gram sum}
    For arbitrary $d$, let $\bLambda_k = \lambda \bI_d + \sum_{t=1}^{k-1} \xb_t \xb_t^\top $ where $\xb_t \in \RR^d$ and $\lambda>0$. Then 
    \begin{align*}
        \sum_{t=1}^{k-1} \xb_t^\top \left( \bLambda_k\right)^{-1} \xb_t \leq d.
    \end{align*}
\end{lemma}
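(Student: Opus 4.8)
The plan is to reduce the sum of quadratic forms to a single trace and then use positivity. First I would use the cyclic property of the trace to write, for each $t$,
\[
    \xb_t^\top (\bLambda_k)^{-1} \xb_t = \mathrm{tr}\!\big( (\bLambda_k)^{-1} \xb_t \xb_t^\top \big),
\]
which is valid because $\lambda>0$ guarantees $\bLambda_k \succ 0$ is invertible. Summing over $t$ and pulling the finite sum inside the trace gives
\[
    \sum_{t=1}^{k-1} \xb_t^\top (\bLambda_k)^{-1} \xb_t = \mathrm{tr}\!\Big( (\bLambda_k)^{-1} \sum_{t=1}^{k-1} \xb_t \xb_t^\top \Big).
\]

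Next I would substitute the defining identity $\sum_{t=1}^{k-1} \xb_t \xb_t^\top = \bLambda_k - \lambda \bI_d$ into the trace, obtaining
\[
    \mathrm{tr}\!\big( (\bLambda_k)^{-1} (\bLambda_k - \lambda \bI_d) \big) = \mathrm{tr}(\bI_d) - \lambda\, \mathrm{tr}\!\big( (\bLambda_k)^{-1} \big) = d - \lambda\, \mathrm{tr}\!\big( (\bLambda_k)^{-1} \big).
\]
Finally, since $\bLambda_k \succeq \lambda \bI_d \succ 0$, its inverse $(\bLambda_k)^{-1}$ is positive definite, so $\mathrm{tr}\!\big( (\bLambda_k)^{-1} \big) > 0$, and therefore the displayed quantity is at most $d$, as claimed.

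There is no real obstacle in this argument; the only point requiring minor care is noting that $\lambda > 0$ ensures $\bLambda_k$ is positive definite (hence invertible, with positive-definite inverse), which is exactly what makes both the trace manipulation and the final positivity step legitimate. The whole proof is a three-line computation, and I would present it essentially as above.
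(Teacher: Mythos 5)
Your proof is correct and follows essentially the same route as the paper: both reduce the sum to $\mathrm{tr}\big((\bLambda_k)^{-1}\sum_{t=1}^{k-1}\xb_t\xb_t^\top\big)$ and then bound that trace by $d$. The only difference is cosmetic — the paper diagonalizes $\sum_t \xb_t\xb_t^\top$ and bounds $\sum_j \lambda_j/(\lambda_j+\lambda)\le d$, whereas you substitute $\bLambda_k-\lambda\bI_d$ directly and drop the nonnegative term $\lambda\,\mathrm{tr}\big((\bLambda_k)^{-1}\big)$, which is the same computation without the eigendecomposition.
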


\begin{proof}[Proof of Lemma \ref{lem: matrix inequality gram sum}]
    We can write
    \begin{align*}
        \sum_{t=1}^{k-1} \xb_t^\top \left( \bLambda_k\right)^{-1} \xb_t & = \sum_{t=1}^{k-1} \textnormal{tr}\left(\xb_t^\top \left( \bLambda_k\right)^{-1} \xb_t\right) = \textnormal{tr}\Big( \left( \bLambda_k\right)^{-1} \sum_{t=1}^{k-1} \xb_t^\top \xb_t\Big).
    \end{align*}Denote the eigenvalue of $\sum_{t = 1}^{k-1} \xb_t^{\top} \xb_t$ as $\{\lambda_1,\cdots, \lambda_d\}$, and decompose $\sum_{t=1}^{k-1} \xb_t^\top \xb = \bU \textnormal{diag}(\lambda_1,\cdots, \lambda_d) \bU^\top$. Then we have $\bLambda_k = \bU \textnormal{diag}(\lambda_1+\lambda,\cdots, \lambda_d+\lambda) \bU^\top$. It follows that $\textnormal{tr}( ( \bLambda_k)^{-1} \sum_{t=1}^{k-1} \xb_t^\top \xb_t) = \sum_{j=1}^d \lambda_j/(\lambda_j + \lambda) \leq d$. 
\end{proof}

The next is the well-known Elliptical Potential Lemma \citep{cesa2006prediction, abbasi2011improved, lattimore2020bandit}.

\begin{lemma}[Elliptical Potential Lemma]\label{lem: elliptical potential}
    For arbitrary $d$, let $\bLambda_k = \lambda \bI_d + \sum_{t=1}^{k-1} \xb_t \xb_t^\top $ where $\xb_t \in \RR^d$ and $\lambda>0$. Then 
    \begin{align*}
        \sum_{t=1}^{k} \left\|\xb_t \right\|_{(\bLambda_{t+1})^{-1}} \leq \sqrt{ kd \log\left( \frac{k + d\lambda}{d\lambda}\right) }.
    \end{align*}
\end{lemma}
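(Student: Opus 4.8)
The plan is to prove this classical inequality by the standard determinant-telescoping argument, working throughout under the normalization $\|\xb_t\|_2 \leq 1$ that holds in every application of the lemma in this paper (without some such bound the stated estimate is false, since the trace of $\bLambda_{k+1}$ must be controlled). First I would reduce the sum of norms to a sum of \emph{squared} norms: by the Cauchy--Schwarz inequality,
\begin{align*}
    \sum_{t=1}^{k} \left\| \xb_t \right\|_{(\bLambda_{t+1})^{-1}} \leq \sqrt{\, k \sum_{t=1}^{k} \left\| \xb_t \right\|_{(\bLambda_{t+1})^{-1}}^2 \,},
\end{align*}
so that it suffices to prove $\sum_{t=1}^{k} \| \xb_t \|_{(\bLambda_{t+1})^{-1}}^2 \leq d \log\big( (k + d\lambda)/(d\lambda) \big)$.

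The heart of the argument is to exploit the rank-one update $\bLambda_{t+1} = \bLambda_t + \xb_t \xb_t^\top$. By the Sherman--Morrison formula one has $\| \xb_t \|_{(\bLambda_{t+1})^{-1}}^2 = \| \xb_t \|_{(\bLambda_t)^{-1}}^2 / \big( 1 + \| \xb_t \|_{(\bLambda_t)^{-1}}^2 \big)$, and by the matrix determinant lemma $\det \bLambda_{t+1} = \big( 1 + \| \xb_t \|_{(\bLambda_t)^{-1}}^2 \big)\, \det \bLambda_t$. Invoking the elementary inequality $y/(1+y) \leq \log(1+y)$, valid for all $y \geq 0$, with $y = \| \xb_t \|_{(\bLambda_t)^{-1}}^2$, these two identities combine into a telescoping bound:
\begin{align*}
    \sum_{t=1}^{k} \left\| \xb_t \right\|_{(\bLambda_{t+1})^{-1}}^2 \leq \sum_{t=1}^{k} \log\!\left( \frac{\det \bLambda_{t+1}}{\det \bLambda_t} \right) = \log\!\left( \frac{\det \bLambda_{k+1}}{\det \bLambda_1} \right).
\end{align*}

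Finally I would control the log-determinant. Since $\bLambda_1 = \lambda \bI_d$ we have $\det \bLambda_1 = \lambda^d$, while by the AM--GM inequality applied to the eigenvalues of $\bLambda_{k+1}$ together with $\|\xb_t\|_2 \leq 1$,
\begin{align*}
    \det \bLambda_{k+1} \leq \left( \frac{\textnormal{tr}\,\bLambda_{k+1}}{d} \right)^{d} = \left( \frac{d\lambda + \sum_{t=1}^{k} \|\xb_t\|_2^2}{d} \right)^{d} \leq \left( \frac{d\lambda + k}{d} \right)^{d}.
\end{align*}
Hence $\log\big( \det \bLambda_{k+1} / \det \bLambda_1 \big) \leq d \log\big( (k + d\lambda)/(d\lambda) \big)$, which substituted back into the Cauchy--Schwarz estimate would yield the claim. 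The only points that require any attention are the passage from $\bLambda_{t+1}$ to $\bLambda_t$ inside the Mahalanobis norm (cleanly resolved by Sherman--Morrison) and keeping the implicit normalization $\|\xb_t\|_2 \leq 1$ in view when bounding the trace; there is no deeper obstacle here, as this is a well-known lemma (cf.\ \citealt{abbasi2011improved, lattimore2020bandit}).
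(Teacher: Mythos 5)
Your proof is correct and is precisely the standard determinant-telescoping argument (Cauchy--Schwarz, Sherman--Morrison, matrix determinant lemma, AM--GM on the trace) used in the references the paper cites; the paper itself states this lemma without giving a proof. You are also right to flag that the bound requires the implicit normalization $\|\xb_t\|_2 \leq 1$, which the paper omits from the statement of the lemma but which holds in every application of it.
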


\begin{lemma}[Azuma-Hoeffding inequality \citep{azuma1967weighted}]\label{lem: azuma hoeffding vanilla} 
Let $\{X_t\}_{t=0}^\infty$ be a real-valued martingale such that for every $t \geq 1$, it holds that $|X_t - X_{t-1}| \leq B_t$ for some $B_t \geq 0$. Then 
\begin{align*}
    \PP\left( \left| X_t - X_0\right| \geq \epsilon \right) \leq 2 \exp\left( \frac{- \epsilon^2}{2 \sum_{\tau = 1}^t B_{\tau}^2} \right) .
\end{align*}
\end{lemma}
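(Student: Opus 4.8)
The plan is to prove the inequality by the standard exponential-moment (Chernoff) method applied to the martingale difference sequence $Y_\tau := X_\tau - X_{\tau-1}$. Let $\cF_\tau$ denote the natural filtration of $\{X_t\}$, so that $X_t - X_0 = \sum_{\tau=1}^t Y_\tau$, $\EE[Y_\tau \mid \cF_{\tau-1}] = 0$ by the martingale property, and $|Y_\tau| \leq B_\tau$ by hypothesis. The one substantive ingredient is the conditional moment-generating-function bound (Hoeffding's lemma): for every $s \in \RR$ and every $\tau$,
\[
  \EE\bigl[ e^{s Y_\tau} \,\big|\, \cF_{\tau-1} \bigr] \;\leq\; \exp\Bigl( \tfrac{1}{2} s^2 B_\tau^2 \Bigr).
\]
To establish this I would use convexity of $x \mapsto e^{sx}$ on the interval $[-B_\tau, B_\tau]$: writing $Y_\tau = \tfrac{B_\tau + Y_\tau}{2B_\tau}\cdot B_\tau + \tfrac{B_\tau - Y_\tau}{2B_\tau}\cdot(-B_\tau)$ yields $e^{sY_\tau} \leq \tfrac{B_\tau + Y_\tau}{2B_\tau}\, e^{sB_\tau} + \tfrac{B_\tau - Y_\tau}{2B_\tau}\, e^{-sB_\tau}$; taking $\EE[\cdot \mid \cF_{\tau-1}]$ and invoking $\EE[Y_\tau \mid \cF_{\tau-1}] = 0$ leaves $\cosh(sB_\tau)$, which is at most $e^{s^2 B_\tau^2/2}$ by the elementary bound $\log\cosh u \leq u^2/2$ (the left side vanishes to second order at $u=0$ and has second derivative $1 - \tanh^2 u \leq 1$).

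Next I would iterate the MGF bound over $\tau = t, t-1, \dots, 1$ using the tower property. Since $\sum_{\tau=1}^{t-1} Y_\tau$ is $\cF_{t-1}$-measurable,
\[
  \EE\Bigl[ e^{s\sum_{\tau=1}^t Y_\tau} \Bigr]
  = \EE\Bigl[ e^{s\sum_{\tau=1}^{t-1}Y_\tau}\, \EE\bigl[ e^{sY_t}\mid \cF_{t-1} \bigr] \Bigr]
  \leq e^{s^2 B_t^2/2}\, \EE\Bigl[ e^{s\sum_{\tau=1}^{t-1}Y_\tau} \Bigr],
\]
and induction gives $\EE[ e^{s(X_t - X_0)} ] \leq \exp( \tfrac{1}{2} s^2 \sum_{\tau=1}^t B_\tau^2 )$. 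Markov's inequality then yields, for any $s > 0$, $\PP(X_t - X_0 \geq \epsilon) \leq \exp( -s\epsilon + \tfrac{1}{2}s^2 \sum_{\tau=1}^t B_\tau^2 )$; choosing $s = \epsilon/\sum_{\tau=1}^t B_\tau^2$ to minimize the exponent gives $\PP(X_t - X_0 \geq \epsilon) \leq \exp( -\epsilon^2/(2\sum_{\tau=1}^t B_\tau^2) )$. Applying the identical argument to the martingale $\{-X_t\}$ bounds $\PP(X_t - X_0 \leq -\epsilon)$ by the same quantity, and a union bound over the two tail events produces the two-sided bound with the factor $2$.

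The only genuinely delicate step is Hoeffding's lemma — pinning down the sharp sub-Gaussian constant in the conditional MGF bound; the remainder (the tower-property recursion, Markov's inequality, the one-dimensional optimization over $s$, and the union bound) is entirely routine. If one prefers to track a general bounding interval $[a,b]$ for $Y_\tau$ instead of the symmetric one, the same convexity computation gives $\EE[e^{sY_\tau}\mid\cF_{\tau-1}] \leq e^{s^2(b-a)^2/8}$ via the classical estimate $\phi(u) := -\theta u + \log(1-\theta+\theta e^u) \leq u^2/8$ with $\theta = -a/(b-a)$; specializing to $[a,b] = [-B_\tau, B_\tau]$ recovers the constant $B_\tau^2/2$ used above, confirming consistency.
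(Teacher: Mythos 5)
Your proof is correct: the conditional Hoeffding lemma via convexity and $\cosh(u)\le e^{u^2/2}$, the tower-property recursion on the moment generating function, the Chernoff optimization $s=\epsilon/\sum_{\tau=1}^t B_\tau^2$, and the union bound over the two tails together give exactly the stated two-sided inequality. The paper does not prove this lemma---it is a classical result cited directly from \citet{azuma1967weighted}---and your argument is the standard proof underlying that reference, so there is nothing to compare beyond noting that you have correctly reconstructed it.
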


\subsection{Concentration Inequalities for Self-normalized Martingales}
\begin{theorem}[Hoeffding inequality for Self-normalized martingales \citep{abbasi2011improved}]\label{thm: hoeffding self normalized}
    Let $\{\eta_t\}_{t=1}^\infty$ be a real-valued stochastic process. Let $\{\cF_t \}_{t=0}^\infty$ be a filtration, such that $\eta_t$ is $\cF_t$-measurable. Assume $\eta_t \mid \cF_{t-1}$ is zero-mean and $R$-subgaussian for some $R > 0$, i.e.,
\begin{align*}
    \forall \lambda \in \RR, \quad \EE\left[ e^{\lambda \eta_t \mid \cF_{t-1}} \right] \leq e^{\lambda^2 R^2 /2}.
\end{align*}
Let $\{\xb_t\}_{t=1}^\infty$ be an $\RR^d$-valued stochastic process where $\xb_t$ is $\cF_{t-1}$-measurable. 
Assume $\bLambda_0$ is a $d\times d$ positive definite matrix, and let $\bLambda_t = \bLambda_0 + \sum_{s=1}^t \xb_s \xb_s^\top$. 
Then, for any $\delta >0$, with probability at least $1-\delta$, for all $t > 0$, 
\begin{align*}
     \left\|\sum_{s=1}^t \xb_s \eta_s \right\|^2_{\bLambda_t^{-1}} \leq 2 R^2 \log \left( \frac{ \det(\bLambda_t)^{1/2} \det(\bLambda_0)^{-1/2} }{\delta}\right). 
\end{align*}
\end{theorem}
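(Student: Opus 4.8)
The plan is to establish Theorem~\ref{thm: hoeffding self normalized} by the classical \emph{method of mixtures} (pseudo-maximization) argument. Abbreviate $S_t \coloneqq \sum_{s=1}^{t}\xb_s\eta_s$ and $V_t \coloneqq \sum_{s=1}^{t}\xb_s\xb_s^\top$, so that $\bLambda_t = \bLambda_0 + V_t$. First I would fix a deterministic $\lambda\in\RR^d$ and introduce the process
\begin{align*}
    M_t^\lambda \coloneqq \exp\Big( \frac{\langle\lambda, S_t\rangle}{R} - \frac{1}{2}\langle\lambda, V_t\lambda\rangle \Big), \qquad M_0^\lambda = 1 .
\end{align*}
Since $\xb_t$ is $\cF_{t-1}$-measurable and $\eta_t \mid \cF_{t-1}$ is $R$-subgaussian, applying the subgaussian bound with scalar parameter $\langle\lambda,\xb_t\rangle/R$ gives $\EE[\exp(\langle\lambda,\xb_t\rangle\eta_t/R - \tfrac12\langle\lambda,\xb_t\rangle^2)\mid\cF_{t-1}]\le 1$; since $M_{t-1}^\lambda$ is $\cF_{t-1}$-measurable, it follows that $\EE[M_t^\lambda\mid\cF_{t-1}]\le M_{t-1}^\lambda$, so $(M_t^\lambda)_{t\ge 0}$ is a nonnegative supermartingale with $\EE[M_0^\lambda]=1$.

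Next I would mix over $\lambda$: let $h$ be the density of a centered Gaussian on $\RR^d$ with covariance $\bLambda_0^{-1}$, and set $\bar M_t \coloneqq \int_{\RR^d} M_t^\lambda\, h(\lambda)\,\diff\lambda$. By Tonelli's theorem (valid because $M_t^\lambda\ge 0$), integration over $\lambda$ commutes with the conditional expectation, so $\bar M_t$ is again a nonnegative supermartingale with $\EE[\bar M_0]=1$. Completing the square in the combined exponent $\tfrac{1}{R}\langle\lambda,S_t\rangle - \tfrac12\langle\lambda,(V_t+\bLambda_0)\lambda\rangle = \tfrac1R\langle\lambda,S_t\rangle - \tfrac12\langle\lambda,\bLambda_t\lambda\rangle$ and evaluating the resulting Gaussian integral yields the closed form
\begin{align*}
    \bar M_t = \Big( \frac{\det\bLambda_0}{\det\bLambda_t} \Big)^{1/2} \exp\Big( \frac{1}{2R^2}\, \big\|S_t\big\|_{\bLambda_t^{-1}}^2 \Big).
\end{align*}

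Finally I would invoke the maximal inequality for nonnegative supermartingales (Ville's inequality; equivalently an optional-stopping argument applied to $\bar M_t$ together with its almost-sure convergence): $\PP\big(\sup_{t\ge 0}\bar M_t \ge 1/\delta\big) \le \delta\,\EE[\bar M_0] = \delta$. On the complementary event, $\bar M_t < 1/\delta$ for every $t$; substituting the closed form above, taking logarithms and rearranging produces exactly $\|S_t\|_{\bLambda_t^{-1}}^2 \le 2R^2\log\big(\det(\bLambda_t)^{1/2}\det(\bLambda_0)^{-1/2}/\delta\big)$, which is the assertion. The two steps I expect to need the most care are (i) the interchange of the $\lambda$-integral with the conditional expectation, which is precisely where nonnegativity of $M_t^\lambda$ and Tonelli enter (one should also check that $\bar M_t<\infty$ a.s.), and (ii) upgrading a fixed-$t$ bound to one holding simultaneously for all $t$, which is the sole reason the supermartingale maximal inequality is used; the rest is routine Gaussian algebra.
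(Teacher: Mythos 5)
Your proof is correct and is precisely the method-of-mixtures argument from \citet{abbasi2011improved}, which the paper cites for this theorem without reproducing a proof: the exponential supermartingale, the Gaussian mixture with covariance $\bLambda_0^{-1}$, the closed form for $\bar M_t$, and Ville's maximal inequality to get the bound uniformly over $t$ are all exactly as in the source. No gaps.
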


\begin{lemma}[Lemma D.4 in \citealt{jin2020provably}]\label{lem: self normalized for V function class raw}
    Let $\cV$ be a function class such that any $V\in \cV$ maps from $\cS \to \RR$ and $\|V\|_{\infty} \leq R$.
    Let $\{\cF_t\}_{t=0}^\infty$ be a filtration. Let $\{s_t\}_{t=1}^\infty$ be a stochastic process in the space $\cS$ such that $s_t$ is $\cF_{t}$-measurable. Let $\{\xb\}_{t=0}^\infty$ be an $\RR^d$-valued stochastic process such that $\xb_t$ is $\cF_{t-1}$-measurable and $\|\xb\|_2 \leq 1$. Let $\bLambda_k = \lambda \bI + \sum_{t=1}^{k-1} \xb_t \xb_t^\top$. Then for any $\delta>0$, with probability at least $1-\delta$, for any $k$, and any $V \in \cV$, we have 
    \begin{align*}
        \left\| \sum_{t=1}^{k-1} \xb_t \left[ V(s_t) - \EE\left[V(s_t) \mid \cF_{t-1} \right]\right] \right\|^2_{(\bLambda_k)^{-1}} & \leq 4 R^2 \left[ \frac{d}{2}\log\left(\frac{k+\lambda}{\lambda} \right) + \log \frac{\cN^{\cV}_\epsilon}{\delta} \right] + \frac{8 k^2 \epsilon^2}{\lambda},
    \end{align*}where $\cN^{\cV}_\epsilon$ is the $\epsilon$-covering number of $\cV$ with respect to the $\ell_\infty$ distance.
\end{lemma}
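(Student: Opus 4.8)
The plan is to prove this by reducing to a fixed-function self-normalized martingale bound and then paying a covering-number price to make it uniform over $\cV$. The reason a naive application of Theorem~\ref{thm: hoeffding self normalized} fails is precisely that in the intended use the function $V$ (namely the data-dependent estimate $\bar V_{h+1}^k$) is correlated with the process $\{(s_t,\xb_t)\}$, so no fixed martingale can be isolated; discretizing $\cV$ decouples this dependence at the cost of a $\log\cN_\epsilon^{\cV}$ term plus a controllable discretization error.

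First I would fix an arbitrary \emph{deterministic} $V\in\cV$ and set $\eta_t\coloneqq V(s_t)-\EE[V(s_t)\mid\cF_{t-1}]$. Since $s_t$ is $\cF_t$-measurable and the conditional expectation is $\cF_{t-1}$-measurable, $\eta_t$ is $\cF_t$-measurable with $\EE[\eta_t\mid\cF_{t-1}]=0$; moreover, conditionally on $\cF_{t-1}$ the variable $V(s_t)$ lies in an interval of length at most $2R$, so Hoeffding's lemma gives that $\eta_t\mid\cF_{t-1}$ is $R$-subgaussian. Applying Theorem~\ref{thm: hoeffding self normalized} to the martingale difference $\{\eta_t\}$ against $\{\xb_t\}$ with $\bLambda_0=\lambda\bI$, and bounding the determinant ratio by $\det(\bLambda_k)^{1/2}\det(\lambda\bI)^{-1/2}\le(1+(k-1)/(d\lambda))^{d/2}\le((k+\lambda)/\lambda)^{d/2}$ via the trace bound $\mathrm{tr}(\bLambda_k)\le d\lambda+(k-1)$ and AM--GM, yields: for any fixed $V$ and any $\delta'>0$, with probability at least $1-\delta'$, simultaneously for all $k$,
\[
\Big\|\sum\nolimits_{t=1}^{k-1}\xb_t\eta_t\Big\|_{(\bLambda_k)^{-1}}^2\le 2R^2\Big[\tfrac{d}{2}\log\tfrac{k+\lambda}{\lambda}+\log\tfrac{1}{\delta'}\Big].
\]

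Next I would take a minimal $\epsilon$-cover $\cV_\epsilon\subseteq\cV$ in the $\ell_\infty$ norm, of cardinality $\cN_\epsilon^{\cV}$, apply the above display to every $\bar V\in\cV_\epsilon$ with $\delta'=\delta/\cN_\epsilon^{\cV}$, and union bound, so that with probability at least $1-\delta$ the estimate holds for all $\bar V\in\cV_\epsilon$ and all $k$. For arbitrary $V\in\cV$, choose $\bar V\in\cV_\epsilon$ with $\|V-\bar V\|_\infty\le\epsilon$ and write $\sum_t\xb_t(V(s_t)-\EE[V(s_t)\mid\cF_{t-1}])=\sum_t\xb_t(\bar V(s_t)-\EE[\bar V(s_t)\mid\cF_{t-1}])+\sum_t\xb_t(\Delta(s_t)-\EE[\Delta(s_t)\mid\cF_{t-1}])$ with $\Delta\coloneqq V-\bar V$. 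Using $\|a+b\|_{(\bLambda_k)^{-1}}^2\le 2\|a\|_{(\bLambda_k)^{-1}}^2+2\|b\|_{(\bLambda_k)^{-1}}^2$, the first term is controlled by the union-bound estimate, and for the second, since $|\Delta(s_t)-\EE[\Delta(s_t)\mid\cF_{t-1}]|\le 2\epsilon$ and $\|\xb_t\|_{(\bLambda_k)^{-1}}\le\|\xb_t\|_2/\sqrt{\lambda}\le 1/\sqrt{\lambda}$, its $(\bLambda_k)^{-1}$-norm is at most $2k\epsilon/\sqrt{\lambda}$, hence its square at most $4k^2\epsilon^2/\lambda$. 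Adding the two bounds (each with the factor $2$) gives exactly $4R^2[\tfrac{d}{2}\log\tfrac{k+\lambda}{\lambda}+\log\tfrac{\cN_\epsilon^{\cV}}{\delta}]+\tfrac{8k^2\epsilon^2}{\lambda}$.

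The only genuinely delicate points I anticipate are: (i) choosing the discretization metric correctly --- an $\ell_\infty$ cover of $\cV$ --- so that the perturbation term is bounded uniformly no matter where the (data-dependent) states $s_t$ fall; and (ii) tracking constants so that the subgaussian parameter entering Theorem~\ref{thm: hoeffding self normalized} is $R$ rather than $2R$, which together with the $2\|a\|^2+2\|b\|^2$ split produces the sharp $4R^2$ prefactor. The remaining ingredients --- the trace/AM--GM determinant bound and the elementary two-term inequality --- are routine.
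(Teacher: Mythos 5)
Your proof is correct and follows essentially the same route as the paper's: split $V$ into a cover element plus an $\ell_\infty$-small perturbation, apply Theorem~\ref{thm: hoeffding self normalized} with a union bound over the $\epsilon$-cover to the first piece, and bound the perturbation piece by $8k^2\epsilon^2/\lambda$ via $\|\xb_t\|_2\leq 1$ and $\lambda_{\min}(\bLambda_k)\geq\lambda$. Your write-up is in fact more detailed than the paper's (the $R$-subgaussianity via Hoeffding's lemma and the trace/AM--GM determinant bound are left implicit there), and all constants check out.
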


\begin{proof}[Proof of Lemma \ref{lem: self normalized for V function class raw}]
    For any $V \in \cV$, there exists $V'$ in the $\epsilon$-covering such that 
    \begin{align*}
        V = V' + \Delta_V \quad \textnormal{and} \quad \left\| \Delta_V \right\|_{\infty} \leq \epsilon.
    \end{align*}
    Then we have 
    \begin{align*}
        \left\| \sum_{t=1}^{k-1} \xb_t \left[ V(s_t) - \EE\left[V(s_t) \mid \cF_{t-1} \right]\right] \right\|^2_{(\bLambda_k)^{-1}} & \leq 2 \left\| \sum_{t=1}^{k-1} \xb_t \left[ V'(s_t) - \EE\left[V'(s_t) \mid \cF_{t-1} \right]\right] \right\|^2_{(\bLambda_k)^{-1}} 
        \\ 
        & \qquad + 2 \left\| \sum_{t=1}^{k-1} \xb_t \left[ \Delta_V(s_t) - \EE\left[\Delta_V(s_t) \mid \cF_{t-1} \right]\right] \right\|^2_{(\bLambda_k)^{-1}} . 
    \end{align*} For the first term on the R.H.S., we apply Theorem \ref{thm: hoeffding self normalized} and a union bound to the $\epsilon$-covering. The second term can be bound by $8k^2 \epsilon^2/\lambda$ by using $\|\xb_t\|_2\leq 1$, $\lambda_{\min}(\bLambda_k)\geq \lambda$ and $\|\Delta_V\|_\infty \leq \epsilon$. 
    
\end{proof}

\end{document}